\def\G{{\mathcal G}}
\def\eqdef{\stackrel{\text{def}}{=}}
\DeclareMathOperator*{\argmax}{arg\,max}
\newcommand{\Ckapi}{C_{\kappa\rm{-API}}}
\newtheorem{proposition}{Proposition}
\newtheorem{lemma}[proposition]{Lemma}
\newtheorem{theorem}[proposition]{Theorem}
\newtheorem{defn}{Definition}
\newtheorem{remark}{Remark}
\title{Multiple-Step Greedy Policies in Online and Approximate Reinforcement Learning}
\author{
  Yonathan Efroni\thanks{Department of Electrical Engineering, Technion, Israel Institute of Technology}\\
  \texttt{jonathan.efroni@gmail.com} \\
  \And
  Gal Dalal\footnotemark[1]\\
  \texttt{gald@campus.technion.ac.il} \\
  \And
  Bruno Scherrer\thanks{INRIA, Villers les Nancy, France}\\
  \texttt{bruno.scherrer@inria.fr} \\  
  \And
  Shie Mannor\footnotemark[1] \\
  \texttt{shie@ee.technion.ac.il} \\
  %% examples of more authors
%  \And
%  David S.~Hippocampus \\
%  Department of Computer Science\\
%  Cranberry-Lemon University\\
%  Pittsburgh, PA 15213 \\
%  \texttt{hippo@cs.cranberry-lemon.edu} 
}
\begin{document}
% \nipsfinalcopy is no longer used

\maketitle

\begin{abstract}
  Multiple-step lookahead policies have demonstrated high empirical competence in Reinforcement Learning, via the use of Monte Carlo Tree Search or Model Predictive Control. In a recent work \cite{efroni2018beyond}, multiple-step greedy policies and their use in vanilla Policy Iteration algorithms were proposed and analyzed. In this work, we study multiple-step greedy algorithms in more practical setups. We begin by highlighting a counter-intuitive difficulty, arising with soft-policy updates: even in the absence of approximations, and contrary to the 1-step-greedy case, monotonic policy improvement is not guaranteed unless the update stepsize is sufficiently large. Taking particular care about this difficulty, we formulate and analyze online and approximate algorithms that use such a multi-step greedy operator. 
  
%    Multiple-step lookahead policies have demonstrated high empirical competence in Reinforcement Learning, via the use of Monte Carlo Tree Search or Model Predictive Control. In a recent work \cite{efroni2018beyond}, multiple-step greedy policies and their use in vanilla Policy Iteration algorithms were proposed and analyzed. In this work, we study multiple-step greedy algorithms in more practical setups. We begin by highlighting a counter-intuitive difficulty: even in the absence of approximations, and contrary to the 1-step-greedy case, monotonic policy improvement is not guaranteed in general unless the soft policy update stepsize is sufficiently large. Taking particular care about this difficulty, we formulate and analyze online and approximate algorithms that use such a multi-step greedy operator. 
\end{abstract}
%TODO: "via usage of Monte Carlo Tree Search and Model Predictive Control", makes it sound like we're directly tackling it. We don't want to convey the impression that this is at the heart of the paper. Perhaps change to, "such as MCSTS" and focus on MCTS only if that's (almost) what we solely tackle

%%%%%%%%%%%%%%%%%%%%%%%Storyline%%%%%%%%%%%%%%%%%%%%%%%
%The Multistep greedy policy was empirically proved to be very successful. In recent work a PI algorithms with a multistep greedy improvement step was suggested. The goal of this work is to suggest and analyze online and approximate PI algorithms that use multistep greedy policy. The 1-step greedy policy has a property which is at the heart of online versions of PI and some of the approximate PI algorithms - a monotonic improvement is assured when using soft updates. We here show that this property does not hold for the multistep greedy policy. The consequences of this guides us in the algorithms we give and analyze in this work.
\section{Introduction}
The use of the 1-step policy improvement in Reinforcement Learning (RL)  was theoretically investigated under several frameworks, e.g., Policy Iteration (PI) \cite{puterman1994markov}, approximate PI \cite{bertsekas1995neuro, kakade:02, munos2003error}, and Actor-Critic \cite{konda1999actor}; its practical uses are abundant \cite{schulman2015trust,mnih2016asynchronous,silver2017mastering}. However, single-step based improvement is not necessarily the optimal choice. It was, in fact, empirically demonstrated that multiple-step greedy policies can perform conspicuously better. Notable examples arise from the integration of RL and Monte Carlo Tree Search \cite{browne2012survey,tesauro1997line,sheppard2002world,bouzy2004monte,silver2017mastering,silver2017mastering2} 
or Model Predictive Control  \cite{negenborn2005learning,ernst2009reinforcement,tamar2017learning}.

Recent work \cite{efroni2018beyond} provided guarantees on the performance of the multiple-step greedy policy and generalizations of it in PI. Here, we establish it in the two practical contexts of online and approximate PI. With this objective in mind, we begin by highlighting a specific difficulty: \emph{softly updating} a policy with respect to (w.r.t.) a multiple-step greedy policy does not necessarily result in improvement of the policy (Section \ref{sec:monotonicity_and_soft}). We find this property intriguing since monotonic improvement is guaranteed in the case of soft updates w.r.t. the 1-step greedy policy, and is central to the analysis of many RL algorithms \cite{konda1999actor,kakade:02,schulman2015trust}.
We thus engineer several algorithms to circumvent this difficulty and provide some non-trivial performance guarantees, that support the interest of using multi-step greedy operators.
%the improvement of the softly updated multiple-step policy.
%We further stress that our algorithms, unlike MCTS and MPC, do not require access to the exact model of the environment (in the form of probability laws of the transitions and reward). 
These algorithms assume access to a generative model (Section~\ref{sec:online_kappa_PI}) or to an approximate multiple-step greedy policy (Section~\ref{sec:app_kappa_greedy}).

%Conservative Policy Iteration \cite{kakade:02,pirotta2013safe,scherrer2014approximate} is not possible.

%TODO , maybe add here : paper organization? contributions?
%TODO - Discuss the choice of stating the results using the concentrability coefficient - these coefficients do matter as was empirically demonstrated in \cite{scherrer2014approximate}.

%TODO - Emphasize: We give here PI algorithms, the need for monotonic improvement in PI. 

\section{Preliminaries} 
Our framework is the infinite-horizon discounted Markov Decision Process (MDP). An MDP is defined as the 5-tuple $(\mathcal{S}, \mathcal{A},P,R,\gamma)$ \cite{puterman1994markov}, where ${\mathcal S}$ is a finite state space, ${\mathcal A}$ is a finite action space, $P \equiv P(s'|s,a)$ is a transition kernel, $R \equiv r(s,a)$ is a reward function, and $\gamma\in(0,1)$ is a discount factor. Let $\pi: \mathcal{S}\rightarrow \mathcal{P}(\mathcal{A})$ be a stationary policy, where $\mathcal{P}(\mathcal{A})$ is a probability distribution on $\mathcal{A}$. Let $v^\pi \in \mathbb{R}^{|\mathcal{S}|}$ be the value of a policy $\pi,$ defined in state $s$ as $v^\pi(s) \equiv \mathbb{E}^\pi[\sum_{t=0}^\infty\gamma^tr(s_t,\pi(s_t))|s_0=s].$  For brevity, we respectively denote the reward and value at time $t$ by $r_t\equiv r(s_t,\pi_t(s_t))$ and $v_t\equiv v(s_t).$  It is known that
$
v^\pi=\sum_{t=0}^\infty \gamma^t (P^\pi)^t r^\pi=(I-\gamma P^\pi)^{-1}r^\pi,
$
with the component-wise values $[P^\pi]_{s,s'}  \triangleq P(s'\mid s, \pi(s))$ and $[r^\pi]_s \triangleq  r(s,\pi(s))$.
Lastly, 
%we denote by $\qpp$ the state-action value function when taking action according to $\pi'$, and then following $\pi;$ i.e., $\qpp \eqdef q^{\pi}(\cdot,\pi'(\cdot))=r^{\pi'}+\gamma P^{\pi'}v^\pi$, where
let 
\begin{equation}
\label{eq: q pi def}
q^{\pi}(s,a) = \mathbb{E}^\pi[\sum_{t=0}^\infty\gamma^tr(s_t,\pi(s_t))\mid s_0=s,a_0=a].
\end{equation}

Our goal is to find a policy $\pi^*$ yielding the optimal value $v^*$ such that
\begin{align}
v^* = \max_\pi (I-\gamma P^\pi)^{-1} r^\pi = (I-\gamma P^{\pi^*})^{-1} r^{\pi^*}. \label{mdp}
%= \lim_{j \to \infty} T^j 0/ \label{mdp}
\end{align}
This goal can be achieved using the three classical operators (equalities hold component-wise): 
\begin{align*}
\forall v,\pi,~  T^\pi v & =  r^\pi +\gamma P^\pi v, \\
\forall v,~  T v & =  \max_\pi T^\pi v, \\
%\forall v,\pi,~ \pi \in \G(v) & \Leftrightarrow  T^\pi v = T v.
\forall v,~\G(v)&= \{\pi : T^\pi v = T v\}, 
\end{align*}
where $T^\pi$ is a linear operator, $T$ is the optimal Bellman operator and both $T^\pi$ and $T$ are $\gamma$-contraction mappings w.r.t. the max norm. It is known that the unique fixed points of $T^\pi$ and $T$ are $v^\pi$ and $v^*$, respectively. The set $\G(v)$ is the standard set of 1-step greedy policies w.r.t. $v$. 

%TODO : I saw and fixed some faulty English here, which also appears in the ICML paper. We need to fix it in our CRC by applying the updates made here. One example: ``the fixed point 'a' and 'b' is 'c' and 'd'.''**
\section{The $h$- and $\kappa$-Greedy Policies} \label{sec:dp_multiple_step}
In this section, we bring forward necessary definitions and results on two classes of multiple-step greedy policies: $h$- and $\kappa$-greedy \cite{efroni2018beyond}.
Let $h\in \mathbb{N} \backslash \{0\}$. The $h$-greedy policy $\pi_h$ outputs the first optimal action out of the sequence of actions solving a non-stationary, $h$-horizon control problem as follows:
\begin{align*}
\forall s\in \mathcal{S},\ \pi_h(s) \in \arg\max\limits_{\pi_0} \max\limits_{\pi_1,..,\pi_{h-1}} \mathbb{E}^{\pi_0\dots\pi_{h-1}}\left[\sum_{t=0}^{h-1}\gamma^t r(s_t,\pi_t(s_t))+\gamma^h v(s_h)\mid s_0=s\right].
\end{align*}
Since the $h$-greedy policy can be represented as the 1-step greedy policy w.r.t. $T^{h-1}v$, the set of $h$-greedy policies w.r.t. $v$, $\G_h(v)$, can be formally defined as follows:
\begin{align*}
\forall v,\pi,~ T_h^\pi v &= T^{\pi}T^{h-1} v,  \\
%\forall v,\pi,~ \pi \in \G_h(v) &~\Leftrightarrow~  T_h^\pi v = T^h v. \label{eq: Thpi def}
\forall v,~\G_h(v)&= \{\pi: T_h^\pi v = T^h v\}. 
\end{align*}

Let $\kappa\in[0,1]$. The set of $\kappa$-greedy policies w.r.t. a value function $v$, $\G_\kappa(v)$, is defined using the following operators:
\begin{align}
\forall v,\pi,~ T_\kappa^\pi v & =  (I-\kappa\gamma P^\pi)^{-1}(r^\pi+(1-\kappa)\gamma P^\pi v) \nonumber \\
\forall v,\ T_\kappa v &= \max_{\pi} T_\kappa^{\pi} v = \max_{\pi}  (I-\kappa\gamma P^{\pi})^{-1}  (r^{\pi}+(1-\kappa)\gamma P^{\pi} v)\label{kappa_mdp}\\
\forall v,~\G_\kappa(v) &= \{ \pi: T_\kappa^\pi v = T_\kappa v\} \nonumber.
\end{align}

\begin{remark}
	\label{rem: applying T kappa}
	A comparison of \eqref{mdp} and \eqref{kappa_mdp} reveals that finding the $\kappa$-greedy policy is equivalent to solving a $\kappa\gamma$-discounted MDP with shaped reward $r^\pi_{v,\kappa}\eqdef r^{\pi}+(1-\kappa)\gamma P^{\pi} v$.
\end{remark}
In \cite[Proposition~11]{efroni2018beyond}, the $\kappa$-greedy policy was explained to be interpolating over all geometrically $\kappa$-weighted $h$-greedy policies. It was also shown that for $\kappa=0,$ the 1-step greedy policy is restored, while for $\kappa=1,$ the $\kappa$-greedy policy is the optimal policy.

Both $T_\kappa^\pi$ and $T_\kappa$ are $\xi_\kappa$ contraction mappings, where $\xi_\kappa=\frac{\gamma(1-\kappa)}{1-\gamma\kappa}\in [0,\gamma]$. Their respective fixed points are $v^\pi$ and $v^*$. For brevity, where there is no risk of confusion, we shall denote $\xi_\kappa$ by $\xi.$ Moreover, in \cite{efroni2018beyond} it was shown that both the $h$- and $\kappa$-greedy policies w.r.t. $v^\pi$ are strictly better then $\pi$, unless $\pi = \pi^*$.

Next, let 
%\begin{equation*}
%\qppk \eqdef q^{\pi}_\kappa(\cdot,\pi'(\cdot)) = r^{\pi'}+\gamma(1-\kappa) P^{\pi'}v^{\pi} + \gamma\kappa P^{\pi'} T_\kappa v^\pi,
%\end{equation*} 
\begin{equation}
\label{eq: q pi kappa def}
q^\pi_\kappa(s,a)=\max_{\pi'}\mathbb{E}^{\pi'}[\sum_{t=0}^\infty(\kappa\gamma)^t(r(s_t,\pi'(s_t)) + \gamma (1-\kappa) v^\pi(s_{t+1})\mid s_0=s,a_0=a].
\end{equation}
%$\pi':\mathcal{S}\rightarrow \mathcal{A}$ is a deterministic policy and
The latter is the optimal $q$-function of the surrogate, $\gamma\kappa$-discounted MDP with $v^\pi$-shaped reward (see Remark \ref{rem: applying T kappa}). Thus, we can obtain a $\kappa$-greedy policy, $\pi_\kappa\in \G_\kappa(v^\pi)$, directly from $q^\pi_\kappa:$ 
\begin{align*}
\pi_\kappa(s)\in \arg\max_a q^\pi_\kappa(s,a),\ \forall s\in\mathcal{S}.
\end{align*}
See that the greedy policy w.r.t. $q^{\pi}_{\kappa=0}(s,a)$ is the 1-step greedy policy since ${q^{\pi}_{\kappa=0}(s,a)\!=\!q^{\pi}(s,a).}$

%1. why have 'definition' notation for the Bellman operators but not the greedy policy sets?  2. make sure I follow: why isn't $q^\pi_\kappa$ the value function of the surrogate MDP w.r.t. $v^\pi$ and $q^*_\kappa$ be the best one (asking because you wrote that $q^\pi_\kappa$ is the optimal one for the surrogate MDP)? isn't it the fixed point of the equation above \eqref{kappa_mdp} instead of \eqref{kappa_mdp}? Also, why in the definition of $\qppk$ we have $T_\kappa$?**

\section{Multi-step Policy Improvement and Soft Updates}\label{sec:monotonicity_and_soft}
In this section, we focus on policy improvement of multiple-step greedy policies, performed with soft updates. Soft updates of the 1-step greedy policy have proved necessary and beneficial in prominent algorithms \cite{konda1999actor,kakade:02,schulman2015trust}. Here, we begin by describing an intrinsic difficulty in selecting the step-size parameter $\alpha \in (0,1]$ when updating with multiple-step greedy policies. Specifically, denote by $\pi'$ such multiple-step greedy policy w.r.t. $v^\pi.$ Then, $\pi_{\text{new}}= (1-\alpha)\pi+\alpha\pi'$ is not necessarily better than $\pi$.

%We start by generalizing a useful lemma \cite[Lemma~10]{scherrer2013improved} (see Appendix~\ref{supp:kappa_value_difference} for the proof).
%\begin{lemma}\label{lemma:kappa_value_difference}
%	Let $v$ be a value function, $\pi$ a policy, and $\kappa \in [0,1]$. Then
%	\begin{align*}
%	T_\kappa^{\pi}v - v = (I-\kappa\gamma P^{\pi})^{-1}(T^{\pi}v - v).
%	\end{align*}
%\end{lemma}
%%
%This elementary lemma relates the `$\kappa$-advantage' to the 1-step advantage and is used for proving the following result.
%%
\begin{theorem}\label{thm:mixture_policies_and_monotonic_improvement} For any MDP, 
	let $\pi$ be a policy and $v^\pi$ its value. Let $\pi_\kappa \in\G_\kappa(v^{\pi})$ and $\pi_h \in\G_h(v^{\pi})$  with $\kappa \in [0,1]$ and $h>1$. Consider the mixture policies with ${\alpha\in(0,1]},$
	\begin{align*}
	&\pi(\alpha,\kappa) \eqdef (1-\alpha)\pi+\alpha \pi_\kappa,\\
	&\pi(\alpha,h) \eqdef (1-\alpha)\pi+\alpha \pi_h.
	\end{align*}
	Then we have the following equivalences:
	\begin{enumerate}
		\item{The inequality $v^{\pi(\alpha,\kappa)}\geq v^{\pi}$ holds for all MDPs if and only if  $\alpha \in [\kappa,1]$.}
		\item{The inequality $v^{\pi(\alpha,h)}\geq v^{\pi}$ holds for all MDPs  if and only if  $\alpha=1$.} 
	\end{enumerate}
	The above inequalities hold entry-wise, with strict inequality in at least one entry unless $v^\pi = v^*$.
\end{theorem}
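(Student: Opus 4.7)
The plan is to reduce both parts to a sign analysis of the right-hand side of the standard identity
\[
v^{\pi'} - v^{\pi} = (I - \gamma P^{\pi'})^{-1}\bigl(T^{\pi'} v^{\pi} - v^{\pi}\bigr),
\]
applied with $\pi' = \pi(\alpha,\kappa)$ or $\pi' = \pi(\alpha,h)$. Since the resolvent $(I-\gamma P^{\pi'})^{-1}$ is entrywise nonnegative by Neumann expansion, the sign of $v^{\pi'} - v^\pi$ is controlled by the way the linear map $I - \gamma P^{\pi'}$ acts on the residual $T^{\pi'}v^\pi - v^\pi$.

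For the sufficiency in the $\kappa$-case, I first exploit that $P^{\pi(\alpha,\kappa)}$ and $r^{\pi(\alpha,\kappa)}$ are convex combinations to get $T^{\pi(\alpha,\kappa)} v^{\pi} - v^{\pi} = \alpha(T^{\pi_\kappa} v^{\pi} - v^{\pi})$. Next I unfold $T_\kappa^{\pi_\kappa} v^{\pi} = r^{\pi_\kappa} + (1-\kappa)\gamma P^{\pi_\kappa} v^{\pi} + \kappa\gamma P^{\pi_\kappa} T_\kappa^{\pi_\kappa}v^\pi$, solve for $r^{\pi_\kappa}$, and substitute into $T^{\pi_\kappa}v^\pi = r^{\pi_\kappa} + \gamma P^{\pi_\kappa} v^\pi$ to obtain the key factorization
\[
T^{\pi_\kappa} v^{\pi} - v^{\pi} = (I - \kappa\gamma P^{\pi_\kappa})\,w, \qquad w \eqdef T_\kappa^{\pi_\kappa} v^{\pi} - v^{\pi} \ge 0,
\]
where $w\ge 0$ since $T_\kappa^{\pi_\kappa} v^\pi = T_\kappa v^\pi \ge T_\kappa^\pi v^\pi = v^\pi$. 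The crux is then that $(I - \gamma P^{\pi(\alpha,\kappa)})^{-1}(I - \kappa\gamma P^{\pi_\kappa})$ is entrywise nonnegative exactly when $\alpha\ge\kappa$. For this I use the algebraic identity $I - \kappa\gamma P^{\pi_\kappa} = (I - \gamma P^{\pi(\alpha,\kappa)}) + \gamma(1-\alpha) P^{\pi} + \gamma(\alpha-\kappa) P^{\pi_\kappa}$, which gives
\[
(I - \gamma P^{\pi(\alpha,\kappa)})^{-1}(I - \kappa\gamma P^{\pi_\kappa}) = I + \gamma(1-\alpha)(I - \gamma P^{\pi(\alpha,\kappa)})^{-1} P^{\pi} + \gamma(\alpha - \kappa)(I - \gamma P^{\pi(\alpha,\kappa)})^{-1} P^{\pi_\kappa}.
\]
Both coefficients $1-\alpha$ and $\alpha-\kappa$ are simultaneously nonnegative precisely when $\alpha\in[\kappa,1]$, and in that case the right-hand side dominates the identity, yielding $v^{\pi(\alpha,\kappa)} - v^{\pi} \ge \alpha w \ge 0$. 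Strict inequality in some entry then follows because $w$ has a strictly positive component whenever $v^{\pi}\neq v^*$, which is the strict improvement property of $\pi_\kappa$ recalled just before the theorem from \cite{efroni2018beyond}.

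For necessity in the $\kappa$-case and for the $h$-case I plan to argue by construction. When $\alpha<\kappa$ the coefficient $\gamma(\alpha-\kappa)$ in the expansion above becomes negative, and I would exhibit a two- or three-state MDP in which this negative contribution survives the resolvent and pushes some entry of $v^{\pi(\alpha,\kappa)}-v^{\pi}$ strictly below zero. The $h$-case follows the same skeleton: $\pi_h$ is $1$-step greedy with respect to $T^{h-1}v^{\pi}$ rather than $v^{\pi}$, so $T^{\pi_h}v^{\pi}-v^{\pi}$ admits no matching factorization through a monotone operator; only $\alpha=1$ collapses the mixture back to $\pi_h$ itself, for which $v^{\pi_h}\ge v^{\pi}$ is already established in \cite{efroni2018beyond}, and for every $\alpha<1$ I would construct another concrete MDP realizing the failure. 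I expect these explicit counterexamples to be the main obstacle, since the MDP parameters must be tuned so that the negative residual is not absorbed by the nonnegative resolvent inversion.
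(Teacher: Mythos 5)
Your sufficiency argument for part 1 is correct and is essentially the paper's own proof: the factorization $T^{\pi_\kappa}v^{\pi}-v^{\pi}=(I-\kappa\gamma P^{\pi_\kappa})\bigl(T_\kappa^{\pi_\kappa}v^{\pi}-v^{\pi}\bigr)$ is exactly the paper's Lemma on the $\kappa$-advantage (Appendix~\ref{supp:mixture_policies_and_monotonic_improvement}), and your expansion of $(I-\gamma P^{\pi(\alpha,\kappa)})^{-1}(I-\kappa\gamma P^{\pi_\kappa})$ into $I+\gamma(I-\gamma P^{\pi(\alpha,\kappa)})^{-1}\bigl((1-\alpha)P^{\pi}+(\alpha-\kappa)P^{\pi_\kappa}\bigr)$ is the same decomposition the paper uses, with the same conclusion $v^{\pi(\alpha,\kappa)}-v^{\pi}\geq \alpha\,(T_\kappa^{\pi_\kappa}v^{\pi}-v^{\pi})\geq 0$ for $\alpha\in[\kappa,1]$, and the same strictness argument. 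Likewise, $\alpha=1$ sufficiency in part 2 via the known improvement of the $h$-greedy policy is fine.

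The genuine gap is the necessity direction of both equivalences, which you only promise (``I would exhibit\dots'', ``I would construct\dots'') but do not carry out — and this is where the substance of the theorem lies, since sufficiency alone does not distinguish multi-step from 1-step greedy updates. Note in particular the tension you yourself flag but do not resolve: the counterexample MDP must simultaneously (i) make the mixture with stepsize $\alpha$ strictly worse somewhere, and (ii) certify that the adversarial improving policy really lies in $\G_\kappa(v^{\pi})$ for the given $\kappa$, which constrains the construction in the opposite direction. The paper resolves this with the Tightrope Walking MDP: for the ``hesitant'' policy $\pi_0$ one has $v^{\pi_0}(s_0)=0$ and $v^{\pi_0}(s_1)=-\gamma c/(1-\gamma)$, the mixture fails at $s_0$ exactly when $c>\alpha/(1-\alpha)$, while a direct computation of $T_\kappa^{\pi^*}v^{\pi_0}(s_0)$ versus the alternative action shows $\pi^*\in\G_\kappa(v^{\pi_0})$ requires $c\leq\kappa/(1-\kappa)$; both constraints are simultaneously satisfiable precisely when $\alpha<\kappa$, by monotonicity of $x/(1-x)$. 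For part 2 the same MDP works because $\pi^*\in\G_h(v^{\pi_0})$ for every $h>1$ and every $c>0$, so only the lower bound on $c$ is needed and any $\alpha<1$ fails for $c$ large enough. Your remark that the $h$-case ``admits no matching factorization through a monotone operator'' is a heuristic, not an argument that improvement actually fails; without explicit constructions of this kind the ``only if'' halves — and hence the theorem as stated — remain unproved in your proposal.
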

\emph{Proof sketch. } See Appendix~\ref{supp:mixture_policies_and_monotonic_improvement} for the full proof. Here, we only provide a counterexample demonstrating the potential non-monotonicity of $\pi(\alpha,\kappa)$ when the stepsize $\alpha$ is not big enough. One can show the same for $\pi(\alpha,h)$ with the same example.

\begin{figure}%[h]
	\centering
	\resizebox{2.1in}{!}{
		\begin{tikzpicture}[->,>=stealth',shorten >=1pt,auto,node distance=2.5cm,
		semithick, state/.style={circle, draw, minimum size=1.0cm}]
		\tikzstyle{every state}=[thick]
		]
		
		\node[state] (S0) {$s_0$};
		\node[state] (S1) [right of=S0] {\large $s_1$};
		\node[state] (S2) [right of=S1] {\large $s_2$};
		\node[state] (S3) [above of=S1] {\large $s_3$};
		
		\path (S0) edge  [loop above] node[pos=0.05,left]{\Large $a_0$} node {\Large $0$} (S0)
		edge     node[pos=0.15,below ]{\Large $a_1$}         node {\Large $0$} (S1)
		(S1) edge     node[pos=0.1,left]{\Large $a_0$} node {\Large $0$} (S3)
		edge     node[pos=0.15,below ]{\Large $a_1$}         node {\Large $0$} (S2)
		(S2) edge [loop above] node[pos=0.05,left]{\Large $a_0$} node {\Large $1$} (S2)
		(S3) edge  [loop above] node[pos=0.05,left]{\Large $a_0$} node {\Large $-c$} (S3);
		
		\end{tikzpicture}
	}
	\caption{The Tightrope Walking MDP used in the counter example of Theorem \ref{thm:mixture_policies_and_monotonic_improvement}.}
	\label{fig:thm_walking_a_rope_MDP}
\end{figure}
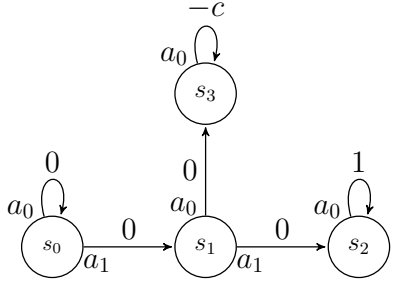

Consider the Tightrope Walking MDP in Fig.~\ref{fig:thm_walking_a_rope_MDP}. It describes the act of walking on a rope:  in the initial state $s_0$ the agent approaches the rope, in $s_1$ the walking attempt occurs, $s_2$ is the goal state and $s_3$ is repeatedly met if the agent falls from the rope, resulting in negative reward.

First, notice that by definition, $\forall v,~\pi^* \in \G_{\kappa=1}(v).$ We call this policy the ``confident'' policy.
Obviously, for any discount factor $\gamma \in (0,1)$, $\pi^*(s_0)=a_1$ and $\pi^*(s_1)=a_1.$ Instead, consider the ``hesitant'' policy $\pi_0(s) \equiv a_0 ~ \forall s$. We now claim that for any $\alpha\in(0,1)$ and 
\begin{equation}
\label{eq: c lower bound}
c>\frac{\alpha}{1-\alpha}
\end{equation}
the mixture policy, $\pi(\alpha,\kappa=1)=(1-\alpha)\pi_0+\alpha\pi^*$, is not strictly better than $\pi_0.$ To see this, notice that $v^{\pi_0}(s_1) < 0$ and $v^{\pi_0}(s_0)=0;$ i.e., the agent accumulates zero reward if she does not climb the rope. Thus, while $v^{\pi_0}(s_0)=0,$ taking any mixture of the confident and hesitant policies can result in $v^{\pi(\alpha,\kappa=1)}(s_0)<0,$ due to the portion of the transition to $s_1$ and its negative contribution. Based on this construction, let $\kappa\in[0,1].$ To ensure $\pi^*\in\G_\kappa(v^\pi),$ we find it is necessary that 
\begin{equation}
\label{eq: c upper bound}
c\leq\frac{\kappa}{1-\kappa}.
\end{equation}
%To finalize the counterexample and show that strict policy improvement is not guaranteed, we choose $c$ such that both \eqref{eq: c lower bound} and \eqref{eq: c upper bound} are satisfied, a choice that is only possible when $\alpha<\kappa,$ due to the monotonicity of $\frac{x}{1-x}.$  \qedsymbol

To conclude, if both \eqref{eq: c lower bound} and \eqref{eq: c upper bound} are satisfied, the mixture policy does not improve over $\pi_0$. Due to the monotonicity of $\frac{x}{1-x},$ such a choice of $c$ is indeed possible for $\alpha<\kappa$.  \qedsymbol

Theorem~\ref{thm:mixture_policies_and_monotonic_improvement} guarantees monotonic improvement for the 1-step greedy policy as a special case when $\kappa=0$. Hence, we get that for any $\alpha\in(0,1],$ the mixture of any policy $\pi$ and the 1-step greedy policy w.r.t. $v^\pi$ is monotonically better then $\pi$. To the best of our knowledge, this result was not explicitly stated anywhere. Instead, it appeared within proofs of several famous results, e.g, \cite[Lemma~5.4]{konda1999actor}, \cite[Corollary~4.2]{kakade:02}, and \cite[Theorem~1]{scherrer2014local}.

In the rest of the paper, we shall focus on the $\kappa$-greedy policy and extend it to the online and the approximate cases. The discovery that the $\kappa$-greedy policy w.r.t. $v^\pi$ is not necessarily strictly better than $\pi$ will guide us in appropriately devising algorithms.

%at what stage did we move to stochastic policies?**

\section{Online $\kappa$-Policy Iteration with Cautious Soft Updates}\label{sec:online_kappa_PI}

In \cite{efroni2018beyond}, it was shown that using the $\kappa$-greedy policy in the improvement stage leads to a convergent PI procedure -- the $\kappa$-PI algorithm. This algorithm repeats i) finding the optimal policy of small-horizon surrogate MDP with shaped reward, and ii) calculating the value of the optimal policy and use it to shape the reward of next iteration. Here, we devise a practical version of $\kappa$-PI, which is model-free, online and runs in two timescales; i.e, it performs i) and ii) simultaneously.

%Building on the results in \cite{perkins2013asynchronous}, we incorporate \emph{both} the $\kappa$-greedy and 1-step-greedy policies in the fast-timescale update rule. The 1-step greedy policy is attained via an estimate of $q^\pi(s,a)$, which is plugged into a q-learning \cite{watkins1992q} update rule for obtaining the $\kappa$-greedy policy. The latter essentially solves the surrogate, $\kappa\gamma$-discounted, MDP (defined in \eqref{kappa_mdp}). In the slow timescale, a policy improvement step is conducted.

The method is depicted in Algorithm~\ref{alg:async_kappaPI}. It is similar to the asynchronous PI analyzed in \cite{perkins2013asynchronous}, except for two major differences. First, the fast timescale tracks both $q^\pi,q^{\pi}_\kappa$ and not just $q^{\pi}$. Thus, it enables access to \emph{both} the 1-step-greedy and $\kappa$-greedy policies. The 1-step greedy policy is attained via the $q^\pi$ estimate, which is plugged into a $q$-learning \cite{watkins1992q} update rule for obtaining the $\kappa$-greedy policy. The latter essentially solves the surrogate $\kappa\gamma$-discounted MDP (see Remark \ref{rem: applying T kappa}). The second difference is in the slow timescale, in which the policy is updated using a new operator, $b_s$, as defined below. To better understand this operator, first notice that in Stochastic Approximation methods such as Algorithm~\ref{alg:async_kappaPI}, the policy is improved using soft updates with decaying stepsizes. However, as Theorem \ref{thm:mixture_policies_and_monotonic_improvement} states, monotonic improvement is not guaranteed below a certain stepsize value. Hence, for $q,q_\kappa \in \mathbb{R}^{|\mathcal{S}\times \mathcal{A}|}$ and policy $\pi,$ we set $b_s(q,q_\kappa,\pi)$ to be the $\kappa$-greedy policy only when assured to have improvement:
\begin{align*}
b_s(q,q_\kappa,\pi) = \begin{cases}
a_\kappa(s) &\text{if } q(s,a_\kappa)\geq v^\pi(s),\\
a_{\text{1-step}}(s) &\text{else},
\end{cases}
\end{align*}
where $a_\kappa(s) \! \eqdef \! \argmax_aq_\kappa(s,a),~a_{\text{1-step}}(s) \! \eqdef \!  \argmax_a q(s,a),~\mbox{and }{v^\pi(s) \! = \!  \sum_a \pi(a\mid s)q(s,a)}.$

We respectively denote the state and state-action-pair visitation counters  after the $n$-th time-step by $\nu_n(s) \eqdef \sum_{k=1}^n \mathbbm{1}_{s=s_k}$ and $\phi_n(s,a) \eqdef \sum_{k=1}^n \mathbbm{1}_{(s,a)=(s_k,a_k)}$. The stepsize sequences $\mu_f(\cdot),\mu_s(\cdot)$ satisfy the common assumption (B2) in \cite{perkins2013asynchronous}, among which $\lim_{n\rightarrow\infty}\mu_s(n)/\mu_f(n)\rightarrow 0$. The second moments of $\{r_n\}$ are assumed to be bounded. Furthermore, let $\nu$ be some measure over the state space, s.t. $\forall s\in \mathcal{S},\ \nu (s)>0.$ Then, we assume to have a generative model $\mathbb{G}(\nu,\pi),$ using which we sample state $s\sim \nu$, sample  action $a\sim\pi(s)$, apply action $a$ and receive reward $r$ and next state $s'$. 

The fast-timescale update rules in lines \ref{eq: q pi update rule} and \ref{eq: q pi kappa update rule} can be jointly written as the sum of $H_\kappa^\pi(q,q_\kappa)$ (defined below) and a martingale difference noise.

\noindent\makebox[\textwidth][c]{
	\begin{minipage}[t]{0.85\textwidth}
		\begin{center}
			\begin{algorithm}[H]
				\caption{Two-Timescale Online $\kappa$-Policy-Iteration}
				\label{alg:async_kappaPI}
				\begin{algorithmic}[1]
					\STATE \textbf{initialize: }$\pi_0,q_0,q_{\kappa,0}.$
					\FOR{$n=0,\dots$}
					\STATE $s_n,a_n,r_n,s'_{n} \sim \mathbb{G}(\nu,\pi_n)$
					\STATE \textcolor{gray}{\# Fast-timescale updates}
					\STATE $\delta_n = r_{n}+\gamma v^\pi_n(s'_{n})-q_n(s_n,a_n)$
					\STATE $q_{n+1}(s_n,a_n) \gets q_n(s_n,a_n)+\mu_f(\phi_{n+1}(s_n,a_n))\delta_n $ \label{eq: q pi update rule}
					\STATE $\delta_{\kappa,n} = r_{n}+\gamma(1-\kappa) v^\pi_n(s'_{n})+\kappa\gamma\max_{a'} q_{\kappa,n}(s'_{n},a')-q_{\kappa,n}(s_n,a_n)$
					\STATE $q_{\kappa,n+1}(s_n,a_n) \gets q_{\kappa,n}(s_n,a_n)+\mu_f(\phi_{n+1}(s_n,a_n))\delta_{\kappa,n}$ \label{eq: q pi kappa update rule}
									\STATE \textcolor{gray}{\# Slow-timescale updates}
	
					\STATE $\pi_{n+1}(s_n) \gets
					 \pi_n(s_n)+\mu_s(\nu_{n+1}(s_n))(b_{s_n}(q_{n+1},q_{\kappa,n+1},\pi_n)-\pi_n(s_n))$
					\ENDFOR
					\STATE \textbf{return: } $\pi$
				\end{algorithmic}
			\end{algorithm}
		\end{center}
\end{minipage}}

\begin{defn}
	Let $q,q_\kappa \in \mathbb{R}^{|\mathcal{S}||\mathcal{A}|}.$ The mapping $H^\pi_\kappa:\mathbb{R}^{2|\mathcal{S}||\mathcal{A}|}\rightarrow \mathbb{R}^{2|\mathcal{S}||\mathcal{A}|}$ is defined as follows $\forall(s,a) \in \mathcal{S}\times \mathcal{A}$.
	\begin{align*}
	H_\kappa^\pi(q,q_\kappa)(s,a) \eqdef \begin{bmatrix}r(s,a)+\gamma \mathbb{E}_{s',a^\pi}q(s',a^\pi) \\ 
	r(s,a)+\gamma(1-\kappa) \mathbb{E}_{s',a^\pi}q(s',a^\pi) +\kappa\gamma \mathbb{E}_{s'}\max_{a'}q_\kappa(s',a') \end{bmatrix},
	\end{align*}
where $s'\sim P(\cdot \mid s,a), a^{\pi} \sim \pi(s')$.
\end{defn}

The following lemma shows that, given a fixed $\pi$, $H_\kappa^\pi$ is a contraction, equivalently to \cite[Lemma~5.3]{perkins2013asynchronous} (see Appendix~\ref{supp:online_kappa_PI_contraction} for the proof).
\begin{lemma}\label{lemma:online_kappa_PI_contraction}
	$H^\pi_\kappa$ is a $\gamma$-contraction in the max-norm. Its fixed point is $[\ q^\pi, q^{\pi}_{\kappa}\ ]^\top,$ as defined in~\eqref{eq: q pi def},~\eqref{eq: q pi kappa def}.
	%, where $\forall(s,a) \in \mathcal{S}\times \mathcal{A}$
	%\begin{align*}
	%q^\pi(s,a) &= E_\pi[\sum_{t=0}^\infty \gamma^t r_t \mid s_0=s,a_0=a], \\
	%q^\pi_\kappa(s,a) &= \max_{\pi'} E_{\pi'}[\sum_{t=0}^\infty \gamma^t (r_t+\gamma(1-\kappa)v^\pi(s_{t+1})) \mid s_0=s,a_0=a].
	%\end{align*} 
\end{lemma}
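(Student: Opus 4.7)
The plan is to establish the two claims in sequence: first the $\gamma$-contraction property via a componentwise bound, then identify the fixed point by matching each row of $H^\pi_\kappa$ with a known Bellman-type equation.

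For the contraction, I would write the joint vector $(q,q_\kappa) \in \mathbb{R}^{2|\mathcal{S}||\mathcal{A}|}$ and use the norm $\|(q,q_\kappa)\|_\infty = \max(\|q\|_\infty, \|q_\kappa\|_\infty)$. Take two pairs $(q_1,q_{\kappa,1})$ and $(q_2,q_{\kappa,2})$, and denote $M \eqdef \max(\|q_1-q_2\|_\infty,\|q_{\kappa,1}-q_{\kappa,2}\|_\infty)$. The first component of $H^\pi_\kappa$ depends only on $q$, so
\begin{align*}
|[H^\pi_\kappa(q_1,q_{\kappa,1})]_1(s,a) - [H^\pi_\kappa(q_2,q_{\kappa,2})]_1(s,a)| \leq \gamma \mathbb{E}_{s',a^\pi}|q_1(s',a^\pi)-q_2(s',a^\pi)| \leq \gamma M.
\end{align*}
For the second component, I would split using the triangle inequality and apply the standard bound $|\max_{a'} f(a') - \max_{a'} g(a')| \leq \max_{a'}|f(a')-g(a')|$ to obtain
\begin{align*}
|[H^\pi_\kappa(q_1,q_{\kappa,1})]_2(s,a) - [H^\pi_\kappa(q_2,q_{\kappa,2})]_2(s,a)| \leq \gamma(1-\kappa)\|q_1-q_2\|_\infty + \kappa\gamma\|q_{\kappa,1}-q_{\kappa,2}\|_\infty \leq \gamma M.
\end{align*}
Taking the maximum over $(s,a)$ and over the two components yields $\|H^\pi_\kappa(q_1,q_{\kappa,1}) - H^\pi_\kappa(q_2,q_{\kappa,2})\|_\infty \leq \gamma M$, establishing the $\gamma$-contraction in the max-norm.

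For the fixed point, I would verify directly that $(q^\pi, q^\pi_\kappa)$ satisfies $H^\pi_\kappa(q^\pi, q^\pi_\kappa) = (q^\pi, q^\pi_\kappa)$. The first component reduces to the linear Bellman equation $q^\pi(s,a) = r(s,a) + \gamma \mathbb{E}_{s',a^\pi} q^\pi(s',a^\pi)$, which is satisfied by definition \eqref{eq: q pi def}. For the second component, substituting $q = q^\pi$ and using $\mathbb{E}_{a^\pi \sim \pi(s')} q^\pi(s',a^\pi) = v^\pi(s')$, the equation becomes
\begin{align*}
q^\pi_\kappa(s,a) = r(s,a) + \gamma(1-\kappa) \mathbb{E}_{s'} v^\pi(s') + \kappa\gamma \mathbb{E}_{s'} \max_{a'} q^\pi_\kappa(s',a'),
\end{align*}
which is precisely the Bellman optimality equation for the surrogate $\kappa\gamma$-discounted MDP with shaped reward $r^\pi_{v^\pi,\kappa}$ (see Remark~\ref{rem: applying T kappa}); by definition \eqref{eq: q pi kappa def}, $q^\pi_\kappa$ is its unique solution. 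Uniqueness of the joint fixed point then follows from Banach's fixed point theorem applied to the $\gamma$-contraction $H^\pi_\kappa$ on the complete space $\mathbb{R}^{2|\mathcal{S}||\mathcal{A}|}$.

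I do not expect a major obstacle here; the only subtle point is being careful that the norm on the joint space is the max across both blocks, which is what allows the $(1-\kappa)$ and $\kappa$ weights in the second component to combine cleanly into a single factor of $\gamma$ rather than degrading to a weaker contraction rate.
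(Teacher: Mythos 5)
Your proposal is correct and follows essentially the same route as the paper's proof: the same componentwise contraction bound (triangle inequality plus the $|\max f - \max g| \leq \max|f-g|$ bound, with the $(1-\kappa)$ and $\kappa$ weights combining into a single factor $\gamma$ under the joint max-norm), and the same fixed-point identification of the first block as the Bellman equation for $q^\pi$ and the second block as the optimality equation of the surrogate $\kappa\gamma$-discounted MDP solved by $q^\pi_\kappa$.
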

% Following the analysis in \cite{perkins2013asynchronous} we get the convergence of Algorithm \ref{alg:async_kappaPI} to the optimal policy (see Appendix \ref{supp:online_kappa_PI_convergence} in the supplementary material there we follow the analysis in \cite{perkins2013asynchronous}).

Finally, based on several intermediate results given in Appendix \ref{supp:online_kappa_PI_convergence} and relaying on Lemma \ref{lemma:online_kappa_PI_contraction}, we establish the convergence of Algorithm \ref{alg:async_kappaPI}. 
\begin{theorem} \label{theorem:online_kappa_PI_convergence}
	The coupled process $(q_n,q_{\kappa,n},\pi_n)$ in Algorithm \ref{alg:async_kappaPI} converges to the limit $(q^*,q^*,\pi^*)$, where $q^*$ is the optimal $q$-function and $\pi^*$ is the optimal policy.
\end{theorem}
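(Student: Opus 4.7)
The plan is to treat Algorithm~\ref{alg:async_kappaPI} as a two-timescale asynchronous stochastic approximation scheme in the style of Borkar and of Perkins--Barto \cite{perkins2013asynchronous}, with the vector $(q_n,q_{\kappa,n})$ on the fast timescale and the policy $\pi_n$ on the slow one. Because $\mu_s(n)/\mu_f(n)\to 0$, the slow iterates look quasi-static from the fast ones' perspective, while the fast iterates appear to have equilibrated when viewed on the slow timescale. The overall goal is therefore to (i) show that $(q_n,q_{\kappa,n})$ tracks $(q^{\pi_n},q^{\pi_n}_\kappa)$, and then (ii) show that the induced slow ODE for $\pi_n$ converges to $\pi^*$.

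For step (i), I would first write lines \ref{eq: q pi update rule} and \ref{eq: q pi kappa update rule} as
\[
x_{n+1}(s_n,a_n) = x_n(s_n,a_n) + \mu_f(\phi_{n+1}(s_n,a_n))\bigl(H^{\pi_n}_\kappa(x_n)(s_n,a_n) - x_n(s_n,a_n) + M_{n+1}\bigr),
\]
where $x_n=[q_n,q_{\kappa,n}]^\top$ and $M_{n+1}$ is a martingale difference with bounded conditional second moments (using the assumption on $\{r_n\}$ and the finiteness of $\mathcal{S}\times\mathcal{A}$). Lemma~\ref{lemma:online_kappa_PI_contraction} provides the contraction property needed to invoke the asynchronous stochastic approximation convergence result of \cite{perkins2013asynchronous}: the fast iterates converge almost surely to the unique fixed point $[q^{\pi},q^\pi_\kappa]^\top$ of $H^\pi_\kappa$ when $\pi$ is held fixed, and, for the coupled system, they track $[q^{\pi_n},q^{\pi_n}_\kappa]^\top$ along the slow trajectory. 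Boundedness of iterates follows from the contraction, standard stepsize assumptions, and a stability argument (e.g.\ Borkar--Meyn).

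For step (ii), once the fast variables are equilibrated, the slow update can be analyzed through its mean-field ODE
\[
\dot{\pi}(s) = \nu(s)\bigl(e_{b_s(q^{\pi},q^{\pi}_\kappa,\pi)} - \pi(s)\bigr),\qquad s\in\mathcal{S},
\]
where $e_a$ denotes the Dirac distribution on action $a$ and the $\nu(s)$ factor captures the asynchronous visitation frequencies (which are bounded away from $0$ by assumption on $\nu$). The crucial point is that the definition of the cautious operator $b_s$ enforces improvement: if $b_s$ returns the $\kappa$-greedy action $a_\kappa(s)$, then by construction $q^\pi(s,a_\kappa(s))\ge v^\pi(s)$; otherwise $b_s$ returns the 1-step greedy action, for which improvement holds unconditionally by Theorem~\ref{thm:mixture_policies_and_monotonic_improvement} with $\kappa=0$. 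Consequently, a soft move of $\pi$ toward $e_{b_s(\cdot)}$ is a convex combination of $\pi$ and a policy that is guaranteed to improve $\pi$ at state $s$; by the 1-step improvement lemma applied per-state, $v^\pi$ is componentwise non-decreasing along the ODE and strictly increases somewhere unless $\pi\in\G(v^\pi)\cap\G_\kappa(v^\pi)$, i.e.\ unless $\pi=\pi^*$. Taking $V(\pi)=\|v^*-v^\pi\|_1$ (weighted by $\nu$) as a Lyapunov function and invoking LaSalle yields $\pi_n\to\pi^*$, and by continuity of $\pi\mapsto(q^\pi,q^\pi_\kappa)$ at $\pi^*$ we conclude $(q_n,q_{\kappa,n})\to(q^*,q^*)$.

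The main obstacle I anticipate is the discontinuity of $b_s$ across the threshold $q(s,a_\kappa)=v^\pi(s)$, which prevents a direct application of the classical ODE method: the driving vector field is only measurable, not continuous, in $(q,q_\kappa,\pi)$. I would address this by either (a) passing to a differential inclusion whose set-valued right-hand side at the switching surface contains both candidate improvement directions, and verifying that either selection still yields $\dot v^{\pi}\ge 0$ entrywise (which is exactly what the cautious design of $b_s$ buys us), or (b) showing that the switching set has measure zero under the invariant distribution of visits so that spurious equilibria on the discontinuity are avoided. A secondary technical point is handling the asynchrony: I would verify the Perkins--Barto conditions on visitation frequencies (using the generative model $\mathbb{G}(\nu,\pi)$ with $\nu(s)>0$) to justify that each state is updated ``often enough'' for the per-state ODE interpretation above to be rigorous.
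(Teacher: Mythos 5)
Your proposal follows essentially the same route as the paper's proof: a two-timescale argument in the style of Perkins--Barto, with the fast iterates tracking $[q^{\pi_n},q^{\pi_n}_\kappa]^\top$ via the contraction of Lemma~\ref{lemma:online_kappa_PI_contraction}, and the slow policy update shown to be an ascent direction because the cautious operator $b_s$ guarantees $T^{b_s}v^{\pi}\geq v^{\pi}$, followed by a Lyapunov/ODE argument giving $\pi_n\to\pi^*$. The only pieces the paper spells out that you leave implicit are the Lipschitz continuity of $\pi\mapsto(q^\pi,q^\pi_\kappa)$ needed to justify the tracking step, and the short computation using $T_\kappa v^*=v^*$ showing $q^{\pi^*}_\kappa=q^*$, so that the $q_\kappa$-component indeed converges to $q^*$.
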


For $\kappa=1$, the fast-timescale update rule in line \ref{eq: q pi kappa update rule} corresponds to that of $q$-learning \cite{watkins1992q}. For that $\kappa$, Algorithm \ref{alg:async_kappaPI} uses an estimated optimal $q$-function to update the current policy when improvement is assured. For $\kappa<1$, the estimated $\kappa$-dependent optimal $q$-function (see \eqref{eq: q pi kappa def}) is used, again with the `cautious' policy update. Moreover, Algorithm \ref{alg:async_kappaPI} combines an off-policy algorithm, i.e., $q$-learning, with an on-policy Actor-Critic algorithm. To the best of our knowledge, this is the first appearance of these two approaches combined in a single algorithm. 

%TODO I think it is better to keep different sections as we currently have.
%**TODO: throw Appendix~\ref{supp:online_kappa_PI_convergence} into \ref{supp:online_kappa_PI_contraction}.**

\section{Approximate $\kappa$-Policy Iteration with Hard Updates}
\label{sec:app_kappa_greedy}
%TODO Explain we cannot do CPI howevrer we can do non-soft updates in API.
Theorem~\ref{thm:mixture_policies_and_monotonic_improvement} establishes the conditions required for guaranteed monotonic improvement of softly-updated multiple-step greedy policies. The algorithm in Section~\ref{sec:online_kappa_PI} then accounts for these conditions to ensure convergence. Contrarily, in this section, we derive and study algorithms that perform hard policy-updates. Specifically, we generalize the prominent Approximate Policy Iteration (API) \cite{munos2003error,farahmand2010error,lazaric2016analysis} and Policy Search by Dynamic Programming (PSDP) \cite{bagnell2004policy,scherrer2014approximate}. 
%- we obtain performance guarantees for both. Specifically, better than known 1-step for both algorithms. also mention the kappa tradeoff. The comparison is via the metrics introduced in the original papers (mention the coefficients here). 
For both, we obtain performance guarantees that exhibit a tradeoff in the choice of $\kappa,$ with optimal performance bound achieved with $\kappa>0.$ That is, our approximate $\kappa$-generalized PI methods outperform the 1-step greedy approximate PI methods in terms of best known guarantees.

%**TODO: what sentence do we place here to explain how this coincides with Theorem~\ref{thm:mixture_policies_and_monotonic_improvement}? We can add the CPI reference, but only if it makes sense and is well placed.
%The reason for this choice lies again, as in Section \ref{sec:online_kappa_PI}, in Theorem \ref{thm:mixture_policies_and_monotonic_improvement} that shows soft updates and multistep greedy policies do not combine well. Thus, for example, we do not consider in this section Conservative Policy Iteration (CPI) \cite{kakade:02} and API$(\alpha)$ \cite{scherrer2014approximate}.

For the algorithms here we assume an oracle that returns a $\kappa$-greedy policy with some error. Formally, we denote by $\G_{\kappa,\delta,\nu}(v)$ the set of approximate $\kappa$-greedy policies w.r.t. $v,$ with $\delta$ approximation error under some measure $\nu$.

\begin{defn}[Approximate $\kappa$-greedy policy]
	\label{def:approximate_kappa_greedy}
	Let $v:\mathcal{S}\rightarrow\mathbb{R}$ be a value function, $\delta \geq 0$ a real number and $\nu$ a distribution over $\mathcal{S}$. A policy $\pi\in\G_{\kappa,\delta,\nu}(v)$ if $\nu T^{\pi}_\kappa v \geq \nu T_\kappa v - \delta. $
%	\begin{align*}
%	\nu T^{\pi}_\kappa v \geq \nu T_\kappa v - \delta. 
%	\end{align*}
\end{defn}

Such a device can be implemented using existing approximate methods, e.g., Conservative Policy Iteration (CPI) \cite{kakade:02}, approximate PI or VI \cite{farahmand2010error}, Policy Search \cite{scherrer2014local}, or by having an access to an approximate model of the environment. The approximate $\kappa$-greedy oracle assumed here is less restrictive than the one assumed in \cite{efroni2018beyond}. There, a uniform error over states was assumed, whereas here, the error is defined w.r.t. a specific measure, $\nu$. For practical purposes, $\nu$ can be thought of as the initial sampling distribution to which the MDP is initialized. Lastly, notice that the larger $\kappa$ is, the harder it is to solve the surrogate $\kappa \gamma$-discounted MDP since the discount factor is bigger \cite{petrik2009biasing,strehl2009reinforcement,jiang2015dependence}; i.e., the computational cost of each call to the oracle increases. 
%TODO: fix ICML CRC to gamma kappa or kappa gamma

Using the concept of \emph{concentrability coefficients} introduced in \cite{munos2003error} (there, they were originally termed ``diffusion coefficients''), we follow the line of work in  \cite{munos2003error,munos2007performance,farahmand2010error, scherrer2014approximate,lazaric2016analysis} to prove our performance bounds. This allows a direct comparison of the algorithms proposed here with previously studied approximate 1-step greedy algorithms.
Namely, our bounds consist of concentrability coefficients $C^{(1)},C^{(2)},C^{(2,k)}$ and $C^{\pi^*(1)}$ from \cite{scherrer2014approximate,lazaric2016analysis}, as well as two new coefficients $C^{\pi^*}_{\kappa}$ and~$C^{\pi^*(1)}_\kappa$.%, defined as follows. 

\begin{defn}[Concentrability coefficients \cite{scherrer2014approximate,lazaric2016analysis})]
	\label{def:concentrability_coeff} 
	Let $\mu,\nu$ be some measures over $\mathcal{S}.$ Let $\{c(i)\}_{i=0}^\infty$ be the sequence of the smallest values in $[1,\infty) \cup \{\infty \}$ such that for every $i,$ for all sequences of deterministic %( TODO- why do we need deterministic?)
	stationary policies $\pi_1,\pi_2,..,\pi_i,$ $\mu \prod_{j=1}^{i} P^{\pi_j} \leq c(i)\nu$. Let	$C^{(1)}(\mu,\nu) = (1-\gamma)\sum_{i=0}^\infty \gamma^i c(i)$ and $C^{(2,k)}(\mu,\nu) = (1-\gamma)^{2}\sum_{i,j=0}^\infty \gamma^{i+j} c(i+j+k)$. For brevity, we denote $C^{(2,0)}(\mu,\nu)$ as $C^{(2)}(\mu,\nu).$ Similarly, let $\{c^{\pi^*}(i)\}_{i=0}^\infty$ be the sequence of the smallest values in $[1,\infty) \cup \{\infty \}$ such that for every $i,$ $\mu \left(P^{\pi^*}\right)^i \leq c^{\pi^*}(i)\nu$. Let $C^{\pi^*(1)}(\mu,\nu) = (1-\gamma)\sum_{i=0}^\infty \gamma^i c^{\pi^*}(i).$ 
\end{defn}

We now introduce two new concentrability coefficients suitable for bounding the worst-case performance of PI algorithms with approximate $\kappa$-greedy policies.

%Finally, we introduce the following two new coefficients. Let ${C^{\pi^*(1)}_\kappa(\mu,\nu) = \frac{\xi}{\gamma}C^{\pi^*(1)}(\mu,\nu)+(1-\xi)\kappa c(0)}$. Also, let  ${C^{\pi^*}_\kappa(\mu,\nu) \in [1,\infty) \cup \{\infty \}}$ be the smallest value s.t.
% $d^{\pi^*}_{\kappa,\mu} \leq C^{\pi^*}_\kappa(\mu,\nu) \nu,$ where 
%	${d^{\pi^*}_{\kappa,\mu}=(1-\xi)\mu(I-\xi D^{\pi^*}_{\kappa}P^{\pi^*})^{-1}}$ is a probability measure and ${D_\kappa^{\pi}=(1-\kappa\gamma)(I-\kappa\gamma P^\pi)^{-1}}$ is a stochastic matrix.

\begin{defn}[$\kappa$-Concentrability coefficients]
\label{def:kappa_concentrability_coeff} 
Let ${C^{\pi^*(1)}_\kappa(\mu,\nu) = \frac{\xi}{\gamma}C^{\pi^*(1)}(\mu,\nu)+(1-\xi)\kappa c(0)}$. Also, let  ${C^{\pi^*}_\kappa(\mu,\nu) \in [1,\infty) \cup \{\infty \}}$ be the smallest value s.t.
 $d^{\pi^*}_{\kappa,\mu} \leq C^{\pi^*}_\kappa(\mu,\nu) \nu,$ where 
	${d^{\pi^*}_{\kappa,\mu}=(1-\xi)\mu(I-\xi D^{\pi^*}_{\kappa}P^{\pi^*})^{-1}}$ is a probability measure and ${D_\kappa^{\pi}=(1-\kappa\gamma)(I-\kappa\gamma P^\pi)^{-1}}$ is a stochastic matrix.
\end{defn}
%\blue{need to improve}

%**TODO: clarify the dimensions in the text above and make sure the $(i,j)$ sum in $C^{(2)}$ is well denoted.

In the definitions above, $\nu$ is the measure according to which the approximate improvement is guaranteed, while $\mu$ specifies the distribution on which one measures the loss ${\mathbb{E}_{s \sim \mu}[v^*(s)-v^{\pi_k}(s)]=\mu(v^*-v^{\pi_k})}$ that we wish to bound.
From Definition~\ref{def:kappa_concentrability_coeff}  it holds that ${C^{\pi^*}_{\kappa=0}(\mu,\nu) =C^{\pi^*}(\mu,\nu)};$ the latter was previously defined in, e.g, \cite[Definition~1]{scherrer2014approximate}. 

Before giving our performance bounds, we first study the behavior of the coefficients appearing in them. The following lemma sheds light on the behavior of $C^{\pi^*}_\kappa(\mu,\nu).$ Specifically, it shows that under certain constructions, $C^{\pi^*}_\kappa(\mu,\nu)$ decreases\footnote{A smaller coefficient is obviously better. The best value for any concentrability coefficient is 1.} as $\kappa$ increases (see proof in Appendix~\ref{supp:general_C_improvement}).
\begin{lemma}\label{lemma:general_C_improvement}
	Let $\nu(\alpha)=(1-\alpha)\nu + \alpha \mu$. Then, for all $\kappa'>\kappa$, there exists $\alpha^*\in(0,1)$ such that $C^{\pi^*}_{\kappa'}(\mu,\nu(\alpha^*)) \leq C^{\pi^*}_\kappa(\mu,\nu).$ The inequality is strict for $C^{\pi^*}_\kappa(\mu,\nu)>1$. For $\mu=\nu$ this implies that $C^{\pi^*}_{\kappa}(\nu,\nu)$ is a decreasing function of $\kappa$. 
\end{lemma}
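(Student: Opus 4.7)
The crux is an explicit formula for the $\kappa$-weighted occupation measure $d^{\pi^*}_{\kappa,\mu}$ as a convex combination of the usual discounted occupation measure and $\mu$. Writing $A_\kappa \eqdef \xi_\kappa D^{\pi^*}_\kappa P^{\pi^*} = \gamma(1-\kappa)(I-\kappa\gamma P^{\pi^*})^{-1}P^{\pi^*}$, one computes
\begin{equation*}
I - A_\kappa \;=\; (I-\kappa\gamma P^{\pi^*})^{-1}\bigl[(I-\kappa\gamma P^{\pi^*}) - \gamma(1-\kappa)P^{\pi^*}\bigr] \;=\; (I-\kappa\gamma P^{\pi^*})^{-1}(I-\gamma P^{\pi^*}),
\end{equation*}
so that $(I-A_\kappa)^{-1}=(I-\gamma P^{\pi^*})^{-1}(I-\kappa\gamma P^{\pi^*})$. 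Combined with $1-\xi_\kappa=\frac{1-\gamma}{1-\kappa\gamma}$ and the discount-measure identity $\gamma\,d^{\pi^*}_\mu P^{\pi^*} = d^{\pi^*}_\mu - (1-\gamma)\mu$ where $d^{\pi^*}_\mu\eqdef (1-\gamma)\mu(I-\gamma P^{\pi^*})^{-1}$, a direct calculation yields
\begin{equation*}
d^{\pi^*}_{\kappa,\mu} \;=\; \lambda_\kappa\, d^{\pi^*}_\mu + (1-\lambda_\kappa)\,\mu, \qquad \lambda_\kappa \eqdef \frac{1-\kappa}{1-\kappa\gamma}\in(0,1].
\end{equation*}

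The second step is to eliminate $d^{\pi^*}_\mu$ between the expressions for $\kappa$ and $\kappa'$. Since $\kappa'>\kappa$ implies $\lambda_{\kappa'}<\lambda_\kappa$, setting $\beta\eqdef\lambda_{\kappa'}/\lambda_\kappa\in(0,1)$ gives after substitution
\begin{equation*}
d^{\pi^*}_{\kappa',\mu} \;=\; \beta\, d^{\pi^*}_{\kappa,\mu} + (1-\beta)\,\mu.
\end{equation*}

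The third step is purely algebraic. Let $C\eqdef C^{\pi^*}_\kappa(\mu,\nu)\geq 1$, so $d^{\pi^*}_{\kappa,\mu}\leq C\nu$ componentwise. Then
\begin{equation*}
d^{\pi^*}_{\kappa',\mu} \;\leq\; \beta C\,\nu + (1-\beta)\,\mu.
\end{equation*}
Set $C'\eqdef 1+\beta(C-1)$ and $\alpha^*\eqdef (1-\beta)/C'$. A short check shows $\alpha^*\in(0,1)$ (uses $\beta\in(0,1)$ and $C\geq 1$) and $C'(1-\alpha^*)=C'-(1-\beta)=\beta C$, so the right-hand side above equals exactly $C'[(1-\alpha^*)\nu+\alpha^*\mu]=C'\nu(\alpha^*)$. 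Hence $C^{\pi^*}_{\kappa'}(\mu,\nu(\alpha^*))\leq C'\leq C$, with the second inequality strict whenever $C>1$.

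For the final claim, take $\mu=\nu$: then $\nu(\alpha)\equiv\nu$ for any $\alpha$, so the bound just derived applied with any $\kappa'>\kappa$ reads $C^{\pi^*}_{\kappa'}(\nu,\nu)\leq 1+\beta(C^{\pi^*}_\kappa(\nu,\nu)-1)\leq C^{\pi^*}_\kappa(\nu,\nu)$, which is strict whenever the right-hand side exceeds $1$. I expect the main obstacle to be the first step: recognizing and verifying the convex-combination identity for $d^{\pi^*}_{\kappa,\mu}$, which requires a small but non-obvious resolvent manipulation. Once that identity is in hand, the rest is bookkeeping with a one-dimensional parameter choice.
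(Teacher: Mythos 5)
Your proposal is correct and follows essentially the same route as the paper: your convex-combination identity $d^{\pi^*}_{\kappa,\mu}=\lambda_\kappa d^{\pi^*}_\mu+(1-\lambda_\kappa)\mu$ and the derived relation $d^{\pi^*}_{\kappa',\mu}=\beta\, d^{\pi^*}_{\kappa,\mu}+(1-\beta)\mu$ are exactly the measure-level form of the paper's Lemma~\ref{lemma:help1} (which is likewise proved by passing through $(I-\gamma P^{\pi^*})^{-1}$, i.e.\ the $\kappa=0$ case), and your choices $C'=1+\beta(C-1)$ and $\alpha^*=(1-\beta)/C'$ coincide algebraically with the paper's $C^{\pi^*}_{\kappa'}(\mu,\nu(\alpha^*))$ and $\alpha^*$, yielding the same strictness and the same $\mu=\nu$ conclusion.
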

%\blue{if we remove this, we need to remember to omit its proof from the appendix}
%This result suggests that changing the measure of the greedy step from $\nu$ to $\nu(\alpha)$ always allows improving the coefficient $C^{\pi^*}_{\kappa}(\mu,\nu(\alpha))$ by increasing $\kappa$ and setting $\alpha$ accordingly. Furthermore, in this case specific case $\mu=\nu$ we have that $C^{\pi^*}_{\kappa}(\nu,\nu)$ is a decreasing function of $\kappa$.

%\begin{corollary}\label{corollary:C_with_mu_equal_to_nu}
%For $\mu=\nu$, $C^{\pi^*}_{\kappa}(\nu,\nu)$ is a decreasing function of $\kappa$. It is strictly decreasing for $C^{\pi^*}(\nu,\nu)>1$. Furthermore, $C^{\pi^*}_{\kappa=1}(\nu,\nu)=1$.
%\end{corollary}  

Definition~\ref{def:kappa_concentrability_coeff} introduces two coefficients with which we shall derive our bounds. Though traditional arithmetic relations between them do not exist, they do comply to some notion of ordering.
%\begin{remark}[Order of concentrability coefficients]
%	\label{remark:order_of_coefficents}
%	In \cite{scherrer2014approximate}, an order between the concentrability coefficients was introduced: a coefficient $A$ is said to be strictly better than $B$ --- a relation we denote with $A \prec B$ --- if and only if i) $B<\infty$ implies $A<\infty$ and ii) there exists an MDP for which $A<\infty$ and $B=\infty$. Particularly, it was  argued that $C^{\pi^*}(\mu,\nu) \prec C^{\pi^*(1)}(\mu,\nu) \prec C^{(1)}(\mu,\nu) \prec C^{(2)}(\mu,\nu)$ and that $C^{(2,k_1)}(\mu,\nu)\prec C^{(2,k_2)}(\mu,\nu)$ if $k_1>k_2$. In this sense, $C^{\pi^*(1)}_\kappa(\mu,\nu)$ is analogous to $C^{\pi^*(1)}(\mu,\nu)$, while its definition might suggest improvement as $\kappa$ increases. Moreover, combined with the fact that $C^{\pi^*}_\kappa(\mu,\nu)$ improves as $\kappa$ increases, as Lemma~\ref{lemma:general_C_improvement} suggests, we get that $C^{\pi^*}_\kappa(\mu,\nu)$ is better than all previously defined concentrability coefficients. 
%\end{remark}

\begin{remark}[Order of concentrability coefficients]
	\label{remark:order_of_coefficents}
	In \cite{scherrer2014approximate}, an order between the concentrability coefficients was introduced: a coefficient $A$ is said to be strictly better than $B$ --- a relation we denote with $A \prec B$ --- if and only if i) $B<\infty$ implies $A<\infty$ and ii) there exists an MDP for which $A<\infty$ and $B=\infty$. Particularly, it was  argued that 
\begin{align*}
&C^{\pi^*}(\mu,\nu) \prec C^{\pi^*(1)}(\mu,\nu) \prec C^{(1)}(\mu,\nu) \prec C^{(2)}(\mu,\nu),\ \mbox{and}\\
&C^{(2,k_1)}(\mu,\nu)\prec C^{(2,k_2)}(\mu,\nu)\ \mbox{if}\ k_2<k_1.
\end{align*}	
In this sense, $C^{\pi^*(1)}_\kappa(\mu,\nu)$ is analogous to $C^{\pi^*(1)}(\mu,\nu)$, while its definition might suggest improvement as $\kappa$ increases. Moreover, combined with the fact that $C^{\pi^*}_\kappa(\mu,\nu)$ improves as $\kappa$ increases, as Lemma~\ref{lemma:general_C_improvement} suggests, $C^{\pi^*}_\kappa(\mu,\nu)$ is better than all previously defined concentrability coefficients. 
\end{remark}

\subsection{$\kappa$-Approximate Policy Iteration}
 A natural generalization of API \cite{munos2003error,scherrer2014approximate,lazaric2016analysis} to the multiple-step greedy policy is $\kappa$-API, as given in Algorithm~\ref{alg:kappaAPI}. In each of its iterations, the policy is updated to the approximate $\kappa$-greedy policy w.r.t. $v^{\pi_{k-1}}$; i.e,  a policy from the set $\G_{\kappa,\delta,\nu}(v^{\pi_{k-1}})$. 
 
\noindent\makebox[\textwidth][c]{
\begin{minipage}[t]{0.45\textwidth}
\centering
%\vspace{0.3cm}
 \begin{algorithm}[H]
 \begin{algorithmic}
    \STATE \textbf{initialize } $\kappa\in[0,1],\nu,\delta,v^{\pi_0}$
	\STATE $v\gets v^{\pi_0}$    
    \FOR{$\ k=1,..$}
    	\STATE{$\pi_k \gets \G_{\kappa,\delta,\nu}(v)$}
    	\STATE{$v \gets v^{\pi_k}$}
    \ENDFOR
    \RETURN $\pi$
    \end{algorithmic}
       \caption{$\kappa$-API}
     \label{alg:kappaAPI}
      	\vspace{0.45cm}
  \end{algorithm}
\end{minipage}
\hspace{0.4cm}
\begin{minipage}[t]{0.45\textwidth}
\centering
 \begin{algorithm}[H]
    \caption{$\kappa$-PSDP}
    \label{alg:kappaPSDP}
	\begin{algorithmic}
    \STATE \textbf{initialize } $\kappa\in[0,1],\nu,\delta,v^{\pi_0}, \Pi=[\ ]$
    \STATE $v \gets v^{\pi_0}$
    \FOR{$\ k=1,..$}
    	\STATE{$\pi_k \gets \G_{\kappa,\delta,\nu}(v)$}
    	\STATE{$v \gets T_\kappa^{\pi_k}v$}
    	\STATE{$\Pi\gets $Append$(\Pi,\pi_k)$}
    \ENDFOR
    \RETURN $\Pi$    
    \end{algorithmic}
  \end{algorithm}
\end{minipage}}

The following theorem gives a performance bound for $\kappa$-API (see proof in Appendix~\ref{supp:kappa_API}), with
\begin{align*}
&\Ckapi(\mu,\nu) = (1-\kappa)^{2} C^{(2)}(\mu,\nu) + (1-\gamma)\kappa \left((1-\kappa)C^{(1)}(\mu,\nu) + (1-\gamma\kappa)C^{\pi^*(1)}_{\kappa}(\mu,\nu)\right),\\
&\Ckapi^{(k,1)}(\mu,\nu)=(1-\kappa\gamma)\left(\kappa(1-\kappa\gamma)C_\kappa^{\pi^*}(\mu,\nu)+(1-\kappa)^2C^{(1)}(\mu,\nu))\right),\\
&\Ckapi^{(k,2)}(\mu,\nu) = (1-\kappa)\kappa \left((1-\gamma) C^{(1)}(\mu,\nu)+g(\kappa)(1-\kappa)\gamma^{k} C^{(2,k)}(\mu,\nu)\right), 
\end{align*}  
where $g(\kappa)$ is a bounded function for $\kappa \in [0,1].$

\begin{theorem}\label{theorem:kappa_API}
Let $\pi_k$ be the policy at the $k$-th iteration of  $\kappa$-API and $\delta$ be the error as defined in Definition~\ref{def:approximate_kappa_greedy}. Then 
\begin{align*}
\mu(v^{*}-v^{\pi_k})\leq  \frac{\Ckapi(\mu,\nu)}{(1-\gamma)^2} \delta   + \xi^{k} \frac{R_{\max}}{1-\gamma}.
\end{align*}
%and, as given in Section~\ref{sec:dp_multiple_step}, $\xi=\frac{\gamma(1-\kappa)}{1-\kappa\gamma}.$ 
Also, let $k=\left\lceil  \frac{\log{\frac{R_{max}}{\delta(1-\gamma)}}}{1-\xi} \right\rceil.$ Then
%\begin{align*}
$\mu(v^{*}-v^{\pi_{k}})\leq \frac{\Ckapi^{(k,1)}(\mu,\nu)}{(1-\gamma)^2} \log\left( \frac{R_{\max}}{(1-\gamma)\delta}\right)\delta +\frac{\Ckapi^{(k,2)}(\mu,\nu)}{(1-\gamma)^2} \delta + \delta. 
$%\end{align*}
\end{theorem}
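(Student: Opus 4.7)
I would follow the classical error-propagation template for approximate PI \cite{munos2003error,scherrer2014approximate,lazaric2016analysis}, adapted to the $\kappa$-greedy operator. Let $e_k \eqdef T_\kappa v^{\pi_{k-1}} - T^{\pi_k}_\kappa v^{\pi_{k-1}} \geq 0$ be the greedy-error; by Definition~\ref{def:approximate_kappa_greedy} it satisfies $\nu e_k \leq \delta$. The key algebraic tool is the identity $T^\pi_\kappa v_1 - T^\pi_\kappa v_2 = \xi D^\pi_\kappa P^\pi (v_1 - v_2)$, which follows from the closed form of $T^\pi_\kappa$ together with $\xi(1-\kappa\gamma) = \gamma(1-\kappa)$. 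Combining this with $v^* = T^{\pi^*}_\kappa v^*$, $v^{\pi_k} = T^{\pi_k}_\kappa v^{\pi_k}$, and $T^{\pi^*}_\kappa v^{\pi_{k-1}} \leq T_\kappa v^{\pi_{k-1}} = T^{\pi_k}_\kappa v^{\pi_{k-1}} + e_k$, a short operator manipulation yields first $v^{\pi_{k-1}} - v^{\pi_k} \leq (I - \xi D^{\pi_k}_\kappa P^{\pi_k})^{-1} e_k$ and then the recursion
\[
v^{*}-v^{\pi_k} \;\leq\; \xi D^{\pi^*}_\kappa P^{\pi^*}\bigl(v^{*}-v^{\pi_{k-1}}\bigr) + (I-\xi D^{\pi_k}_\kappa P^{\pi_k})^{-1} e_k .
\]

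I would next simplify the inverse using the identity $(I-\xi D^\pi_\kappa P^\pi)(I-\kappa\gamma P^\pi) = I-\gamma P^\pi$, which gives the clean decomposition $(I-\xi D^\pi_\kappa P^\pi)^{-1} = (1-\kappa)(I-\gamma P^\pi)^{-1} + \kappa I$. Iterating the recursion $k$ times and applying $\mu$ produces
\begin{align*}
\mu(v^{*}-v^{\pi_k}) \;\leq\;&\xi^{k}\,\mu(D^{\pi^*}_\kappa P^{\pi^*})^{k}(v^{*}-v^{\pi_0})\\
&+\sum_{j=0}^{k-1}\xi^{j}\,\mu(D^{\pi^*}_\kappa P^{\pi^*})^{j}\bigl[(1-\kappa)(I-\gamma P^{\pi_{k-j}})^{-1}+\kappa I\bigr]e_{k-j}.
\end{align*}
Since $D^{\pi^*}_\kappa P^{\pi^*}$ is stochastic, the initial-value term is at most $\xi^{k} R_{\max}/(1-\gamma)$. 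For the error sum I would split into the $\kappa$-identity piece and the $(1-\kappa)$-weighted geometric piece; after expanding each $D^{\pi^*}_\kappa$ and each $(I-\gamma P^{\pi_{k-j}})^{-1}$ as a power series, every inner term is of the form $\mu \prod_i P^{\pi_i} e_{k-j}$ or $\mu (P^{\pi^*})^i e_{k-j}$, and is bounded by the appropriate coefficient $c(i)$ or $c^{\pi^*}(i)$ of Definition~\ref{def:concentrability_coeff} times $\nu e_{k-j}\leq\delta$. Summing the resulting geometric series in $\xi$, $\gamma$, $\kappa\gamma$ collects the pieces into $C^{(1)}, C^{(2)}, C^{\pi^*(1)}_\kappa$, $C^{\pi^*}_\kappa$ and produces the first bound. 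A critical identification is that the $\kappa$-identity piece of the infinite sum carries the prefactor $(1-\xi)\sum_{j\geq 0}\xi^{j}\mu (D^{\pi^*}_\kappa P^{\pi^*})^{j}$, which is exactly the probability measure $d^{\pi^*}_{\kappa,\mu}$ from Definition~\ref{def:kappa_concentrability_coeff} and is therefore controlled by $C^{\pi^*}_\kappa$.

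For the second bound I would choose $k$ as the smallest integer for which $\xi^{k} R_{\max}/(1-\gamma) \leq \delta$, namely $k=\lceil \log(R_{\max}/((1-\gamma)\delta))/(1-\xi)\rceil$. Truncating the recursion at this $k$ enables two refinements. First, the remaining $(I-\gamma P^{\pi_{k-j}})^{-1}$ is split into its first $k$ powers plus a $\gamma^{k}$ tail, producing the $\gamma^{k} C^{(2,k)}(\mu,\nu)$ contribution inside $\Ckapi^{(k,2)}$. Second, instead of summing an infinite geometric series in $\xi$, I bound the $k$ finite terms uniformly, which brings out a factor $k \sim \log(R_{\max}/((1-\gamma)\delta))/(1-\xi)$ in front of the $\kappa(1-\kappa\gamma) C^{\pi^*}_\kappa$ and $(1-\kappa)^2 C^{(1)}$ pieces, matching the shape of $\Ckapi^{(k,1)}$ times the stated logarithm. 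The additive $\delta$ accounts for the initial-value term, which is $\leq \delta$ by the choice of $k$.

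The main technical obstacle is the bookkeeping of nested series in $\xi,\gamma,\kappa\gamma$ and the matching of each piece to the correct concentrability coefficient; in particular one must verify that $(1-\xi)\mu(I-\xi D^{\pi^*}_\kappa P^{\pi^*})^{-1} = d^{\pi^*}_{\kappa,\mu}$ and that the mixing factor multiplying $c(0)$ reproduces the $(1-\xi)\kappa c(0)$ summand inside $C^{\pi^*(1)}_\kappa$. A minor but essential subtlety is that $e_k$ is bounded only in $\nu$-expectation, not pointwise, so the concentrability coefficients must absorb the change of measure whenever a non-$\nu$ distribution is induced on the left of $e_{k-j}$; the pointwise non-negativity $e_k\geq 0$ is used throughout to preserve the inequalities under positive linear maps.
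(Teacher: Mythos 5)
Your proposal follows essentially the same route as the paper's proof: the same recursion $v^*-v^{\pi_k}\leq \xi D^{\pi^*}_\kappa P^{\pi^*}(v^*-v^{\pi_{k-1}})+(I-\xi D^{\pi_k}_\kappa P^{\pi_k})^{-1}\bar{\delta}_k$, the same decomposition $(I-\xi D^{\pi}_\kappa P^{\pi})^{-1}=(1-\kappa)(I-\gamma P^{\pi})^{-1}+\kappa I$, the same series bookkeeping against $c(i),c^{\pi^*}(i)$, and the same choice of $k$, so it is correct in substance. One small labeling slip: the identification of $(1-\xi)\mu(I-\xi D^{\pi^*}_\kappa P^{\pi^*})^{-1}$ with $d^{\pi^*}_{\kappa,\mu}$ (hence $C^{\pi^*}_\kappa$) only yields the $k$-factored term in the second bound, since the errors $\bar{\delta}_i$ differ across iterations and are controlled only in $\nu$-average; the $k$-independent first bound must treat the identity piece term by term via the $c^{\pi^*}(i)$ series, producing $C^{\pi^*(1)}_\kappa$ with its $(1-\xi)\kappa c(0)$ summand, exactly as your final paragraph otherwise indicates.
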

For brevity, we now discuss the first part of the statement; the same insights are true for the second as well.
The bound for the original API is restored for the 1-step greedy case of $\kappa=0$, i.e, ${\mu(v^{*}-v^{\pi_k})\leq \frac{C^{(2)}(\mu,\nu)}{(1-\gamma)^2}\delta+\frac{\gamma^k R_{\max}}{1-\gamma}}$ \cite{scherrer2014approximate,lazaric2016analysis}. As in the case of API, our bound consists of a fixed approximation error term and a geometrically decaying term.  As for the other extreme, $\kappa=1,$ we first remind that in the non-approximate case, applying $T_{\kappa=1}$ amounts to solving the original $\gamma$-discounted MDP in a single step \cite[Remark~4]{efroni2018beyond}. In the approximate setup we investigate here, this results in the vanishing of the second, geometrically decaying term, since $\xi=0$ for $\kappa=1$. We are then left with a single constant approximation error:  $\mu(v^{*}-v^{\pi_k})\leq c(0)\delta.$ Notice that $c(0)$ is independent of $\pi^*$ (see Definition~\ref{def:concentrability_coeff}). It represents the mismatch between $\mu$ and $\nu$ \cite{kakade:02}. 
	
	%, where $C(\mu,\nu)$ is an expression independent of $\pi^*$ (see Definition \ref{def:concentrability_coeff} and plug $\kappa=1$).

%\blue{**TODO: the last sentence above is unclear. Did you mean to define $C(\mu,\nu)$ there? if so, it needs to be stated explicitly. In any case, it is currently out of context and perhaps should be removed given the next paragraph (or combined into it).}

%Consider the case when $\gamma \rightarrow 1.$ Due to the relations between the concentrability coefficients (see Remark \ref{remark:order_of_coefficents}) it is expected that $\Ckapi(\mu,\nu)$ will be a decreasing function of $\kappa$. We see that the first term in $\Ckapi$ becomes less dominant when $\kappa\simeq 1-\sqrt{1-\gamma}\leq 1$- at this value the multiplicative coefficient of $C^{(2)}(\mu,\nu)$ becomes smaller than $1-\gamma$, the multiplicative factor of the second term. Since $C^{(2)}(\mu,\nu)$ is the worst concentrability coefficient reducing the affect of the first term is wanted. Moreover, we have that $\lim_{\kappa\rightarrow 1} \Ckapi(\mu,\nu)\rightarrow (1-\gamma)^2 C^{\pi^*}_{\kappa}$, the best concentrability coefficient is the dominant in this limit.

Next, notice that, by definition (see Definition \ref{def:concentrability_coeff}), $C^{(2)}(\mu,\nu) > (1-\gamma)^2 c(0);$ i.e., $\frac{C^{(2)}(\mu,\nu)}{(1-\gamma)^2} \delta > c(0) \delta.$ Given the discussion above, we have that the $\kappa$-API performance bound is \emph{strictly} smaller with $\kappa=1$ than with $\kappa=0.$ Hence, the bound suggests that $\kappa$-API is strictly better than the original API for $\kappa=1.$ Since all expressions there are continuous, this behavior does not solely hold point-wise.
%
%There is reason to believe that $\Ckapi(\mu,\nu)$ is a decreasing function of $\kappa.$ To see this, notice that its first term is a decreasing function of $\kappa$ and the second is multiplied by $(1-\gamma).$ Moreover, due to the concentrability coefficient order discussed in Remark~\ref{remark:order_of_coefficents}, if the first term in $\Ckapi(\mu,\nu)$ is finite, than also is the second. Thus, for a large enough $\gamma,$ the second term is significantly smaller than the first, making $\Ckapi(\mu,\nu)$ decrease with $\kappa.$

%Obviously, the first term of $\Ckapi(\mu,\nu)$ is a decreasing function of $\kappa.$ Next, due to its polynomial dependence on $\kappa,$ there exists some $\bar{\kappa}>0,$ such that for $\kappa> \bar{\kappa},$ the second term of $\Ckapi(\mu,\nu)$ is also a decreasing function of $\kappa.$
%For the case  $\kappa \leq \bar{\kappa},$ let us inspect the derivative of $\Ckapi(\mu,\nu)$ w.r.t. $\kappa$ and review two cases in which it is negative. The first such case is when $C^{(2)}(\mu,\nu) >> C^{(1)}(\mu,\nu), C^{\pi^*}_{\kappa}(\mu,\nu)$ First, notice that due to the concentrability coefficient order discussed in Remark~\ref{remark:order_of_coefficents}, if the second term in $\Ckapi(\mu,\nu)$ is finite, than also is the first. Hence, since the derivative of the first term is negative and the derivative of the second term can be made arbitrarily small for sufficiently large $\gamma,$ their sum is negative for $\gamma \rightarrow 1.$

%\blue{**TODO: let's discuss the above paragraph and improve it.}

\begin{remark}[Performance tradeoff]
\label{remark:kappa_tradeoff}
Naively, the above observation would lead to the choice of $\kappa=1$. However, it is reasonable to assume that $\delta$, the error of the $\kappa$-greedy step, itself depends on $\kappa,$ i.e, $\delta \equiv \delta(\kappa)$. The general form of such dependence is expected to be monotonically increasing: as the effective horizon of the surrogate $\kappa \gamma$-discounted MDP becomes larger, its solution is harder to obtain (see Remark~\ref{rem: applying T kappa}). Thus, Theorem~\ref{theorem:kappa_API} reveals a performance tradeoff as a function of $\kappa$. 
%This gives reason to believe that an intermediate value of $\kappa\in[0,1]$ is expected to yield the optimal performance bound.
\end{remark}

\subsection{$\kappa$-Policy Search by Dynamic Programming}
%TODO : Differentiate between the scalar $\delta$ and the vector, which is now also $\delta$, for $\nu^T \delta \leq \delta$.

We continue with generalizing another approximate PI method -- PSDP \cite{bagnell2004policy,scherrer2014approximate}. We name it $\kappa$-PSDP and introduce it in Algorithm~\ref{alg:kappaPSDP}. This algorithm updates the policy differently from $\kappa$-API. However, similarly to $\kappa$-API, it uses hard updates. We will show this algorithm exhibits better performance than any other previously analyzed approximate PI method \cite{scherrer2014approximate}. 

The $\kappa$-PSDP algorithm, unlike $\kappa$-API, returns a \emph{sequence of deterministic policies}, $\Pi$. Given this sequence, we build a single, non-stationary policy by successively running $N_k$ steps of $\Pi[k]$, followed by $N_{k-1}$ steps of $\Pi[k-1]$, etc, where $\{N_i\}_{i=1}^k$ are i.i.d. geometric random variables with parameter $1-\kappa$.  Once this process reaches $\pi_0$, it runs $\pi_0$ indefinitely. We shall refer to this non-stationary policy as $\sigma_{\kappa,k}$. Its value $v^{\sigma_{\kappa,k}}$ can be seen to satisfy 
\begin{align*}
v^{\sigma_{\kappa,k}}= T^{\Pi[k]}_{\kappa}T^{\Pi[k-1]}_{\kappa}\dots T^{\Pi[1]}_{\kappa} v^{\pi_0}.
\end{align*}

%\blue{Where is this written in Alg.~\ref{alg:kappaPSDP}?}

This algorithm follows PSDP from \cite{scherrer2014approximate}. Differently from it, the 1-step improvement is generalized to the $\kappa$-greedy improvement and the non-stationary policy behaves randomly. The following theorem gives a performance bound for it
 %Its proof resembles to that in \cite{scherrer2014approximate},  but with the machinery updated using techniques developed in \blue{the proof of Theorem~\ref{theorem:kappa_API}} and in \cite{efroni2018beyond} 
 (see proof in Appendix~\ref{supp:kappa_PSDP}).

\begin{theorem}\label{theorem:kappa_PSDP}
	Let $\sigma_{\kappa,k}$ be the policy at the $k$-th  iteration of $\kappa$-PSDP and $\delta$ be the error as defined in Definition~\ref{def:approximate_kappa_greedy}.
 Then
	\begin{align*}
	\mu (v^* - v^{\sigma_{\kappa,k}}) \le  \frac{C_{\kappa}^{\pi^*(1)}(\mu,\nu)}{1-\xi} \delta + \xi^k \frac{R_{\max}}{1-\gamma}.
	\end{align*}
Also, let $k=\left\lceil  \frac{\log{\frac{R_{max}}{\delta(1-\gamma)}}}{1-\xi} \right\rceil.$ Then
%\begin{align*}
$
	\mu (v^* - v^{\sigma_{\kappa,k}}) \le   \frac{C_{\kappa}^{\pi^*}(\mu,\nu)}{(1-\xi)^2} \log\left( \frac{R_{\max}}{(1-\gamma)\delta}\right)\delta + \delta.
$
%\end{align*}
\end{theorem}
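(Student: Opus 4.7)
The strategy is to mimic the classical PSDP analysis, but to substitute the $\kappa$-Bellman operators $T_\kappa^\pi$ for their 1-step analogues. I start from the recursions $v^{\sigma_{\kappa,k}} = T_\kappa^{\pi_k} v^{\sigma_{\kappa,k-1}}$ and $v^* = T_\kappa^{\pi^*} v^*$; splitting through $T_\kappa^{\pi^*} v^{\sigma_{\kappa,k-1}}$ and using the affine form $T_\kappa^{\pi}v = (1-\kappa\gamma)^{-1} D_\kappa^{\pi} r^{\pi} + \xi D_\kappa^{\pi} P^{\pi} v$ together with $T_\kappa^{\pi^*} v^{\sigma_{\kappa,k-1}} \le T_\kappa v^{\sigma_{\kappa,k-1}}$, I obtain the componentwise bound
\begin{align*}
v^* - v^{\sigma_{\kappa,k}} \le \xi D_\kappa^{\pi^*}P^{\pi^*}(v^* - v^{\sigma_{\kappa,k-1}}) + e_k,
\end{align*}
where $e_i \eqdef T_\kappa v^{\sigma_{\kappa,i-1}} - T_\kappa^{\pi_i}v^{\sigma_{\kappa,i-1}} \ge 0$ and $\nu e_i \le \delta$ by Definition~\ref{def:approximate_kappa_greedy}. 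Iterating $k$ times yields
\begin{align*}
v^* - v^{\sigma_{\kappa,k}} \le \sum_{j=0}^{k-1}(\xi D_\kappa^{\pi^*}P^{\pi^*})^j e_{k-j} + (\xi D_\kappa^{\pi^*}P^{\pi^*})^k (v^* - v^{\pi_0}).
\end{align*}
Taking $\mu$-inner products, the last term is at most $\xi^k R_{\max}/(1-\gamma)$ since $D_\kappa^{\pi^*}$ and $P^{\pi^*}$ are stochastic and $\|v^*-v^{\pi_0}\|_\infty \le R_{\max}/(1-\gamma)$. The problem therefore reduces to controlling the error sum under $\mu$, which is where the two bounds diverge.

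For the first bound, I exploit that $D_\kappa^{\pi^*}$ is a polynomial in $P^{\pi^*}$ (so they commute) to expand
\begin{align*}
\mu(\xi D_\kappa^{\pi^*}P^{\pi^*})^j = \xi^j(1-\kappa\gamma)^j\sum_{m\ge 0}\binom{m+j-1}{j-1}(\kappa\gamma)^m \mu(P^{\pi^*})^{m+j}.
\end{align*}
Applying $\mu(P^{\pi^*})^n \le c^{\pi^*}(n)\nu$ and $\nu e_{k-j}\le\delta$, extending the finite $j$-sum to $\infty$ (valid by non-negativity of every term), reindexing with $n=m+j$ and closing the inner binomial sum via the identity $\gamma(1-\kappa)+\kappa\gamma = \gamma$, I collapse the double series into $\delta\bigl[\kappa\, c(0) + (1-\kappa)\,C^{\pi^*(1)}(\mu,\nu)/(1-\gamma)\bigr]$; using $1-\xi = (1-\gamma)/(1-\gamma\kappa)$, this is exactly $C^{\pi^*(1)}_\kappa(\mu,\nu)\,\delta/(1-\xi)$, matching the definition in Definition~\ref{def:kappa_concentrability_coeff}.

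For the second bound, I avoid expanding the operator and instead observe that each non-negative term $\xi^j \mu(D_\kappa^{\pi^*}P^{\pi^*})^j$ appears in the series $\sum_{j'\ge 0}\xi^{j'}\mu(D_\kappa^{\pi^*}P^{\pi^*})^{j'} = d^{\pi^*}_{\kappa,\mu}/(1-\xi)$, and is therefore dominated entry-wise by $C^{\pi^*}_\kappa(\mu,\nu)\,\nu/(1-\xi)$. Pairing with $\nu e_{k-j}\le\delta$ bounds each of the $k$ summands by $C^{\pi^*}_\kappa(\mu,\nu)\,\delta/(1-\xi)$, so the total error sum is at most $k\,C^{\pi^*}_\kappa(\mu,\nu)\,\delta/(1-\xi)$. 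The prescribed $k$ satisfies both $\xi^k R_{\max}/(1-\gamma)\le\delta$ (absorbed into the trailing $+\delta$) and $k(1-\xi)\le\log(R_{\max}/((1-\gamma)\delta))+O(1)$, producing the $\log$-factor term. I expect the main technical obstacle to lie in the first bound: the collapse of the double series over $(j,m)$ to exactly $C^{\pi^*(1)}_\kappa/(1-\xi)$ relies on tight bookkeeping of the factors $\xi$, $1-\kappa\gamma$, $\kappa\gamma$, and $1-\gamma$, and any miscounting would fail to match the definition of the target concentrability coefficient.
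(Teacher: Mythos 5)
Your argument is correct and follows essentially the same route as the paper: the same error recursion via $T_\kappa^{\pi^*}$ and the affine form $T_\kappa^{\pi}v = (1-\kappa\gamma)^{-1}D_\kappa^{\pi}r^{\pi} + \xi D_\kappa^{\pi}P^{\pi}v$, the same $\xi^k R_{\max}/(1-\gamma)$ tail, and the same choice of $k$. Your two bounds on the error sum are exactly the content of the paper's Lemma~\ref{lemma:kappa_api_lemma2}: the first reproduces (inlined) the paper's Lemma~\ref{lemma:geometic_series} plus the binomial collapse of Lemma~\ref{lemma:binomial_trick} with $n=\infty$ (and your bookkeeping does land on $C^{\pi^*(1)}_\kappa/(1-\xi)$), while the second is the same Neumann-series domination $(\xi D_\kappa^{\pi^*}P^{\pi^*})^j \le (I-\xi D_\kappa^{\pi^*}P^{\pi^*})^{-1}$ used there.
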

%\blue{again -- say what $\delta$ is. Also, the superscript $\delta$ and especially the $\delta \log \delta$ should be improved. And they are currently defined to be the same.}

Compared to $\kappa$-API from the previous section, the $\kappa$-PSDP bound consists of a different fixed approximation error and a shared geometrically decaying term. Regarding the former, notice that ${C_{\kappa}^{\pi^*(1)}(\mu,\nu) \prec \Ckapi(\mu,\nu)},$ using the notation from Remark~\ref{remark:order_of_coefficents}. This suggests that $\kappa$-PSDP is strictly better than  $\kappa$-API in the metrics we consider, and is in line with the comparison of the original API to the original PSDP given in \cite{scherrer2014approximate}.

Similarly to the previous section, we again see that substituting $\kappa=1$ gives a tighter bound than $\kappa=0.$ The reason is that $\frac{C^{\pi^*(1)}(\mu,\nu)}{1-\gamma}\delta > c(0)\delta$, by definition (see Definition \ref{def:concentrability_coeff});
%Also, again from continuity, this behavior is interpolated for $\kappa \in [0,1]$
 i.e., we have that $\kappa$-PSDP is generally better than PSDP. Also, contrarily to $\kappa$-API, here we directly see the performance improvement as $\kappa$ increases due to the decrease of $C_{\kappa}^{\pi^*}$ prescribed in Lemma~\ref{lemma:general_C_improvement}, for the construction given there. Moreover, the $\kappa$ tradeoff discussion in Remark~\ref{remark:kappa_tradeoff} applies here as well.

An additional advantage of this new algorithm over PSDP is reduced space complexity. This can be seen from the $1-\xi$ in the denominator in the choice of $k$ in the second part of Theorem~\ref{theorem:kappa_PSDP}. It shows that, since $\xi$ is a strictly decreasing function of $\kappa$, better performance is guaranteed with significantly fewer iterations by increasing $\kappa$. Since the size of stored policy $\Pi$ is linearly dependent on the number of iterations, larger $\kappa$ improves space efficiency. 

%Furthermore, there are two advantages of considering the $\kappa$-PSDP relatively to PSDP \cite{scherrer2014approximate}. First, $\Ckpsdp(\mu,\nu)$ is a decreasing function of $\kappa$. This suggests the performance can be improved as $\kappa$ increase, and as for the $\kappa$-API, we expect a non-trivial value of $\kappa\in[0,1]$ would be the optimal (see Remark \ref{remark:kappa_tradeoff}). Second, the iteration in which the algorithm halts, $k$,  \emph{decreases} as a function of $\kappa$. Thus, less policies should be saved in the list $\Pi$ as $\kappa$ increases, and the memory problem of the PSDP is mitigated. This advantage comes with the higher computational cost of the $\kappa$-greedy policy relatively to the 1-step greedy policy.

%However, there is a disadvantage of this algorithm relatively to the $\kappa$-API. Although the $\kappa$-PSDP handles errors in the $\kappa$-greedy step well and better than $\kappa$-API, it assumes an access to a perfect estimation of $T^{\Pi[i]}_\kappa v_{i-1}$ for every $i$. This assumption is more restrictive than for the $\kappa$-API algorithm. There, imperfect value estimation can be modeled as an error in the greedy step, whereas in the $\kappa$-PSDP algorithm (as in the PSDP algorithm) this form of error will accumulate and change the performance bound significantly.

\section{Discussion and Future Work}
\label{sec:conclusions}
In this work, we introduced and analyzed online and approximate PI methods, generalized to the $\kappa$-greedy policy, an instance of a multiple-step greedy policy. Doing so, we discovered two intriguing properties compared to the well-studied 1-step greedy policy, which we believe can be impactful in designing state-of-the-art algorithms. First,
 successive application of multiple-step greedy policies with a soft, stepsize-based update does not guarantee improvement; see Theorem~\ref{thm:mixture_policies_and_monotonic_improvement}. To mitigate this caveat, we designed an online PI algorithm with a `cautious' improvement operator; see Section~\ref{sec:online_kappa_PI}. 
 
 The second property we find intriguing stemmed from analyzing $\kappa$ generalizations of known approximate hard-update PI methods. In Section~\ref{sec:app_kappa_greedy}, we revealed a performance tradeoff in $\kappa,$ which can be interpreted as a tradeoff between short-horizon bootstrap bias and long-rollout variance. This corresponds to the known $\lambda$ tradeoff in the famous TD($\lambda$).

The two characteristics above lead to new compelling questions. The first regards improvement operators: would a non-monotonically improving PI scheme necessarily not converge to the optimal policy?  Our attempts to generalize existing proof techniques to show convergence in such cases have fallen behind. Specifically, in the online case, Lemma~5.4 in \cite{konda1999actor} does not hold with multiple-step greedy policies. Similar issues arise when trying to form a $\kappa$-CPI algorithm via, e.g., an attempt to generalize Corollary~4.2 in \cite{kakade:02}.  Another research question regards the choice of the parameter $\kappa$ given the tradeoff it poses. One possible direction for answering it could be investigating the concentrability coefficients further and attempting to characterize them for specific MDPs, either theoretically or via estimation. Lastly, a next indisputable step would be to empirically evaluate implementations of the algorithms presented here.   

%Due to impressive empirical success, in recent years, the field of RL has taken a turn and much of its attention is now diverted towards various forms of multiple-step greedy policies. However, currently, the gradual reported improvements heavily rely on trial-and-error, as well as practical engineering decisions.  Therefore, the fundamental theoretical characteristics derived here, we believe, can pave the path for new, soundly-practices.

\section*{Acknowledgments}
This work was partially funded by the Israel Science Foundation under contract 1380/16.

\bibliography{approximate_kappa_greedy_policy}
\bibliographystyle{plain}

\newpage
\appendix

%\section{Proof of Lemma \ref{lemma:kappa_value_difference}}
%\label{supp:kappa_value_difference}
%The proof is a straightforward generalization of the proof in \cite[Lemma~10]{scherrer2013improved}, and \cite[Remark~6.1]{kakade:02}.
%\begin{align*}
%T^{\pi}_\kappa v - v &= (I-\kappa\gamma P^{\pi})^{-1}(r^{\pi} + (1-\kappa)\gamma P^{\pi}v)-v\\
%&= (I-\kappa\gamma P^\pi)^{-1}(r^{\pi} + (1-\kappa)\gamma P^{\pi}v-(I-\kappa\gamma P^\pi)v)\\
%&= (I-\kappa\gamma P^\pi)^{-1}(r^{\pi} + \gamma P^{\pi}v - v )\\
%&= (I-\kappa\gamma P^\pi)^{-1}(T^\pi v - v ).
%\end{align*}

\section{Proof of Theorem \ref{thm:mixture_policies_and_monotonic_improvement}}\label{supp:mixture_policies_and_monotonic_improvement}

We start with a  generalization of a useful lemma; its original version appeared in, e.g., \cite[Lemma~10]{scherrer2013improved}.
\begin{lemma}\label{lemma:kappa_value_difference}
	Let $v$ be a value function, $\pi$ a policy, and $\kappa \in [0,1]$. Then
	\begin{align*}
	T_\kappa^{\pi}v - v = (I-\kappa\gamma P^{\pi})^{-1}(T^{\pi}v - v).
	\end{align*}
\end{lemma}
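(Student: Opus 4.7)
The plan is to verify the identity by direct algebraic manipulation, starting from the definition of $T_\kappa^\pi$ given in the preliminaries and factoring out the common $(I-\kappa\gamma P^\pi)^{-1}$ on the right-hand side.

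First I would recall that, by definition, $T_\kappa^\pi v = (I-\kappa\gamma P^\pi)^{-1}(r^\pi+(1-\kappa)\gamma P^\pi v)$, and rewrite the $v$ appearing on the left of the target identity as $v = (I-\kappa\gamma P^\pi)^{-1}(I-\kappa\gamma P^\pi)v$, which is legitimate because $\kappa\gamma < 1$ so that $I-\kappa\gamma P^\pi$ is invertible (this is also what makes the definition of $T_\kappa^\pi$ meaningful). Then I would subtract the two expressions inside a single application of $(I-\kappa\gamma P^\pi)^{-1}$:
\begin{align*}
T_\kappa^\pi v - v = (I-\kappa\gamma P^\pi)^{-1}\bigl[r^\pi + (1-\kappa)\gamma P^\pi v - (I-\kappa\gamma P^\pi)v\bigr].
\end{align*}

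Next I would simplify the bracket. The $\kappa$-dependent pieces cancel exactly: $(1-\kappa)\gamma P^\pi v + \kappa\gamma P^\pi v = \gamma P^\pi v$, leaving $r^\pi + \gamma P^\pi v - v = T^\pi v - v$ by the definition of $T^\pi$. Putting this back under the inverse gives the claimed identity $T_\kappa^\pi v - v = (I-\kappa\gamma P^\pi)^{-1}(T^\pi v - v)$.

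There is no real obstacle here; the lemma is purely a rewriting of the definition of $T_\kappa^\pi$, parallel to the analogous identity for $T^\pi$ in \cite{scherrer2013improved}. The only mild point worth flagging in the write-up is the invertibility of $I-\kappa\gamma P^\pi$, which is immediate since $\kappa\gamma \in [0,\gamma) \subset [0,1)$ and $P^\pi$ is stochastic, so $\kappa\gamma P^\pi$ has spectral radius strictly less than one and the Neumann series converges.
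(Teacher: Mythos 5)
Your proposal is correct and follows essentially the same route as the paper's proof: pull the subtracted $v$ under the inverse as $(I-\kappa\gamma P^{\pi})^{-1}(I-\kappa\gamma P^{\pi})v$, cancel the $\kappa$-dependent terms to obtain $r^{\pi}+\gamma P^{\pi}v - v = T^{\pi}v - v$, and conclude. Your extra remark on the invertibility of $I-\kappa\gamma P^{\pi}$ is a harmless (and reasonable) addition that the paper leaves implicit.
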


\begin{proof}
The proof is a straightforward generalization of the proof in \cite[Lemma~10]{scherrer2013improved}, and \cite[Remark~6.1]{kakade:02}.
\begin{align*}
T^{\pi}_\kappa v - v &= (I-\kappa\gamma P^{\pi})^{-1}(r^{\pi} + (1-\kappa)\gamma P^{\pi}v)-v\\
&= (I-\kappa\gamma P^\pi)^{-1}(r^{\pi} + (1-\kappa)\gamma P^{\pi}v-(I-\kappa\gamma P^\pi)v)\\
&= (I-\kappa\gamma P^\pi)^{-1}(r^{\pi} + \gamma P^{\pi}v - v )\\
&= (I-\kappa\gamma P^\pi)^{-1}(T^\pi v - v ).
\end{align*}
\end{proof}

This elementary lemma relates the `$\kappa$-advantage' to the 1-step advantage and is useful to prove Theorem \ref{thm:mixture_policies_and_monotonic_improvement} and some following results.

First, since $\pi(\alpha,\kappa)= (1-\alpha)\pi+\alpha \pi_\kappa$, we have that
\begin{align*}
&P^{\pi(\alpha,\kappa)} = (1-\alpha)P^{\pi}+\alpha P^{\pi_\kappa},\\
&r^{\pi(\alpha,\kappa)} = (1-\alpha)r^{\pi}+\alpha r^{\pi_\kappa};
\end{align*}
thus, since $v^\pi$ is the fixed-point of $T^\pi,$
\begin{align}
\label{eq: split}
T^{\pi(\alpha,\kappa)}v^\pi = (1-\alpha)T^{\pi}v^\pi+\alpha T^{\pi_\kappa}v^\pi = (1-\alpha) v^\pi+\alpha T^{\pi_\kappa}v^\pi.
\end{align}
Using this, we now prove the first statement of Theorem \ref{thm:mixture_policies_and_monotonic_improvement}.
\begin{align}
v^{\pi(\alpha,\kappa)}-v^{\pi} &= (I-\gamma P^{\pi(\alpha,\kappa)})^{-1}(T^{\pi(\alpha,\kappa)}v^\pi-v^\pi)\nonumber\\
&= \alpha(I-\gamma P^{\pi(\alpha,\kappa)})^{-1}(T^{\pi_\kappa}v^\pi-v^\pi)\nonumber\\
&= \alpha(I-\gamma P^{\pi(\alpha,\kappa)})^{-1}(I-\kappa\gamma P^{\pi_\kappa})(I-\kappa\gamma P^{\pi_\kappa})^{-1}(T^{\pi_\kappa}v^\pi-v^\pi)\nonumber\\
&= \alpha(I-\gamma P^{\pi(\alpha,\kappa)})^{-1}(I-\kappa\gamma P^{\pi_\kappa})(T_\kappa^{\pi_\kappa}v^{\pi}-v^{\pi})\nonumber\\
&= \alpha(I-\gamma P^{\pi(\alpha,\kappa)})^{-1}(I-\gamma P^{\pi(\alpha,\kappa)}+\gamma(P^{\pi(\alpha,\kappa)}-\kappa P^{\pi_\kappa}))(T_\kappa^{\pi_\kappa}v^{\pi}-v^{\pi})\nonumber\\
&= \alpha(I+\gamma(I-\gamma P^{\pi(\alpha,\kappa)} )^{-1}((1-\alpha)P^{\pi}+(\alpha-\kappa) P^{\pi_\kappa})(T_\kappa^{\pi_\kappa}v^{\pi}-v^{\pi}). \label{eq: v pi alpha kappa diff}
\end{align}

For the first relation we use Lemma~\ref{lemma:kappa_value_difference} with $\kappa=1$ and the fact that, by definition, ${T^{\pi(\alpha,\kappa)}_{\kappa=1} v^{\pi(\alpha,\kappa)} = v^{\pi(\alpha,\kappa)}}$. For the second relation we use \eqref{eq: split}, for the fourth we again use Lemma~\ref{lemma:kappa_value_difference}, and for the last relation we use that $P^{\pi(\alpha,\kappa)}-\kappa P^{\pi_\kappa} = (1-\alpha)P^\pi+(\alpha-\kappa) P^{\pi_\kappa}$.

Next, we show that for $\alpha\geq \kappa,$ all terms in \eqref{eq: v pi alpha kappa diff} are component-wise bigger than or equal to zero. First, using a Taylor expansion, ${(I-\gamma P^{\pi(\alpha,\kappa)})^{-1}=\sum_t \gamma^t (P^{\pi(\alpha,\kappa)})^t\geq 0}$ component-wise,  since it is a weighted sum of transition matrices with positive weights. The same applies for ${(1-\alpha)P^{\pi}+(\alpha-\kappa) P^{\pi_\kappa},}$ when $\alpha\geq \kappa.$  Thus, for $\alpha\geq \kappa$, ${(I+\gamma(I-\gamma P^{\pi(\alpha,\kappa)})^{-1}((1-\alpha)P^{\pi}+(\alpha-\kappa) P^{\pi_\kappa})\geq 0}$ component-wise. Lastly, since ${\pi_\kappa\in \G_\kappa(v^{\pi})}$,
$
{v^{\pi} = T_\kappa^{\pi} v^{\pi} \leq   T_\kappa v^{\pi} = T^{\pi_\kappa}_\kappa v^{\pi},}
$
with equality holding if and only if $v^{\pi}=v^*$ \cite[Lemma~3]{efroni2018beyond}. Thus, $T_\kappa^{\pi_\kappa}v^{\pi}-v^{\pi}\geq 0$. This concludes the proof for the first statement, for the $\kappa$-greedy policy.

For the $\kappa$-greedy policy part of the proof for the second statement, we now provide more details on the counterexample presented in Section~\ref{sec:monotonicity_and_soft}. For convenience, we bring the MDP example here again in Fig.~\ref{fig:thm_mixture_policies}.
%Given $\alpha\in (0,\kappa)$ we construct an MDP and a policy, $\pi_0$, such that the mixture of $\pi_0$ and its $\kappa$-greedy policy, $\pi(\alpha,\kappa)=(1-\alpha)\pi_0+\alpha \pi_\kappa$, is not monotonically better then $\pi_0$. 
%By definition, $\forall v,~\pi^* \in \G_{\kappa=1}(v).$ We call this policy the ``confident'' policy.
%Obviously, for any discount factor $\gamma \in (0,1)$, $\pi^*(s_0)=a_1$ and $\pi^*(s_1)=a_1.$ For the counter example, consider the ``hesitant'' policy $\pi_0(s) \equiv a_0 ~ \forall s$. The value of this policy in state $s_0$ is $v^{\pi_0}(s_0)=0.$
Consider the mixture of the ``hesitant'' and ``confident'' policies: $\pi(\alpha,\kappa=1)=(1-\alpha)\pi_0+\alpha\pi(\alpha,\kappa=1)$. It can be shown that its value is
\begin{align*}
&v^{\pi(\alpha,\kappa=1)}(s_0)=\frac{\gamma\alpha}{1-\gamma(1-\alpha)}v^{\pi(\alpha,\kappa=1)}(s_1),\\
&v^{\pi(\alpha,\kappa=1)}(s_1)=\gamma\frac{-c(1-\alpha)+\alpha}{1-\gamma}.
\end{align*}
Thus, we deduce that for any $\alpha\in(0,1)$ and 
\begin{equation}
\label{eq: c lower bound app}
c>\frac{\alpha}{1-\alpha},
\end{equation}
$v^{\pi(\alpha,\kappa=1)}(s_0)<v^{\pi}(s_0)=0$, i.e, the mixture policy, $\pi(\alpha,\kappa=1)$, is not strictly better then $\pi_0.$  

 We now find the conditions to ensure that the $\kappa$-greedy policy w.r.t. $v^{\pi_0}$ is the optimal policy; this will generalize the above construction, made for $\kappa=1,$ to any $\kappa\in[0,1].$ Observe that for any $c>0$ and $\kappa$ it holds that ${\pi_\kappa(s_1)=a_1=\pi^*(s_1)},$ where $\pi_\kappa \in \G_\kappa(v^{\pi_0}).$ Thus, we solely need to consider the policy which is different than $\pi^*$ at state $s_0$, $\tilde{\pi}(s_0)=a_0 \neq \pi^*(s_0)$ and $\tilde{\pi}(s_1)=\pi^*(s_1)$. To find which condition ensures the $\kappa$-greedy policy w.r.t. $v^{\pi_0}$ is $\pi^*$ (and not $\tilde{\pi}$), we require
\begin{align}
\label{eq:condition_kappa_optimal_policy}
T^{\pi^*}_\kappa v^{\pi_0} (s_0) \geq T^{\tilde{\pi}}_\kappa v^{\pi_0} (s_0).
\end{align}
Satisfying this condition insures that $\pi^*\in\G_\kappa(v^{\pi_0}).$ By definition,
\begin{align}
&T^{\pi^*}_\kappa v^{\pi_0} (s_0) 
= \mathbb{E}^{\pi^*}\left[\sum_{t} (\kappa\gamma)^t (r(s_t,\pi^*(s_t))+\gamma(1-\kappa) v^{\pi_0}(s_{t+1})\mid s_{t=0}=s_0\right]\nonumber \\
=&(\kappa\gamma)^0\left(\gamma(1-\kappa)v^{\pi_0}(s_1))\right) +(\kappa\gamma)^1  \left(\gamma(1-\kappa)v^{\pi_0}(s_2)\right) + \sum_{t=2}^{\infty} (\kappa\gamma)^t(1+v^{\pi_0}(s_2))\nonumber\\ =&(\kappa\gamma)^0\left(\gamma(1-\kappa)(-\frac{\gamma c}{1-\gamma})\right) +(\kappa\gamma)^1  \left( \gamma(1-\kappa)\frac{1}{1-\gamma}\right) + \sum_{t=2}^{\infty} (\kappa\gamma)^t(1+\gamma(1-\kappa)\frac{1}{1-\gamma})\nonumber\\
=&\gamma(1-\kappa)(-\frac{\gamma c}{1-\gamma}) +\kappa\gamma \frac{\gamma}{1-\gamma}.\label{eq:condition_kappa_optimal_policy_LHS}
\end{align}
Similarly, and since $\tilde{\pi}(s_0)=a_0$, we have that
\begin{align}
T^{\tilde{\pi}}_\kappa v^{\pi_0} (s_0) = 0 \label{eq:condition_kappa_optimal_policy_RHS}
\end{align}
Plugging \eqref{eq:condition_kappa_optimal_policy_LHS} and \eqref{eq:condition_kappa_optimal_policy_RHS} into \eqref{eq:condition_kappa_optimal_policy}, we get the condition
\begin{equation}
\label{eq: c upper bound app}
c\leq\frac{\kappa}{1-\kappa}.
\end{equation}
To finalize the counterexample and show that strict policy improvement is not guaranteed, we choose $c$ such that both \eqref{eq: c lower bound app} and \eqref{eq: c upper bound app} are satisfied. Such feasible choice exists when $\alpha<\kappa,$ due to the monotonicity of $\frac{x}{1-x}.$

The monotonic improvement of $\pi(\alpha,h)$ for $\alpha=1$ was proved in \cite[Lemma~1]{efroni2018beyond}. To build the counter example, again consider the Tightrope MDP. Let $\pi_0$ be the `hesitant' policy. For any $\gamma\in(0,1)$, $h>1$, it holds that $\pi^*\in \G_h(v^{\pi_0})$. Thus, it suffices to satisfy \eqref{eq: c lower bound app} alone to show that $\pi(\alpha,h)=(1-\alpha)\pi_0+\alpha \pi^*$ is not monotonically better then $\pi$. Large enough $c$ value ensures that.
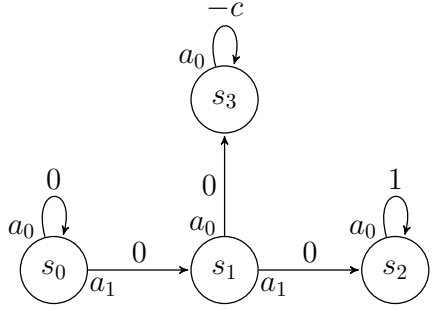
\begin{figure}
\centering
\resizebox{2.3in}{!}{
\begin{tikzpicture}[->,>=stealth',shorten >=1pt,auto,node distance=2.8cm,
                    semithick, state/.style={circle, draw, minimum size=1.1cm}]
\tikzstyle{every state}=[thick]
]

\node[state] (S0) {\Large $s_0$};
\node[state] (S1) [right of=S0] {\Large $s_1$};
\node[state] (S2) [right of=S1] {\Large $s_2$};
\node[state] (S3) [above of=S1] {\Large $s_3$};

\path (S0) edge  [loop above] node[pos=0.05,left]{\Large $a_0$} node {\Large $0$} (S0)
           edge     node[pos=0.15,below ]{\Large $a_1$}         node {\Large $0$} (S1)
	  (S1) edge     node[pos=0.1,left]{\Large $a_0$} node {\Large $0$} (S3)
           edge     node[pos=0.15,below ]{\Large $a_1$}         node {\Large $0$} (S2)
      (S2) edge [loop above] node[pos=0.05,left]{\Large $a_0$} node {\Large $1$} (S2)
      (S3) edge  [loop above] node[pos=0.05,left]{\Large $a_0$} node {\Large $-c$} (S3);
           
\end{tikzpicture}
}
\caption{The Tightrope Walking MDP used in the proof of Theorem~\ref{thm:mixture_policies_and_monotonic_improvement}. This class of MDPs is parametrized by $c>0$.}
\label{fig:thm_mixture_policies}
\end{figure}

\section{Proof of Lemma \ref{lemma:online_kappa_PI_contraction}}\label{supp:online_kappa_PI_contraction}
We start by showing the contraction property of $H^\pi_\kappa$. Let $(s,a)$ be a fixed state-action pair, $Q_1,Q_2 \in \mathbb{R}^{2|\mathcal{S}\times\mathcal{A}|}$. For any state-action pair $(s,a),$ $Q_i(s,a)$ is a two-component vector.  We denote its first component by $q_i(s,a)$ and its second component by $q_{i,\kappa}(s,a)$. See that 
\begin{align}
||q_{1}-q_{2}||_\infty &\leq ||Q_{1}-Q_{2}||_\infty \label{eq:q_inf_norm}, \\
||q_{1,\kappa}-q_{2,\kappa}||_\infty &\leq ||Q_{1}-Q_{2}||_\infty \label{eq:q_kappa_inf_norm} .
\end{align}
Taking a component-wise absolute value, we have that
\begin{align*}
&|H^\pi_\kappa Q_1-H^\pi_\kappa Q_2|(s,a)\\
=&|H^\pi_\kappa (q_1,q_{1,\kappa})-H^\pi_\kappa (q_2,q_{2,\kappa})|(s,a)\\
=&\gamma\begin{bmatrix} |\mathbb{E}_{s',a^{\pi}}\left[q_1(s',a^{\pi}))-q_2(s',\pi(s'))\right]| \\
			    |(1-\kappa) \mathbb{E}_{s',a^{\pi}}\left[q_1(s',a^{\pi})-q_2(s',a^{\pi}))\right] +
			     \kappa \mathbb{E}_{s'}[\max_{a'}q_{1,\kappa}(s',a')- \max_{a'}q_{2,\kappa}(s',a')]| \end{bmatrix},
\end{align*}
where $s'\sim P(\cdot \mid s,a), a^{\pi} \sim \pi(s')$.

Let us focus on the first component of the above vector. We have that
\begin{align*}
&\gamma |\mathbb{E}_{s',a^{\pi}}\left[q_1(s',a^{\pi})-q_2(s',a^{\pi})\right]|\leq \gamma ||q_1-q_2||_\infty \leq\gamma ||Q_1-Q_2||_\infty,
\end{align*}
where we used the standard bound, $|\mathbb{E}[X]|\leq ||X||_\infty$ and \eqref{eq:q_inf_norm}.
Similarly, for the second component, we have that
\begin{align*}
&\gamma\left|\left((1-\kappa) \mathbb{E}_{s',a^{\pi}}\left[q_1(s',a^{\pi})-q_2(s',a^{\pi})\right] +
					  \kappa \mathbb{E}_{s',a}[\max_{a'}q_{1,\kappa}(s',a')- \max_{a'}q_{2,\kappa}(s',a')]\right)\right|\\
\leq&\gamma\left((1-\kappa) |\mathbb{E}_{s',a^{\pi}}\left[q_1(s',a^{\pi})-q_2(s',a^{\pi})\right]| +
					   \kappa \mathbb{E}_{s',a}[|\max_{a'}q_{1,\kappa}(s',a')- \max_{a'}q_{2,\kappa}(s',a')|]\right)\\	
\leq&\gamma\left((1-\kappa) |\mathbb{E}_{s',a^{\pi}}\left[q_1(s',a^{\pi})-q_2(s',a^{\pi})\right]|  +
					   \kappa \mathbb{E}_{s',a'}[\max_{a'}|q_{1,\kappa}(s',a')-q_{2,\kappa}(s',a')|]\right)\\		
\leq&\gamma\left((1-\kappa) ||q_1-q_2||_\infty +
					   \kappa ||q_{1,\kappa}-q_{2,\kappa}||_\infty \right)\\		
\leq&\gamma\left((1-\kappa) ||Q_1-Q_2||_\infty +
					   \kappa ||Q_1-Q_2||_\infty\right) = \gamma||Q_1-Q_2||_\infty,
\end{align*}
where for the first relation we used the triangle inequality, for the second we used the standard bound $|\max_{x\in \mathcal{X}}f(x)-\max_{x\in \mathcal{X}}g(x)|\leq \max_{x\in \mathcal{X}}| f(x)-g(x)|$, for the third we used the bound $|\mathbb{E}[X]|\leq ||X||_\infty,$ and for the last \eqref{eq:q_inf_norm}-\eqref{eq:q_kappa_inf_norm}.
 
From the above we get that
\begin{align*}
||H^\pi_\kappa Q_1-H^\pi_\kappa Q_2||_\infty \leq \gamma||Q_1-Q_2||_\infty;
\end{align*}
i.e., the operator $H^\pi_\kappa$ is a $\gamma$ contraction mapping in the max-norm.

It is clear that the fixed point of the first component is $q^\pi$. The fixed point of the second component is the fixed point of the optimal Bellman operator of the $\kappa\gamma$-discounted, reward shaped, surrogate MDP (see Remark \ref{rem: applying T kappa}). Its solution is, by construction, $q_\kappa^\pi$ (see \eqref{eq: q pi kappa def}).

\section{Proof of Theorem~\ref{theorem:online_kappa_PI_convergence}}\label{supp:online_kappa_PI_convergence}

The proof of Theorem~\ref{theorem:online_kappa_PI_convergence} follows the proof in \cite[Section~5.1]{perkins2013asynchronous}, with several generalizations given below.

\subsection{Lipschitzness of the Slow Time Scale Fixed-Point}
Before following the main lemmas in \cite{perkins2013asynchronous} and showing they hold for Online $\kappa$-PI (Algorithm~\ref{alg:async_kappaPI}), we shall show that the solution of the fast-time scale ODE (found using a fixed-point argument), $[q^\pi,q^\pi_\kappa]$,  is Lipschitz-continuous in the slow time-scale iterate, $\pi$.
 
\begin{lemma}
	Let $\pi: \mathcal{S} \times \mathcal{A} \rightarrow [0,1]$ be a stochastic policy. For any $\pi_1,\pi_2$ and $q_1,q_2 \in \mathbb{R}^{|\mathcal S \times \mathcal{A}|}$, let
	\begin{align*}
	&||\pi_1-\pi_2||_{\infty}\eqdef \max_{s} \sum_a |\pi_1(a \mid s)-\pi_2(a \mid s)|,\\
	&||q_1-q_2||_{\infty}\eqdef \max_{s,a} |q_1(s,a)-q_2(s,a)|.
	\end{align*}
	Then $q^\pi$ and $q^\pi_\kappa$ are Lipschitz-continuous in $\pi$ in the max-norm; i.e., 
	\begin{align*}
	&||q^{\pi_1}-q^{\pi_2}||_{\infty}\leq L_a ||\pi_1-\pi_2||_{\infty},\\
	&||q_\kappa^{\pi_1}-q_\kappa^{\pi_2}||_{\infty}\leq L_b ||\pi_1-\pi_2||_{\infty},\\
	\end{align*}
	where $L_a,L_b >0,$ are functions of $\gamma,\kappa,R_{\max}$.
\end{lemma}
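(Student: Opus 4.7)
The plan is to exploit fixed-point characterizations on both objects. Both $q^\pi$ and $q_\kappa^\pi$ are fixed points of contractions (of the Bellman operator $T^\pi$ for the first, and of the surrogate optimal Bellman operator $T^*_\pi$ with $v^\pi$-shaped reward for the second, by Lemma~\ref{lemma:online_kappa_PI_contraction}). Writing the differences $q^{\pi_1} - q^{\pi_2}$ and $q_\kappa^{\pi_1} - q_\kappa^{\pi_2}$ as differences of Bellman applications, adding and subtracting a middle term, and then absorbing the contractive part into the left-hand side will give the desired Lipschitz constants. The key quantitative input is that $|q^\pi(s,a)| \le R_{\max}/(1-\gamma)$ uniformly in $\pi$.

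First I would handle $q^{\pi}$. Using $q^{\pi_i}(s,a) = r(s,a) + \gamma\,\mathbb{E}_{s'}[v^{\pi_i}(s')]$, the key identity is
\begin{equation*}
v^{\pi_1}(s) - v^{\pi_2}(s) = \sum_a \bigl(\pi_1(a\mid s)-\pi_2(a\mid s)\bigr)\,q^{\pi_1}(s,a) + \sum_a \pi_2(a\mid s)\bigl(q^{\pi_1}(s,a)-q^{\pi_2}(s,a)\bigr).
\end{equation*}
The first term is bounded by $\tfrac{R_{\max}}{1-\gamma}\|\pi_1-\pi_2\|_\infty$ (using the chosen $\ell_1$-type norm on policies and the uniform bound on $q^{\pi_1}$), and the second is bounded by $\|q^{\pi_1}-q^{\pi_2}\|_\infty$. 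Substituting back into the Bellman equation yields
\begin{equation*}
\|q^{\pi_1}-q^{\pi_2}\|_\infty \;\le\; \gamma\,\tfrac{R_{\max}}{1-\gamma}\,\|\pi_1-\pi_2\|_\infty + \gamma\,\|q^{\pi_1}-q^{\pi_2}\|_\infty,
\end{equation*}
and rearranging gives $L_a = \tfrac{\gamma R_{\max}}{(1-\gamma)^2}$.

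For $q_\kappa^\pi$, I would set $T^*_\pi$ to be the optimal Bellman operator of the surrogate $\kappa\gamma$-MDP with reward $r(s,a) + \gamma(1-\kappa)\mathbb{E}_{s'}[v^\pi(s')]$. By Lemma~\ref{lemma:online_kappa_PI_contraction}, $q_\kappa^{\pi_i}$ is the fixed point of $T^*_{\pi_i}$, and $T^*_\pi$ is a $\kappa\gamma$-contraction in the max norm. Write
\begin{equation*}
q_\kappa^{\pi_1} - q_\kappa^{\pi_2} = \bigl(T^*_{\pi_1} q_\kappa^{\pi_1} - T^*_{\pi_1} q_\kappa^{\pi_2}\bigr) + \bigl(T^*_{\pi_1} q_\kappa^{\pi_2} - T^*_{\pi_2} q_\kappa^{\pi_2}\bigr).
\end{equation*}
The first summand is bounded by $\kappa\gamma\,\|q_\kappa^{\pi_1}-q_\kappa^{\pi_2}\|_\infty$ by contraction. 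The second summand differs only through the shaped reward, so its max-norm is at most $\gamma(1-\kappa)\,\|v^{\pi_1}-v^{\pi_2}\|_\infty$. Bounding $\|v^{\pi_1}-v^{\pi_2}\|_\infty$ using the step above ($\le L_a\|\pi_1-\pi_2\|_\infty + \tfrac{R_{\max}}{1-\gamma}\|\pi_1-\pi_2\|_\infty$), absorbing the contractive term, and dividing by $1-\kappa\gamma$ yields an explicit $L_b$ in terms of $\gamma,\kappa,R_{\max}$.

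The main (only mild) obstacle is the bookkeeping around the mixed policy norm: one must check that $\|\pi_1-\pi_2\|_\infty = \max_s\sum_a|\pi_1(a\mid s)-\pi_2(a\mid s)|$ correctly upper-bounds quantities of the form $\sum_a(\pi_1(a\mid s)-\pi_2(a\mid s))f(s,a)$ by $\|f\|_\infty\,\|\pi_1-\pi_2\|_\infty$, which is immediate by Hölder's inequality. Everything else is a two-line contraction argument, and the resulting constants $L_a,L_b$ depend only on $\gamma,\kappa,R_{\max}$ as claimed.
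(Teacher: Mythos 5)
Your proof is correct, and it rests on the same basic mechanism as the paper's: characterize each object as the fixed point of a contraction ($T^\pi$ with modulus $\gamma$, and the surrogate optimal Bellman operator with modulus $\kappa\gamma$), add and subtract a middle term, bound the non-contractive remainder by $\|\pi_1-\pi_2\|_\infty$ times a uniform constant, and absorb the contractive part. Two organizational differences are worth noting. First, for $q^\pi$ you control $\|v^{\pi_1}-v^{\pi_2}\|_\infty$ through the policy-mixing identity $v^{\pi}(s)=\sum_a\pi(a\mid s)q^{\pi}(s,a)$ together with H\"older and the uniform bound $\|q^{\pi_1}\|_\infty\le R_{\max}/(1-\gamma)$, whereas the paper bounds $\|T^{\pi_1}v^{\pi_2}-T^{\pi_2}v^{\pi_2}\|_\infty$ by splitting it into a reward-difference term (giving $R_{\max}$) and a transition-difference term (giving $\gamma R_{\max}/(1-\gamma)$); both routes yield the same $L_a=\gamma R_{\max}/(1-\gamma)^2$ and the same intermediate bound $\|v^{\pi_1}-v^{\pi_2}\|_\infty\le R_{\max}/(1-\gamma)^2\,\|\pi_1-\pi_2\|_\infty$. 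Second, for $q^\pi_\kappa$ you run the contraction argument directly at the $q$-level, using that the two surrogate operators differ only in the shaped reward by $\gamma(1-\kappa)\mathbb{E}_{s'}[v^{\pi_1}-v^{\pi_2}]$, while the paper first proves Lipschitzness of the surrogate optimal value $T_\kappa v^{\pi}$ (via its $v$-level operator $\bar T^\pi_\kappa$) and then transfers to $q^\pi_\kappa$ through its defining equation; your version is slightly more direct and produces the identical constant $L_b=\frac{\gamma(1-\kappa)}{1-\kappa\gamma}\cdot\frac{R_{\max}}{(1-\gamma)^2}$. The only implicit step to keep in mind is that the absorption $\|\cdot\|_\infty\le\gamma\|\cdot\|_\infty+\dots$ requires the norms to be finite, which holds here because the state and action spaces are finite and rewards are bounded.
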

\begin{proof}
	We start by proving that $||v^{\pi_1}-v^{\pi_2}||_{\infty}\leq L ||\pi_1-\pi_2||_{\infty}$, i.e, $v^\pi$ is Lipschitz in $\pi$.
	\begin{align}
	||v^{\pi_1}-v^{\pi_2}||_{\infty} &=  
	||T^{\pi_1}v^{\pi_1}-T^{\pi_2}v^{\pi_2}||_{\infty}\nonumber\\
	&\leq  ||T^{\pi_1}v^{\pi_1}-T^{\pi_1}v^{\pi_2} + T^{\pi_1}v^{\pi_2}-T^{\pi_2}v^{\pi_2}||_{\infty} \nonumber \\
	&\leq  ||T^{\pi_1}v^{\pi_1}-T^{\pi_1}v^{\pi_2}||_{\infty} +||T^{\pi_1}v^{\pi_2}-T^{\pi_2}v^{\pi_2}||_{\infty}\nonumber\\
	&\leq \gamma||v^{\pi_1}-v^{\pi_2}||_{\infty} +||T^{\pi_1}v^{\pi_2}-T^{\pi_2}v^{\pi_2}||_{\infty}, \label{eq: online lipchitz main bound}
	\end{align}
	where the last relation is due to the fact $T^{\pi_1}$ is a $\gamma$-contraction. We continue by calculating ${|T^{\pi_1}v^{\pi_2}-T^{\pi_2}v^{\pi_2}|(s)}$.
	\begin{align}
	|T^{\pi_1}v^{\pi_2}-T^{\pi_2}v^{\pi_2}|(s) 
	&\leq |\sum_a \big(\pi_1(a\mid s)-\pi_2(a\mid s)\big)r(s,a)|+\gamma |\sum_{s'}(P^{\pi_1}_{s',s}-P^{\pi_2}_{s',s})v^{\pi_2}(s')|. \label{eq: vpi2 bound 2 terms}
	\end{align}
	
	We bound each term in \eqref{eq: vpi2 bound 2 terms}. The first term can be bounded by,
	\begin{align}
	|\sum_a \big(\pi_1(a\mid s)-\pi_2(a\mid s)\big)r(s,a)| &\leq  \sum_a |\big(\pi_1(a\mid s)-\pi_2(a\mid s)\big)| |r(s,a)| \nonumber \\
	&\leq R_{\mathrm{max}} \max_s  \sum_a |(\pi_1(a\mid s)-\pi_2(a\mid s))|  \nonumber \\
	&= R_{\mathrm{max}} ||\pi_1-\pi_2||_{\infty}. \label{eq: vpi2 bound 2 terms first term}
	\end{align}
	In the first relation we used the triangle inequality and in the second inequality the fact that $|r(s,a)|$ is bounded by $R_{\mathrm{max}}$.

	The second term in \eqref{eq: vpi2 bound 2 terms} can be bounded by,
	\begin{align}
	|\sum_{s'}(P^{\pi_1}_{s',s}-P^{\pi_2}_{s',s})v^{\pi_2}(s')|&=|\sum_{s',a}P(s' \mid s,a)(\pi_1(a\mid s)-\pi_2(a\mid s)) v^{\pi_2}(s')|\nonumber \\
	&\leq \sum_a \sum_{s'} P(s' \mid s,a)|(\pi_1(a\mid s)-\pi_2(a\mid s)) v^{\pi_2}(s')|\nonumber \\
	&\leq \sum_a\sum_{s'} P(s' \mid s,a)|(\pi_1(a\mid s)-\pi_2(a\mid s))| |v^{\pi_2}(s')|\nonumber \\
	&\leq \sum_a\sum_{s'} P(s' \mid s,a)|(\pi_1(a\mid s)-\pi_2(a\mid s))| \frac{R_{\mathrm{max}}}{1-\gamma}\nonumber \\
	&= \sum_a  |(\pi_1(a\mid s)-\pi_2(a\mid s))| \frac{R_{\mathrm{max}}}{1-\gamma} \sum_{s'}  P(s' \mid s,a)\nonumber \\
	&= \sum_a  |(\pi_1(a\mid s)-\pi_2(a\mid s))| \frac{R_{\mathrm{max}}}{1-\gamma} \nonumber \\
	&\leq \max_s \sum_a  |(\pi_1(a\mid s)-\pi_2(a\mid s))| \frac{R_{\mathrm{max}}}{1-\gamma}  =  \frac{R_{\mathrm{max}}}{1-\gamma} ||\pi_1-\pi_2||_{\infty} \label{eq: vpi2 bound 2 terms second term}
	\end{align}
	In the first relation we used the triangle inequality, in the forth relation we used the fact that for any $\pi$ and $s$, $v^{\pi}(s)\leq \frac{R_{\mathrm{max}}}{1-\gamma}$, and in the fifth relation the fact that for any $s$ and $a$, $P(s'\mid s,a)$ is a probability function, thus sums to one.

	Using \eqref{eq: vpi2 bound 2 terms first term}, \eqref{eq: vpi2 bound 2 terms second term} to bound \eqref{eq: vpi2 bound 2 terms} yields that for any $s$,
	\begin{align*}
	|T^{\pi_1}v^{\pi_2}-T^{\pi_2}v^{\pi_2}|(s) \leq \frac{R_{\mathrm{max}}}{1-\gamma}||\pi_1-\pi_2||_{\infty}.
	\end{align*}
	
	Thus, $||T^{\pi_1}v^{\pi_2}-T^{\pi_2}v^{\pi_2}||_{\infty}\leq \frac{R_{\mathrm{max}}}{1-\gamma}||\pi_1-\pi_2||_{\infty}$. Plugging this bound into \eqref{eq: online lipchitz main bound} and rearranging yields,
	\begin{align}
	||v^{\pi_1}-v^{\pi_2}||_{\infty}\leq  \frac{R_{\max}}{(1-\gamma)^2}||\pi_1-\pi_2||_{\infty}, \label{eq:lip_v_pi}
	\end{align}
	giving that $L =  \frac{R_{\max}}{(1-\gamma)^2}$. 
	
	We continue by analysing $||T_\kappa v^{\pi_1}-T_\kappa v^{\pi_2}||_{\infty}$. We remind the reader that $T_\kappa v^{\pi}$ satisfies the following fixed-point equation:
	\begin{align*}
	T_\kappa v^{\pi}(s) &= \max_a\ \left[ r(s,a)+\gamma(1-\kappa)\sum_{s'} P(s'\mid s,a)v^{\pi}(s')+\kappa\gamma \sum_{s'} P(s'\mid s,a)(T_\kappa v^{\pi})(s')\right]\\ 
	&\eqdef \bar{T}^\pi_{\kappa}T_\kappa v^{\pi} (s),
	\end{align*}
	where we defined the `optimal' Bellman operator of the surrogate MDP to be $\bar{T}^\pi_{\kappa}$ (see Remark \ref{rem: applying T kappa}). Furthermore, since this operator is the optimal Bellman operator of a $\kappa\gamma$-discounted MDP, it is a $\kappa\gamma$ contraction mapping.  We now use a similar technique as the above to show ${||T_\kappa v^{\pi_1}-T_\kappa v^{\pi_2}||_{\infty}\leq L_\kappa || \pi_1-\pi_2||_\infty }$, i.e, $T_\kappa v^{\pi}$ is Lipschitz in $\pi$.
	\begin{align*}
	||T_\kappa v^{\pi_1}-T_\kappa v^{\pi_2}||_{\infty} &= ||\bar{T}^{\pi_1}_{\kappa} T_\kappa v^{\pi_1}-\bar{T}^{\pi_2}_{\kappa}T_\kappa v^{\pi_2}||_{\infty}\\
	&\leq  ||\bar{T}^{\pi_1}_{\kappa} T_\kappa v^{\pi_1}-\bar{T}^{\pi_1}_{\kappa}T_\kappa v^{\pi_2}||_{\infty} + ||\bar{T}^{\pi_1}_{\kappa}T_\kappa v^{\pi_2}-\bar{T}^{\pi_2}_{\kappa}T_\kappa v^{\pi_2}||_{\infty}\\
	&\leq  \kappa\gamma||T_\kappa v^{\pi_1}-T_\kappa v^{\pi_2}||_{\infty} + ||\bar{T}^{\pi_1}_{\kappa}T_\kappa v^{\pi_2}-\bar{T}^{\pi_2}_{\kappa}T_\kappa v^{\pi_2}||_{\infty}.
	\end{align*}
	We now bound the second term.
	\begin{align*}
	|\bar{T}^{\pi_1}_{\kappa}T_\kappa v^{\pi_2}-\bar{T}^{\pi_2}_{\kappa}T_\kappa v^{\pi_2}|(s)
	&\leq \max_a \gamma(1-\kappa)| \sum_{s'} P(s'\mid s,a)(v^{\pi_1}-v^{\pi_2})(s')|\\
	&\leq \max_a \gamma(1-\kappa)\sum_{s'} P(s'\mid s,a)|v^{\pi_1}-v^{\pi_2}|(s')\\
	&\leq \max_a \gamma(1-\kappa)\sum_{s'} P(s'\mid s,a)||v^{\pi_1}-v^{\pi_2}||_\infty = \gamma(1-\kappa)||v^{\pi_1}-v^{\pi_2}||_\infty,
	\end{align*}
	where we used the definition of $\bar{T}^{\pi}_{\kappa}$ and the identity $|\max_{x\in \mathcal{X}} f(x)-\max_{x\in \mathcal{X}} g(x)|\leq \max_{x\in \mathcal{X}}|f(x)-g(x)|$ in the first relation and the triangle inequality in the second. 
	
	Using \eqref{eq:lip_v_pi}, we have
	\begin{align*}
	||T_\kappa v^{\pi_1}-T_\kappa v^{\pi_2}||_{\infty}&\leq \frac{\gamma(1-\kappa)}{1-\kappa\gamma}||v^{\pi_1}-v^{\pi_2}||_\infty\\
	&\leq \frac{\gamma(1-\kappa)}{1-\kappa\gamma}\frac{R_{\max}}{(1-\gamma)^2} ||\pi_1-\pi_2||_\infty.
	\end{align*}

	These results transform to results on $q^\pi$ and $q^\pi_\kappa$ as follows. Starting with $q^\pi$,
	\begin{align*}
	|q^{\pi_1}-q^{\pi_2}|(s,a) &= |r(s,a)+\gamma\sum_{s'}P(s'\mid s,a)v^{\pi_1} -r(s,a)-\gamma\sum_{s'}P(s'\mid s,a)v^{\pi_2}|\\ 
	&= \gamma|\sum_{s'}P(s'\mid s,a)(v^{\pi_1} -v^{\pi_2})|\leq \gamma ||v^{\pi_1} -v^{\pi_2}||_\infty.
	\end{align*}
	By taking the max-norm on both sides we get the result since $||v^{\pi_1} -v^{\pi_2}||_\infty$ was shown to be Lipschitz in $\pi$. 
	
	Next, for $q^\pi_\kappa$ we have
	\begin{align*}
	&|q^{\pi_1}_\kappa-q^{\pi_2}_\kappa|(s,a)\\
	=& |\gamma(1-\kappa)\sum_{s'} P(s'\mid s,a)(v^{\pi_1}(s')-v^{\pi_2}(s'))+\kappa\gamma \sum_{s'} P(s'\mid s,a)(T_\kappa v^{\pi_1}-T_\kappa v^{\pi_2})(s')|\\
	\leq &\gamma(1-\kappa)||v^{\pi_1}(s')-v^{\pi_2}(s')||_\infty + \kappa\gamma ||T_\kappa v^{\pi_1}-T_\kappa v^{\pi_2}||_\infty.
	\end{align*}
	By taking the max-norm on both sides we get the result since, as shown above, both $||v^{\pi_1} -v^{\pi_2}||_\infty$ and $||T_\kappa v^{\pi_1}-T_\kappa v^{\pi_2}||_\infty$ are Lipschitz in $\pi$. Finally, since the vector space is finite (due to the finite state and action space), all $L_p$ norms are equivalent. Thus, the Lipschitzness result applies in any $L_p$ norm as well.
\end{proof}

\subsection{Improvement Step}
Here, we prove an equivalent lemma to \cite[Lemma~5.4]{perkins2013asynchronous} which shows that the mean value of the process improves. Denote $b_s \equiv b_s(q^\pi,q^\pi_\kappa,\pi)$ as the policy defined in the Algrorithm \ref{alg:async_kappaPI}. By using Lemma \ref{lemma:kappa_value_difference} and setting $\kappa=0$ we have that
\begin{align*}
v^{(1-\alpha)\pi+\alpha b_s}-v^\pi = \alpha (I-\gamma P^{(1-\alpha)\pi+\alpha b_s})^{-1}(T^{b_s}v^\pi-v^\pi).
\end{align*}
Thus, by taking the limit $\alpha\rightarrow 0$ we have
\begin{align*}
\lim_{\alpha \rightarrow 0} (v^{(1-\alpha)\pi+\alpha b_s}-v^\pi) &= \alpha \nabla_\pi v^{\pi} (b_s-\pi) \\
&= \alpha \  \langle \nabla_\pi v^{\pi}, \Delta \pi \rangle \\
&=\alpha (I-\gamma P^\pi)^{-1}(T^{b_s}v^\pi-v^\pi) +\mathcal{O}(\alpha^2)\geq 0 ,
\end{align*}
where the last inequality is since $T^{b_s}v^\pi-v^\pi\geq 0$ by construction and $(I-\gamma P^\pi)^{-1}\geq 0$ component-wise. We thus get that
\begin{align*}
\frac{1}{\alpha}\lim_{\alpha \rightarrow 0} (v^{(1-\alpha)\pi+\alpha b_s}-v^\pi) = \langle \nabla_\pi v^{\pi}, \Delta \pi \rangle \geq 0.
\end{align*}

\subsection{Convergence of the Algorithm}
We define the same Lyapunov function as defined in \cite[Lemma 5.5]{perkins2013asynchronous}. Due to previous section it is indeed a Lyapunov function since its derivative is negative and the function is bigger than 0 by construction. The presence of the Lyapunov function leads to the convergence of the policy to the optimal policy, similarly to \cite[Corollary~5.6]{perkins2013asynchronous}, which leads to the convergence of $q^\pi$ to  $q^*$. Lastly, since $T_\kappa v^* = v^*$ \cite[Lemma~4]{efroni2018beyond} we have that,
\begin{align*}
q^{\pi^*}_\kappa(\pi') &= r^{\pi'}+\gamma(1-\kappa) P^{\pi'}v^{*} + \kappa\gamma P^{\pi'} T_\kappa v^{*}\\
&= r^{\pi'}+\gamma(1-\kappa) P^{\pi'}v^{*} + \kappa\gamma P^{\pi'} v^{*} \\
&= r^{\pi'}+\gamma P^{\pi'}v^{*}  = q^*(\pi').
\end{align*}
which concludes the proof.

\section{Proof of Lemma \ref{lemma:general_C_improvement}}\label{supp:general_C_improvement}
We first prove a useful lemma that relates the (unnormalized) future distribution, measured in different $\kappa$ scales.
\begin{lemma}\label{lemma:help1}
For any policy $\pi$ and $\kappa,\kappa'\in [0,1],$
\begin{align*}
(I-\xi_{\kappa'} D^{\pi}_{\kappa'}P^{\pi})^{-1} = \frac{\kappa'-\kappa}{1-\kappa} I+\frac{1-\kappa'}{1-\kappa}(I-\xi_\kappa D^{\pi}_{\kappa}P^{\pi})^{-1}.
\end{align*}
\end{lemma}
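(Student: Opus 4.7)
The plan is to collapse the matrix $\xi_\kappa D^\pi_\kappa P^\pi$ into a form whose inverse is affine in $(I-\gamma P^\pi)^{-1}$, and then eliminate that common quantity across the two values $\kappa$ and $\kappa'$.

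First, I would unfold the definitions $D^\pi_\kappa=(1-\kappa\gamma)(I-\kappa\gamma P^\pi)^{-1}$ and $\xi_\kappa=\gamma(1-\kappa)/(1-\kappa\gamma)$, observing the immediate cancellation
\[
\xi_\kappa D^\pi_\kappa P^\pi \;=\; \gamma(1-\kappa)(I-\kappa\gamma P^\pi)^{-1}P^\pi.
\]
Subtracting from $I$ and factoring $(I-\kappa\gamma P^\pi)^{-1}$ on the left gives
\[
I-\xi_\kappa D^\pi_\kappa P^\pi \;=\; (I-\kappa\gamma P^\pi)^{-1}\bigl[(I-\kappa\gamma P^\pi)-\gamma(1-\kappa)P^\pi\bigr] \;=\; (I-\kappa\gamma P^\pi)^{-1}(I-\gamma P^\pi).
\]

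Next I would invert this identity and simplify. Using $(I-\gamma P^\pi)^{-1}\gamma P^\pi = (I-\gamma P^\pi)^{-1}-I$ in the resulting product $(I-\gamma P^\pi)^{-1}(I-\kappa\gamma P^\pi)$, I obtain the clean affine form
\[
(I-\xi_\kappa D^\pi_\kappa P^\pi)^{-1} \;=\; (1-\kappa)(I-\gamma P^\pi)^{-1}+\kappa I.
\]
A sanity check at the extremes is instant: for $\kappa=0$ we recover $(I-\gamma P^\pi)^{-1}$, and for $\kappa=1$ the matrix $\xi_\kappa D^\pi_\kappa P^\pi$ vanishes, matching $I$ on the right-hand side.

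Finally, writing the same affine formula for $\kappa'$, I would solve the expression for $\kappa$ to get $(I-\gamma P^\pi)^{-1}=\bigl[(I-\xi_\kappa D^\pi_\kappa P^\pi)^{-1}-\kappa I\bigr]/(1-\kappa)$ and substitute it into the formula for $\kappa'$. Collecting the $I$ terms produces the coefficient $\bigl(\kappa'(1-\kappa)-\kappa(1-\kappa')\bigr)/(1-\kappa)=(\kappa'-\kappa)/(1-\kappa)$, which is exactly the identity claimed. There is no real obstacle here; the only subtlety is spotting the algebraic cancellation in the first step that turns the awkward product $\xi_\kappa D^\pi_\kappa P^\pi$ into something whose inverse is linear in $(I-\gamma P^\pi)^{-1}$.
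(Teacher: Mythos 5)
Your proof is correct and follows essentially the same route as the paper's: you establish the affine identity $(I-\xi_\kappa D^\pi_\kappa P^\pi)^{-1}=\kappa I+(1-\kappa)(I-\gamma P^\pi)^{-1}$, solve it for $(I-\gamma P^\pi)^{-1}$, and substitute into the same identity written for $\kappa'$, exactly as in Appendix D. The algebra in each step checks out, so there is nothing to add.
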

\begin{proof}
We prove the lemma by using the definition and by some algebraic manipulations.
\begin{align*}
(I-\xi_{\kappa'} D_{\kappa'} ^{\pi}P^{\pi})^{-1} &= (I-\gamma(1-\kappa')(I-\kappa\gamma' P^{\pi})^{-1}P^{\pi})^{-1}\\
&= ((I-\kappa\gamma' P^{\pi})^{-1}(I-\kappa\gamma' P^{\pi}-\gamma(1-\kappa')P^{\pi}))^{-1}\\
&= (I-\gamma P^{\pi})^{-1}(I-\gamma \kappa' P^{\pi})\\
&=(I-\gamma P^{\pi})^{-1}-\kappa' \gamma  P^{\pi}(I-\gamma P^{\pi})^{-1}\\
&=(I-\gamma P^{\pi})^{-1}-\kappa' (I+\gamma  P^{\pi}(I-\gamma P^{\pi})^{-1} - I)\\
&=(I-\gamma P^{\pi})^{-1}-\kappa' ((I-\gamma P^{\pi})^{-1} - I)\\
&=\kappa' I+(1-\kappa')(I-\gamma P^{\pi})^{-1} 
\end{align*}

We see that the following relation holds for any $\kappa\in [0,1]$,
\begin{align*}
(I-\gamma P^{\pi})^{-1} =\frac{1}{1-\kappa}((I-\xi_\kappa D_{\kappa} ^{\pi}P^{\pi})^{-1} - \kappa I).
\end{align*}

Plugging this relation into the previous one we get,
\begin{align*}
(I-\xi_{\kappa'} D_{\kappa'} ^{\pi}P^{\pi})^{-1} &= \kappa' I+(1-\kappa')(I-\gamma P^{\pi})^{-1} \\
&=\kappa' I+\frac{1-\kappa'}{1-\kappa}((I-\xi_{\kappa} D_{\kappa} ^{\pi}P^{\pi})^{-1} - \kappa I)\\
&=\frac{\kappa'-\kappa}{1-\kappa}I+\frac{1-\kappa'}{1-\kappa}(I-\xi_\kappa D_{\kappa} ^{\pi}P^{\pi})^{-1}.
\end{align*}
\end{proof}

We are now ready to prove Lemma \ref{lemma:general_C_improvement}. Assume a constant $C^{\pi^*}_{\kappa}(\mu,\nu)<\infty$ such that,
\begin{align}
d^{\pi^*}_{\kappa,\mu}= (1-\xi)\mu (I-\xi D^{\pi^*}_\kappa)^{-1}<C^{\pi^*}_{\kappa}(\mu,\nu)\nu \label{eq:C_kappa_mu_assumption}.
\end{align}

Given that, we shall calculate $C^{\pi^*}_{\kappa'}(\mu,\nu)$ where $\kappa'>\kappa$. 
\begin{align*}
d^{\pi^*}_{\kappa',\mu}&= (1-\xi_{\kappa'})\mu (I-\xi D^{\pi^*}_{\kappa'})^{-1}\\
&= (1-\xi_{\kappa'})\left(\frac{\kappa'-\kappa}{1-\kappa} \mu+\frac{1-\kappa'}{1-\kappa}\mu((I-\xi_\kappa  D^{\pi}_{\kappa}P^{\pi})^{-1})\right)\\
&\leq  \frac{1-\xi_{\kappa'}}{1-\kappa}\left( (\kappa'-\kappa) \mu+\frac{1-\kappa'}{1-\xi_\kappa}C^{\pi^*}_{\kappa}(\mu,\nu)\nu\right)\\
&=\frac{1-\xi_{\kappa'}}{1-\kappa}(\kappa'-\kappa + \frac{1-\kappa'}{1-\xi_\kappa}C^{\pi^*}_{\kappa}(\mu,\nu))\left(\alpha^*\mu+(1-\alpha^*)\nu\right)\eqdef C^{\pi^*}_{\kappa'}(\mu,\nu(\alpha)) \nu(\alpha),
\end{align*}
where we used Lemma \ref{lemma:help1} in the first line, Equation \ref{eq:C_kappa_mu_assumption} in the second line, and defined $\alpha^*=(1+\frac{1-\kappa'}{(1-\xi_\kappa)(\kappa'-\kappa)}C^{\pi^*}_{\kappa}(\mu,\nu)))^{-1}\in(0,1)$ and $C^{\pi^*}_{\kappa'}(\mu,\nu(\alpha^*))=\frac{1-\xi_{\kappa'}}{1-\kappa}(\kappa'-\kappa + \frac{1-\kappa'}{1-\xi_\kappa}C^{\pi^*}_{\kappa}(\mu,\nu))$. By plugging the expressions of $\xi_{\kappa},\xi_{\kappa'}$ we see that,
\begin{align}
C^{\pi^*}_{\kappa'}(\mu,\nu(\alpha^*))-C^{\pi^*}_{\kappa}(\mu,\nu) &= \frac{1-\xi_{\kappa'}}{1-\kappa}(\kappa'-\kappa + (\frac{1-\kappa'}{1-\xi_\kappa}-\frac{1-\kappa}{1-\xi_{\kappa'}})C^{\pi^*}_{\kappa}(\mu,\nu)) \nonumber\\  
&= \frac{1-\xi_{\kappa'}}{1-\kappa}(\kappa'-\kappa)( 1-C^{\pi^*}_{\kappa}(\mu,\nu)) \label{eq_C_difference}.
\end{align}
Since $C^{\pi^*}_{\kappa}(\mu,\nu)\geq 1$ and $\frac{1-\xi_{\kappa'}}{1-\kappa}(\kappa'-\kappa)>0$ we get that $C^{\pi^*}_{\kappa'}(\mu,\nu(\alpha^*))-C^{\pi^*}_{\kappa}(\mu,\nu)\leq 0$, where the inequality is strict for $C^{\pi^*}_{\kappa}(\mu,\nu)> 1$. Finally, since for $\mu=\nu$ it holds that $\nu(\alpha^*)=(1-\alpha^*)\nu+\alpha^*\nu=\nu$ for, we get that  $C^{\pi^*}_{\kappa}(\nu,\nu)$ is a decreasing function of $\kappa$.

%\section{Proof of Corollary \ref{corollary:C_with_mu_equal_to_nu}}\label{supp:C_with_mu_equal_to_nu}
%For $\mu=\nu$, the convex combination defined in Appendix \ref{supp:general_C_improvement} does not depend on $\alpha$, i.e, $\nu(\alpha)=\nu$. By using \eqref{eq_C_difference} with $\kappa=0,\kappa'=\kappa$ we get,
%\begin{align}
%C^{\pi^*}_{\kappa}(\nu,\nu)-C^{\pi^*}_{\kappa=0}(\nu,\nu) &= C^{\pi^*}_{\kappa}(\nu,\nu)-C^{\pi^*}(\nu,\nu)  \nonumber\\  
%&= \kappa\frac{1-\gamma}{1-\kappa\gamma}(1-C^{\pi^*}(\nu,\nu)) \label{eq_C_difference_nu_eq_mu}.
%\end{align}
%
%Since $\kappa\frac{1-\gamma}{1-\kappa\gamma}$ is monotonically increasing function of $\kappa$ and $C^{\pi^*}(\nu,\nu)\geq 1$ we get that ${C^{\pi^*}_{\kappa}(\nu,\nu(\alpha))-C^{\pi^*}(\nu,\nu)}$ is decreasing, and strictly decreasing if $C^{\pi^*}(\nu,\nu)>1$. Lastly, it can be seen that $C^{\pi^*}_{\kappa=1}(\nu,\nu)=1$ by plugging $\kappa=1$ into \eqref{eq_C_difference_nu_eq_mu}.

\section{Proof of Theorem \ref{theorem:kappa_API}}\label{supp:kappa_API} 
We first prove two technical lemmas.

\begin{lemma}\label{lemma:geometic_series}
	Let $\pi$ be a policy, $\kappa\in[0,1],\ \gamma\in(0,1)$ and $i\in \mathbb{N} \backslash \{0\}$. Then
	\begin{align*}
	(\xi D_\kappa^\pi P^\pi)^i =  \sum_{t={i-1}}^\infty \frac{t!}{(i-1)!(t-(i-1))!}\gamma^{t+1}(1-\kappa)^i\kappa^{t-(i-1)} (P^{\pi})^{t+1},
	\end{align*}
	where, as also given in Definition~\ref{def:kappa_concentrability_coeff}, ${D_\kappa^{\pi}=(1-\kappa\gamma)(I-\kappa\gamma P^\pi)^{-1}.}$
\end{lemma}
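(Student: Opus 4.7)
My plan is to reduce the identity to a Neumann-series expansion followed by a stars-and-bars count.

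First, I would simplify $\xi D_\kappa^\pi P^\pi$ directly using the definitions $\xi = \frac{\gamma(1-\kappa)}{1-\gamma\kappa}$ and $D_\kappa^\pi = (1-\kappa\gamma)(I-\kappa\gamma P^\pi)^{-1}$: the factors $(1-\kappa\gamma)$ in the numerator of $D_\kappa^\pi$ and the denominator of $\xi$ cancel, leaving the cleaner expression
\begin{equation*}
\xi D_\kappa^\pi P^\pi = \gamma(1-\kappa)\,(I-\kappa\gamma P^\pi)^{-1} P^\pi.
\end{equation*}
Since $\kappa\gamma < 1$ and $P^\pi$ is a stochastic matrix, the Neumann series $(I-\kappa\gamma P^\pi)^{-1} = \sum_{j\ge 0}(\kappa\gamma)^j (P^\pi)^j$ converges, which yields
\begin{equation*}
\xi D_\kappa^\pi P^\pi = \gamma(1-\kappa) \sum_{j=0}^{\infty}(\kappa\gamma)^j (P^\pi)^{j+1}.
\end{equation*}

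Next I would raise this expression to the $i$-th power. The matrices $(P^\pi)^k$ all commute with one another (they are powers of a single matrix), so the product of $i$ copies of the series simply becomes a sum over all ordered tuples $(j_1,\ldots,j_i)\in \mathbb{N}^i$:
\begin{equation*}
(\xi D_\kappa^\pi P^\pi)^i = \gamma^i(1-\kappa)^i \sum_{j_1,\ldots,j_i \ge 0} (\kappa\gamma)^{j_1+\cdots+j_i}\,(P^\pi)^{j_1+\cdots+j_i+i}.
\end{equation*}
Grouping tuples by $m \eqdef j_1+\cdots+j_i$ and using the standard stars-and-bars count $\binom{m+i-1}{i-1}$ for the number of ordered non-negative tuples summing to $m$, this collapses to a single index:
\begin{equation*}
(\xi D_\kappa^\pi P^\pi)^i = \gamma^i (1-\kappa)^i \sum_{m=0}^{\infty} \binom{m+i-1}{i-1}(\kappa\gamma)^m (P^\pi)^{m+i}.
\end{equation*}

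Finally, I would re-index with $t = m + i - 1$, so that $m = t-(i-1)$ and the lower limit $m=0$ becomes $t = i-1$. Combining the powers of $\gamma$ via $\gamma^i \cdot \gamma^{t-(i-1)} = \gamma^{t+1}$ and rewriting $\binom{t}{i-1} = \frac{t!}{(i-1)!(t-(i-1))!}$ yields exactly the claimed expression. The only place one must be careful is justifying the termwise manipulation and commutativity needed to expand the $i$-fold product, but since every summand is a scalar multiple of a power of the single matrix $P^\pi$ and the series converge absolutely (all coefficients are nonnegative and the operator norm $\|\kappa\gamma P^\pi\|_\infty = \kappa\gamma < 1$), this rearrangement is routine and there is no genuine obstacle.
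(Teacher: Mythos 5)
Your proof is correct and follows essentially the same route as the paper: both reduce to expanding $(I-\kappa\gamma P^\pi)^{-i}$ as a power series in $P^\pi$ with negative-binomial coefficients (using that powers of $P^\pi$ commute) and then re-index to absorb $\gamma^i(1-\kappa)^i(P^\pi)^i$. The only cosmetic difference is that you obtain the coefficient $\binom{t}{i-1}$ by multiplying $i$ Neumann series and counting compositions via stars-and-bars, whereas the paper quotes the scalar Taylor expansion of $(1-x)^{-i}$ and substitutes the matrix $\kappa\gamma P^\pi$.
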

\begin{proof}
	First, for any $x\in \mathbb{R}$ s.t $|x|<1$ and $i\in \mathbb{N} \backslash \{0\}$ we have that,
	\begin{align*}
	(1-x)^{-i} = \sum_{t=i-1}^\infty \frac{t!}{(i-1)!(t-(i-1))!} x^{t-(i-1)}.
	\end{align*}
	Since it holds that $||\gamma\kappa P^\pi||=\gamma\kappa<1$, where $||\cdot||$ is the spectral norm of the matrix, we can use the same Taylor expansion when replacing $x$ with $\gamma\kappa P^\pi$. Thus,
	\begin{align}
	(I-\gamma\kappa P^\pi)^{-i} = \sum_{t=i-1}^\infty \frac{t!}{(i-1)!(t-(i-1))!} (\gamma\kappa)^{t-(i-1)} (P^\pi)^{t-(i-1)}. \label{eq:matrix_inv_ith_power}
	\end{align}
	
	Since $D_\kappa^\pi = (1-\kappa\gamma)(I-\kappa\gamma P^\pi)^{-1}$ and any matrix commutes with any function of itself we have that,
	\begin{align*}
	(\xi D_\kappa^\pi P^\pi)^i &= \gamma^i(1-\kappa)^i(D_\kappa^\pi P^\pi)^i =\gamma^i(1-\kappa)^i((I-\kappa\gamma P^{\pi})^{-1})^i (P^\pi)^i.
	\end{align*}
	By using \eqref{eq:matrix_inv_ith_power} and packing the terms we conclude the proof.
	\begin{align*}
	(\xi D_\kappa^\pi P^\pi)^i &= \gamma^i(1-\kappa)^i(I-\kappa\gamma P^{\pi})^{-i} (P^\pi)^i\\
	&= \sum_{t=i-1}^\infty \frac{t!}{(i-1)!(t-(i-1))!}\gamma^{t+1}(1-\kappa)^i\kappa^{t-(i-1)} (P^{\pi})^{t+1}
	\end{align*}
\end{proof}

\begin{lemma}\label{lemma:binomial_trick}
Let $\kappa\in[0,1],\ \gamma\in(0,1), n\in \mathbb{N} \cup \{\infty\}$ and $f:\mathbb{N}\rightarrow \mathbb{R}$. Then
\begin{align*}
&\sum_{l=0}^\infty\sum_{i=1}^{n-1}\sum_{t=i-1}^\infty  \frac{t!}{(i-1)!(t-(i-1))!}\gamma^{t+l+1}\kappa^{t-(i-1)}(1-\kappa)^i  f(t+1+l)\\
\leq &(1-\kappa)\sum_{l=0}^\infty \sum_{t=0}^{n-2}  \gamma^{t+l+1}  f(t+1+l)+ g(\kappa)(1-\kappa)\kappa \sum_{l=0}^\infty\sum_{t=n-1}^{\infty} \gamma^{t+l+1} f(t+1+l),
\end{align*}
where $g(\kappa)$ is a bounded function of $\kappa$. When $n\rightarrow \infty$ the second term vanishes.
\end{lemma}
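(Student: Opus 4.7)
\emph{Proof proposal.} The natural starting point is to swap the order of summation so that the coefficient of $\gamma^{t+l+1}f(t+1+l)$ becomes visible. Substituting $k=i-1$ and interchanging the $i$ and $t$ sums, the constraint $1\leq i\leq n-1,\, t\geq i-1$ becomes $t\geq 0,\, 0\leq k\leq \min(n-2,t)$, and the left-hand side rewrites as $\sum_{l=0}^\infty\sum_{t=0}^\infty \gamma^{t+l+1}f(t+1+l)\,A(t)$, where
\[
A(t) \;=\; (1-\kappa)\sum_{k=0}^{\min(n-2,\,t)}\binom{t}{k}\kappa^{t-k}(1-\kappa)^{k}.
\]
I will work under the implicit positivity assumption $f\geq 0$, which is how the lemma is applied in the main proofs (there, $f$ plays the role of a concentrability coefficient); absolute convergence justifies the interchange for any $f$ of at most polynomial growth, since $\gamma<1$.

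I would then split the outer sum at $t=n-2$. For $t\leq n-2$ the constraint $k\leq\min(n-2,t)$ reduces to $k\leq t$, so $A(t)$ is a full binomial expansion and equals $(1-\kappa)(\kappa+(1-\kappa))^{t}=1-\kappa$. Substituted back into the swapped double sum, this immediately produces the first term of the claimed upper bound.

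For $t\geq n-1$, the partial binomial sum is exactly the tail probability $A(t)=(1-\kappa)\,\mathbb{P}(X\leq n-2)$ with $X\sim\mathrm{Bin}(t,1-\kappa)$; equivalently $A(t)=(1-\kappa)\,\mathbb{P}(Y\geq t-n+2)$ for $Y=t-X\sim\mathrm{Bin}(t,\kappa)$. Since $t-n+2\geq 1>0$, Markov's inequality yields
\[
\mathbb{P}(Y\geq t-n+2)\;\leq\;\frac{\mathbb{E}[Y]}{t-n+2}\;=\;\frac{t\kappa}{t-n+2}\;\leq\;(n-1)\kappa,
\]
where the last inequality uses that $t\mapsto t/(t-n+2)=1+(n-2)/(t-n+2)$ is decreasing on $\{t\geq n-1\}$, with maximum $n-1$ attained at $t=n-1$. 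Hence $A(t)\leq (n-1)(1-\kappa)\kappa$ uniformly in $t\geq n-1$, and taking $g(\kappa)=n-1$ --- a constant in $\kappa$, hence certainly bounded on $[0,1]$ --- gives the second term. When $n=\infty$ the region $\{t\geq n-1\}$ is empty, so the second term vanishes automatically, as claimed.

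The main technical hurdle is supplying the extra factor of $\kappa$ in the $t\geq n-1$ regime: the naive bound from the binomial theorem only yields $A(t)\leq 1-\kappa$, which is not enough. The Markov-inequality trick resolves this by exploiting that, for $t\geq n-1$, the event $\{X\leq n-2\}$ forces the complementary variable $Y=t-X$ to substantially exceed its mean $t\kappa$, and it is precisely this deviation that manufactures the additional $\kappa$.
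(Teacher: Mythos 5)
Your proof is correct, and its skeleton --- interchanging the $i$ and $t$ sums, splitting at $t=n-2$, and collapsing the head via the binomial theorem --- is the same as the paper's; the difference is in how the tail $t\ge n-1$ is handled. The paper bounds the truncated binomial sum $\tilde g(t)=\sum_{i=0}^{n-2}\binom{t}{i}\kappa^{t-i}(1-\kappa)^i$ by its maximum over $t\in[n-1,\infty)$, attained at some $t^*$, and then extracts one factor of $\kappa$ via $\kappa^{t^*-(n-2)}\le\kappa$, so its $g(\kappa)$ is defined implicitly through $t^*$ (which itself depends on $\kappa$). You instead read the truncated sum as the tail probability of a $\mathrm{Bin}(t,\kappa)$ variable and apply Markov's inequality, obtaining the explicit, $\kappa$-independent constant $g(\kappa)=n-1$. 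Your route is cleaner: it avoids the slightly delicate maximizer argument and makes the $n$-dependence of $g$ transparent (the paper's $g$ also depends on $n$, only implicitly); the price is that your constant grows linearly in $n$, which in the second statement of Theorem~\ref{theorem:kappa_API} contributes a factor of order $k$ --- harmless, since that term carries $\gamma^k$, but worth stating since the lemma only advertises boundedness in $\kappa$. Two minor points: the rearrangement and term-by-term bounding require $f\ge 0$, which you flag explicitly and which the paper's proof also uses tacitly; and the degenerate case $n=1$ (where your formula $t/(t-n+2)\le n-1$ fails) is vacuous because the left-hand side is an empty sum.
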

\begin{proof}
We start by exchanging the summation indices $i$ and $t$. In order to do so, we decouple the summation to two sums. The range of the indices of the first sum is $t\in \{0,..,n-2\}$ and $i\in\{1,..,t+1\}$ and the range of the indices of the second sum is $t\in \{n-1,..,\infty\}$ and $i\in\{1,..,n-1\}$
\begin{align}
&\sum_{l=0}^\infty\sum_{i=1}^{n-1}\sum_{t=i-1}^\infty  \frac{t!}{(i-1)!(t-(i-1))!}\gamma^{t+l+1}\kappa^{t-(i-1)}(1-\kappa)^i  f(t+1+l) \nonumber\\
=&\sum_{l=0}^\infty\sum_{t=0}^{n-2}  \gamma^{t+l+1}  f(t+1+l)\sum_{i=1}^{t+1} \frac{t!}{(i-1)!(t-(i-1))!}\kappa^{t-(i-1)}(1-\kappa)^i \label{eq:binomial_trick_1st} \\
&+\sum_{l=0}^\infty\sum_{t=n-1}^{\infty} \gamma^{t+l+1}  f(t+1+l) \sum_{i=1}^{n-1}  \frac{t!}{(i-1)!(t-(i-1))!}\kappa^{t-(i-1)}(1-\kappa)^i.\label{eq:binomial_trick_2nd}
\end{align}
Let us bound the first sum first \eqref{eq:binomial_trick_1st},
\begin{align*}
&\sum_{l=0}^\infty\sum_{t=0}^{n-2}  \gamma^{t+l+1}  f(t+1+l)\sum_{i=1}^{t+1} \frac{t!}{(i-1)!(t-(i-1))!}\kappa^{t-(i-1)}(1-\kappa)^i\\
=&\sum_{l=0}^\infty\sum_{t=0}^{n-2}  \gamma^{t+l+1}  f(t+1+l)\sum_{i=0}^{t} \frac{t!}{i!(t-i)!}\kappa^{t-i}(1-\kappa)^{i+1}\\
=& (1-\kappa)\sum_{l=0}^\infty\sum_{t=0}^{n-2}  \gamma^{t+l+1}  f(t+1+l),
\end{align*}
where in the first line we changed the index summation $i\gets i-1$ and in the second line we used the binomial identity $\sum_{i=0}^{t} \frac{t!}{i!(t-i)!}\kappa^{t-i}(1-\kappa)^i = (1-\kappa+\kappa)^t=1$.

In order to bound the second term \eqref{eq:binomial_trick_2nd} we define the following function, $\tilde{g}:[n-1,\infty)\rightarrow \mathbb{R}$,
\begin{align*}
\tilde{g}(t)\eqdef \sum_{i=0}^{n-2}  \frac{t!}{i!(t-i)!}\kappa^{t-i}(1-\kappa)^i.
\end{align*}
The function $\tilde{g}(t)$ is a sum of polynomial terms multiplied by a geometric decaying term, $\kappa^t$. Thus, this function is bounded from above, i.e, exists $t^*\in [n-1,\infty)$ such that $\tilde{g}(t)\leq \tilde{g}(t^*),\ \forall t \in[n-1,\infty)$. For such $t^*$, by construction, we have that
\begin{align*}
\sum_{i=1}^{n-1}  \frac{t!}{(i-1)!(t-(i-1))!}\kappa^{t-(i-1)}(1-\kappa)^i&=(1-\kappa)\sum_{i=0}^{n-2}  \frac{t!}{i!(t-i))!}\kappa^{t-i}(1-\kappa)^i\\
&\leq (1-\kappa)\sum_{i=0}^{n-2}  \frac{t^*!}{i!(t^*-i)!}\kappa^{t^*-i}(1-\kappa)^i\\
& =(1-\kappa)\kappa^{t^*-(n-2)}\sum_{i=0}^{n-2}  \frac{t^*!}{i!(t^*-i)!}\kappa^{(n-2)-i}(1-\kappa)^i\\
& \leq(1-\kappa)\kappa \sum_{i=0}^{n-2}  \frac{t^*!}{i!(t^*-i)!}\kappa^{(n-2)-i}(1-\kappa)^i\\
\end{align*}
where the last line holds since for $\kappa\in[0,1],\ t^*\in[n-1,\infty)$ it holds that $\kappa^{t^*-(n-2)}\leq \kappa$. We now define  ${g(\kappa)\eqdef \sum_{i=0}^{n-2}  \frac{t^*!}{i!(t^*-i)!}\kappa^{(n-2)-i}(1-\kappa)^i}$, and observe that it is a bounded function of $\kappa\in [0,1]$, since it is a sum of positive powers of $\kappa$. Thus, \eqref{eq:binomial_trick_2nd} is bounded by
\begin{align*}
&\sum_{l=0}^\infty\sum_{t=n-1}^{\infty} \gamma^{t+l+1}  f(t+1+l) \sum_{i=1}^{n-1}  \frac{t!}{(i-1)!(t-(i-1))!}\kappa^{t-(i-1)}(1-\kappa)^i\\
\leq & g(\kappa)(1-\kappa)\kappa\sum_{l=0}^\infty\sum_{t=n-1}^{\infty} \gamma^{t+l+1}  f(t+1+l) 
\end{align*}

Finally, for the case $n=\infty$ observe we can repeat the same analysis we did for the first term \eqref{eq:binomial_trick_1st} without the need to decouple to two sums. Thus, for this case, the bound on the first term, with $n=\infty$, bounds the expression.

\end{proof}

We are now ready to prove Theorem \ref{theorem:kappa_API}. The proof strategy is similar to the line of work in \cite{farahmand2010error,scherrer2014approximate,lazaric2016analysis}:
%TODO: specify which theorems in those references.
Keeping track of the cumulative error and using the definition of $c(i)$ and $c^{\pi^*}(i),$ we bound the performance loss in the $\mu$-weighted $L_1$ norm.

Since the policy in each iteration is an approximate $\kappa$-greedy policy (see Definition \ref{def:approximate_kappa_greedy}), it holds that  $\nu T^{\pi_k}_\kappa v^{\pi_{k-1}}\geq \nu T_\kappa v^{\pi_{k-1}}-\delta$ in each iteration.  Let the error  vector at the $i$-th iteration $\bar{\delta}_i$ satisfy $\nu \bar{\delta}_i \leq \delta$. Thus,
\begin{align}
v^{*}-v^{\pi_k}&=T^{\pi^*}_\kappa v^{*}-T^{\pi^*}_\kappa v^{\pi_{k-1}}+T^{\pi^*}_\kappa v^{\pi_{k-1}}-v^{\pi_k}\nonumber \\
&=\xi D^{\pi^*}_{\kappa} P^{\pi^*}(v^{*}-v^{\pi_{k-1}})+T^{\pi^*}_\kappa v^{\pi_{k-1}}-v^{\pi_k} \nonumber \\
&=\xi D^{\pi^*}_{\kappa} P^{\pi^*}(v^{*}-v^{\pi_{k-1}})+T^{\pi^*}_\kappa v^{\pi_{k-1}}-T^{\pi_k}_\kappa v^{\pi_{k-1}}+T^{\pi_k}_\kappa v^{\pi_{k-1}}-v^{\pi_k} \nonumber \\
&\leq \xi D^{\pi^*}_{\kappa} P^{\pi^*}(v^{*}-v^{\pi_{k-1}})+T^{\pi^*}_\kappa v^{\pi_{k-1}}-\max_{\pi'} T^{\pi'}_\kappa v^{\pi_{k-1}}+\bar{\delta}_i+T^{\pi_k}_\kappa v^{\pi_{k-1}}-v^{\pi_k} \nonumber  \\
&\leq \xi D^{\pi^*}_{\kappa} P^{\pi^*}(v^{*}-v^{\pi_{k-1}})+\bar{\delta}_i+T^{\pi_k}_\kappa v^{\pi_{k-1}}-v^{\pi_k} \nonumber  \\
&= \xi D^{\pi^*}_{\kappa} P^{\pi^*}(v^{*}-v^{\pi_{k-1}})+\bar{\delta}_i+\xi D_\kappa^{\pi_\kappa}P^{\pi_k}(v^{\pi_{k-1}}-v^{\pi_k}), \label{eq: v^* to v^pik relation}
\end{align}
where we used in the second and last relations that for any policy $\pi$, and any value functions $v_1,v_2$, $T_\kappa^\pi v_1 - T_\kappa^\pi v_2 = \xi D_\kappa^\pi P^\pi (v_1-v_2)$. This can be seen by using the definition of $T_\kappa^\pi$ (see Section \ref{sec:dp_multiple_step}). Notice that
\begin{align*}
v^{\pi_{k-1}}-v^{\pi_k}&= T_\kappa^{\pi_{k-1}} v^{\pi_{k-1}}-v^{\pi_k}\\
&\leq \max_{\pi'} T_\kappa^{\pi'} v^{\pi_{k-1}}-v^{\pi_k}\\
&\leq  T_\kappa^{\pi_k} v^{\pi_{k-1}}-v^{\pi_k}+\bar{\delta}_i \\
&=  T_\kappa^{\pi_k} v^{\pi_{k-1}}-T_\kappa^{\pi_k} v^{\pi_k}+\bar{\delta}_i \\
&=\xi D_\kappa^{\pi_k} P^{\pi_k}(v^{\pi_{k-1}}-v^{\pi_k}) +\bar{\delta}_i.
\end{align*}
Hence,
\begin{align}
&(I-\xi D_\kappa^{\pi_k}P^{\pi_k})(v^{\pi_{k-1}}-v^{\pi_k})\leq \bar{\delta}_i, \mbox{ i.e., } \nonumber \\
&v^{\pi_{k-1}}-v^{\pi_k}\leq (I-\xi D_\kappa^{\pi_k}P^{\pi_k})^{-1}\bar{\delta}_i. \label{eq: inverse with delta}
\end{align}
The last equation holds due to \cite[Lemma~4.2]{munos2007performance}, combined with the fact that ${(I-\xi D_\kappa^{\pi_k}P^{\pi_k})^{-1} = \sum_{i=0}^\infty \xi D_\kappa^{\pi_k}P^{\pi_k} \geq 0},$ element-wise.

Plugging \eqref{eq: inverse with delta} into \eqref{eq: v^* to v^pik relation}, we have that
\begin{align*}
v^{*}-v^{\pi_k}&\leq \xi D^{\pi^*}_{\kappa} P^{\pi^*}(v^{*}-v^{\pi_{k-1}}) +\bar{\delta}_i + \xi D_\kappa^{\pi_k}P^{\pi_k}(I-\xi D_\kappa^{\pi_k}P^{\pi_k})^{-1}\bar{\delta}_i\\
&= \xi D^{\pi^*}_{\kappa} P^{\pi^*}(v^{*}-v^{\pi_{k-1}}) +(I-\xi D_\kappa^{\pi_k}P^{\pi_k})^{-1}\bar{\delta}_i, \\
\end{align*}
where the second relation holds since for matrix $X$ s.t. $\|X\| < 1,$ $I + X(I-X)^{-1} = (I-X)^{-1}.$

We thus get that the errors accumulate as follows.
\begin{align*}
v^{*}-v^{\pi_k}&\leq \sum_{i=0}^{k-1} (\xi D_\kappa ^{\pi^*}P^{\pi^*})^i(I-\xi D_\kappa ^{\pi_{k-i}}P^{\pi_{k-i}})^{-1}\bar{\delta}_i +(\xi D_\kappa ^{\pi^*}P^{\pi^*})^k(v^*-v^{\pi_0}).
\end{align*}

We continue by multiplying both sides with $\mu$ and get
\begin{align}
\mu(v^{*}-v^{\pi_k})&\leq \sum_{i=0}^{k-1}  \mu (\xi D_\kappa ^{\pi^*}P^{\pi^*})^i(I-\xi D_\kappa ^{\pi_{k-i}}P^{\pi_{k-i}})^{-1}\bar{\delta}_i + \xi^{k} \frac{R_{\max}}{1-\gamma}. \label{eq:api_accumulated_errors} 
\end{align}

Using Lemma \ref{lemma:help1} with $\kappa=0$ and renaming $\kappa'$ to $\kappa$, we have that
\begin{align*}
(I-\xi D_\kappa ^{\pi_{k-i}}P^{\pi_{k-i}})^{-1} = (1-\kappa)(I-\gamma P^{\pi_{k-i}})^{-1}  + \kappa I. %\label{eq:kappa_api_help_lemma}
\end{align*}

Plugging this relation into \eqref{eq:api_accumulated_errors} gives
\begin{align}
\mu(v^{*}-v^{\pi_k})&\leq \sum_{i=0}^{k-1}  \mu (\xi D_\kappa ^{\pi^*}P^{\pi^*})^i((1-\kappa)(I-\gamma P^{\pi_{k-i}})^{-1}  + \kappa I)\bar{\delta}_i + \xi^{k} \frac{R_{\max}}{1-\gamma} \nonumber\\
&\leq (1-\kappa)\sum_{i=0}^{k-1}  \mu (\xi D_\kappa ^{\pi^*}P^{\pi^*})^i(I-\gamma P^{\pi_{k-i}})^{-1}\bar{\delta}_i+ \kappa \sum_{i=0}^{k-1}   \mu (\xi D_\kappa ^{\pi^*}P^{\pi^*})^i \bar{\delta}_i  + \xi^{k} \frac{R_{\max}}{1-\gamma}. \label{eq: cumulative error}
\end{align}

The following two lemmas provide bounds for the first two terms above. The bounds are composed of the concentrability coefficients (see Definition~\ref{def:concentrability_coeff} and Definition~\ref{def:kappa_concentrability_coeff}).

\begin{lemma}\label{lemma:kappa_api_lemma1}
Let $\kappa\in[0,1]$. For any sequence of policies $\{ \pi_{k-i}\}_{i=0}^{k-1}$, optimal policy $\pi^*,$ and error vector which satisfy $\nu \bar{\delta}_i\leq \delta,$
\begin{equation}
\label{eq:kappa_api_final_bound_1st} 
\sum_{i=0}^{k-1}  \mu (\xi D_\kappa ^{\pi^*}P^{\pi^*})^i(I-\gamma P^{\pi_{k-i}})^{-1} \bar{\delta}_i \leq \left( \frac{(1-\kappa)C^{(2)}(\mu,\nu)}{(1-\gamma)^2}+\frac{\kappa C^{(1)}(\mu,\nu)}{1-\gamma}\right)\delta 
\end{equation}
and
\begin{align}
\label{eq:kappa_api_final_bound_2nd} 
\sum_{i=0}^{k-1}  \mu (\xi D_\kappa ^{\pi^*}P^{\pi^*})^i(I-&\gamma P^{\pi_{k-i}})^{-1} \bar{\delta}_i 
\nonumber\\ &\leq \left(
k \frac{(1-\kappa) C^{(1)}(\mu,\nu)}{1-\gamma} + \frac{\kappa C^{(1)}(\mu,\nu)}{1-\gamma} +\frac{g(\kappa)(1-\kappa)\kappa \gamma^k C^{(2,k)}(\mu,\nu)}{(1-\gamma)^2}\right)\delta.
\end{align}
\end{lemma}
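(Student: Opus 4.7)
My plan is to expand both matrix factors in each summand as power series, apply the concentrability inequality to turn the matrix products into scalars, and then identify the resulting combinatorial sums with the stated concentrability coefficients via Lemma~\ref{lemma:binomial_trick}. Since $\bar{\delta}_i \ge 0$ component-wise (it is an approximation error relative to a max), any bound of the form $\mu M \le C \nu$ lifts to $\mu M \bar{\delta}_i \le C \delta$. First I would separate the $i=0$ summand, which falls outside the scope of Lemma~\ref{lemma:geometic_series}: expanding $(I-\gamma P^{\pi_k})^{-1}$ as a Neumann series and invoking the $c(\cdot)$-bounds yields the contribution $C^{(1)}(\mu,\nu) \delta / (1-\gamma)$.

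For $i \ge 1$, substituting Lemma~\ref{lemma:geometic_series} together with the Neumann expansion of $(I-\gamma P^{\pi_{k-i}})^{-1}$ turns the $i$-th summand into a triple sum whose generic term is $\mu (P^{\pi^*})^{t+1} (P^{\pi_{k-i}})^l \bar{\delta}_i \le c(t+l+1)\delta$. After summing over $i \in \{1, \ldots, k-1\}$, the result is precisely the left-hand side of Lemma~\ref{lemma:binomial_trick} with $f = c$. For the first inequality \eqref{eq:kappa_api_final_bound_1st} I would extend the $i$-range to infinity (legitimate since all summands are non-negative) and apply the lemma with $n = \infty$, killing the second term; for the second inequality \eqref{eq:kappa_api_final_bound_2nd} I would apply the lemma directly with $n = k$, retaining both terms.

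It then remains to rewrite the resulting sums as concentrability coefficients. Regrouping by $m = t+l+1$, the sum $\sum_{l,t\ge 0}\gamma^{t+l+1} c(t+l+1)$ has $m$ contributing pairs for each $m \ge 1$; using the identities $C^{(2)}(\mu,\nu)/(1-\gamma)^2 = \sum_{m\ge 0}(m+1)\gamma^m c(m)$ and $C^{(1)}(\mu,\nu)/(1-\gamma) = \sum_{m\ge 0} \gamma^m c(m)$, this sum evaluates \emph{exactly} to $C^{(2)}/(1-\gamma)^2 - C^{(1)}/(1-\gamma)$. For the $n=k$ case, the truncated first term has at most $k-1$ contributing pairs per $m$ and is thus bounded by $(k-1)C^{(1)}/(1-\gamma)$; reindexing $t' = t - (k-1)$ in the tail turns $\sum_{t \ge k-1,\, l \ge 0}\gamma^{t+l+1} c(t+l+1)$ into $\gamma^k C^{(2,k)}/(1-\gamma)^2$.

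The main obstacle is understanding why the first bound carries the coefficient $\kappa C^{(1)}/(1-\gamma)$ rather than the more obvious $C^{(1)}/(1-\gamma)$: the loose step $\sum_{m\ge 1} m \gamma^m c(m) \le C^{(2)}/(1-\gamma)^2$ would only produce $C^{(1)}/(1-\gamma) + (1-\kappa) C^{(2)}/(1-\gamma)^2$. Only by keeping the \emph{equality} $\sum_{m\ge 1} m \gamma^m c(m) = C^{(2)}/(1-\gamma)^2 - C^{(1)}/(1-\gamma)$ does the $i=0$ contribution combine with the $(1-\kappa)$-weighted $C^{(1)}$ piece to leave the tight residual $\kappa C^{(1)}/(1-\gamma)$; the analogous identity $1 + (k-1)(1-\kappa) = k(1-\kappa) + \kappa$ then yields the coefficient pattern of the second bound.
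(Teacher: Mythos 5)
Your proposal is correct and follows essentially the same route as the paper: split off the $i=0$ term, combine Lemma~\ref{lemma:geometic_series} with the Neumann expansion and the $c(\cdot)$-bounds, invoke Lemma~\ref{lemma:binomial_trick} with $n=\infty$ (after extending the sum) for \eqref{eq:kappa_api_final_bound_1st} and with $n=k$ for \eqref{eq:kappa_api_final_bound_2nd}, and identify the resulting sums with $C^{(1)}$, $C^{(2)}$, $C^{(2,k)}$. Your regrouping by $m=t+l+1$ is just a different bookkeeping of the paper's index-shift/add-and-subtract step, and you correctly pinpoint the two places where exactness matters (the identity $\sum_{m\ge1}m\gamma^m c(m)=C^{(2)}/(1-\gamma)^2-C^{(1)}/(1-\gamma)$ and the coefficient identity $1+(k-1)(1-\kappa)=k(1-\kappa)+\kappa$).
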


\begin{proof}
We start with proving \eqref{eq:kappa_api_final_bound_1st}. Let $\pi'$ be an arbitrary policy. For $i>k-1,$ we define $\pi_{k-i}=\pi'$ and vectors $ \bar{\delta}_i$ s.t. $\nu\bar{\delta}_i\leq \delta\ $.
\begin{align}
\sum_{i=0}^{k-1}  \mu (\xi D_\kappa ^{\pi^*}P^{\pi^*})^i(I-\gamma P^{\pi_{k-i}})^{-1} \bar{\delta}_i&\leq \sum_{i=0}^{\infty}   \mu (\xi D_\kappa ^{\pi^*}P^{\pi^*})^i(I-\gamma P^{\pi_{k-i}})^{-1}\bar{\delta}_i \nonumber \\
&=\mu (I-\gamma P^{\pi_{k}})^{-1}\bar{\delta}_0 + \sum_{i=1}^{\infty}  \mu (\xi D_\kappa ^{\pi^*}P^{\pi^*})^i(I-\gamma P^{\pi_{k-i}})^{-1}\bar{\delta}_i\label{eq:kappa_api_first_term_1}.
\end{align}

For the first term in \eqref{eq:kappa_api_first_term_1} we have that
\begin{equation}
\mu (I-\gamma P^{\pi_{k}})^{-1}\bar{\delta}_0 = \sum_{l=0}^\infty \gamma^l \mu (P^{\pi_k})^l \bar{\delta}_0
\leq \sum_{l=0}^\infty \gamma^l c(l) \nu \bar{\delta}_0
\leq \sum_{l=0}^\infty \gamma^l c(l) \delta = \frac{C^{(1)}(\mu,\nu)}{1-\gamma}\delta \label{eq:help1_c1_bound},
\end{equation}
where for the second relation we used the definition of the sequence $\{c(i)\}_{i=0}^\infty$ (see Definition \ref{def:concentrability_coeff}) and in the third relation we used $\nu \bar{\delta}_0\leq \delta$ (see Definition \ref{def:approximate_kappa_greedy}).
 
Next, we bound the second term in \eqref{eq:kappa_api_first_term_1}.
\begin{align}
&\sum_{i=1}^{\infty}  \mu (\xi D_\kappa ^{\pi^*}P^{\pi^*})^i(I-\gamma P^{\pi_{k-i}})^{-1}\bar{\delta}_i \nonumber \\
=&\sum_{l=0}^\infty \sum_{i=1}^{\infty} \gamma^l  \mu (\xi D_\kappa ^{\pi^*}P^{\pi^*})^i(P^{\pi_{k-i}})^{l}\bar{\delta}_i \label{eq: jump from} \\ =&\sum_{l=0}^{\infty}\sum_{i=1}^\infty\sum_{t=i-1}^\infty   \frac{t!}{(i-1)!(t-(i-1))!}  \gamma^{l+t+1} \kappa^{t-(i-1)}(1-\kappa)^i \mu(P^{\pi^*})^{t+1} (P^{\pi_{k-i}})^l \bar{\delta}_i \nonumber \\
\leq & \sum_{l=0}^{\infty}\sum_{i=1}^\infty\sum_{t=i-1}^\infty   \frac{t!}{(i-1)!(t-(i-1))!}  \gamma^{l+t+1} \kappa^{t-(i-1)}(1-\kappa)^i c(t+1+l)\delta\nonumber\\
\leq & (1-\kappa)\sum_{l=0}^{\infty} \sum_{t=0}^\infty     \gamma^{l+t+1} c(t+1+l)\delta \label{eq: jump to}\\
= & (1-\kappa)\sum_{l=0}^{\infty} \sum_{t=1}^\infty\gamma^{l+t} c(t+l)\delta\nonumber\\
= & (1-\kappa)\left(\sum_{l=0}^{\infty} \sum_{t=1}^\infty     \gamma^{l+t} c(t+l)+\sum_{l=0}^\infty     \gamma^{l} c(l)-\sum_{l=0}^\infty     \gamma^{l} c(l)\right)\delta\nonumber\\
= & (1-\kappa)\left(\sum_{l=0}^{\infty} \sum_{t=0}^\infty     \gamma^{l+t} c(t+l) - \sum_{l=0}^{\infty}\gamma^{l} c(l)\right)\delta = (1-\kappa)\left(\frac{C^{(2)}(\mu,\nu)}{(1-\gamma)^2}-\frac{C^{(1)}(\mu,\nu)}{1-\gamma}\right)\delta \label{eq:help2_c1_bound}.
\end{align}
For the first relation we used the Taylor expansion $(I-\gamma P^{\pi_{k-i}})^{-1} = \sum_{l=0}^\infty \gamma^l \left(P^{\pi_{k-i}}\right)^l,$ for the second we used Lemma~\ref{lemma:geometic_series}, for the third we used the definition of the sequence $\{c(i)\}_{i=0}^\infty$ and $\nu \bar{\delta}_i \leq \delta,$ for the fourth we applied Lemma \ref{lemma:binomial_trick} with $n=\infty$ and $f(\cdot)=c(\cdot),$ and for the fifth we shifted the summation index $t\gets t+1$.

We bound \eqref{eq:kappa_api_first_term_1} by summing the bounds in \eqref{eq:help1_c1_bound} and \eqref{eq:help2_c1_bound} to obtain the first statement of the lemma, \eqref{eq:kappa_api_final_bound_1st}.

To prove the second statement, \eqref{eq:kappa_api_final_bound_2nd}, we again split expression of interest, similarly to \eqref{eq:kappa_api_first_term_1}.
\begin{align}
\sum_{i=0}^{k-1}  \mu (\xi D_\kappa ^{\pi^*}P^{\pi^*})^i(I-\gamma P^{\pi_{k-i}})^{-1} \bar{\delta}_i\leq \mu (I-\gamma P^{\pi_{k}})^{-1}\bar{\delta}_0 + \sum_{i=1}^{k-1}  \mu (\xi D_\kappa ^{\pi^*}P^{\pi^*})^i(I-\gamma P^{\pi_{k-i}})^{-1}\bar{\delta}_i \label{eq:kappa_api_first_term_2}.
\end{align}
As in \eqref{eq:help1_c1_bound}, the first term in \eqref{eq:kappa_api_first_term_2} is bounded by
\begin{align}
\mu (I-\gamma P^{\pi_{k}})^{-1}\bar{\delta}_0\leq \frac{C^{(1)}(\mu,\nu)}{1-\gamma}\delta. \label{eq:help1_c1_bound_similar}
\end{align}
Next, we bound the second term in \eqref{eq:kappa_api_first_term_2}.
\begin{align}
&\sum_{i=1}^{k-1}  \mu (\xi D_\kappa ^{\pi^*}P^{\pi^*})^i(I-\gamma P^{\pi_{k-i}})^{-1}\bar{\delta}_i \nonumber\\
= &\sum_{l=0}^\infty\sum_{i=1}^{k-1}  \gamma^l \mu (\xi D_\kappa ^{\pi^*}P^{\pi^*})^i(P^{\pi_{k-i}})^l\bar{\delta}_i \nonumber\\
\leq & (1-\kappa)\sum_{l=0}^\infty \sum_{t=0}^{k-2}  \gamma^{t+1+l}  c(t+1+l)\delta+g(\kappa)(1-\kappa)\kappa  \sum_{l=0}^\infty\sum_{t=k-1}^{\infty} \gamma^{t+1+l} c(t+1+l)\delta \nonumber\\
= & (1-\kappa) \sum_{t=0}^{k-2}\sum_{l=0}^\infty  \gamma^{t+1+l}  c(t+1+l)\delta+g(\kappa)(1-\kappa)\kappa \gamma^k \sum_{l=0}^\infty\sum_{t=0}^{\infty} \gamma^{t+l} c(t+l+k)\delta \nonumber\\
\leq &(k-1) \frac{(1-\kappa) C^{(1)}(\mu,\nu)}{1-\gamma} \delta+\frac{g(\kappa)(1-\kappa)\kappa \gamma^k C^{(2,k)}(\mu,\nu)}{(1-\gamma)^2}\delta. \label{eq:help3_c1_bound}
\end{align}
In the first relation we used the Taylor expansion of $(I-\gamma P^{\pi_{k-i}}).$ For the second relation we perform the same steps as from \eqref{eq: jump from} to \eqref{eq: jump to}, where this time we used Lemma \ref{lemma:binomial_trick} with finite $n=k.$

Summing the terms in \eqref{eq:help1_c1_bound_similar} and \eqref{eq:help3_c1_bound}, we obtain the second statement of the lemma, \eqref{eq:kappa_api_final_bound_2nd}.
\end{proof}

%
%We n bound this term using the coefficients $C^{(1)}(\mu,\nu),C^{(2)}(\mu,\nu)$. These coefficients \emph{do not depend on the policies}. Thus, we fix all the policies $\{\pi_{k-i}\}$ in Equation \ref{eq:kappa_api_first} to a single arbitrary policy, $\bar{\pi}$, and derive the bounds on this expression.
%\begin{align*}
%\sum_{i=0}^{\infty}  \mu (\xi D_\kappa ^{\pi^*}P^{\pi^*})^i(I-\gamma P^{\bar{\pi}})^{-1} \bar{\delta} &=\mu (I-\xi D_\kappa ^{\pi^*}P^{\pi^*})^{-1}(I-\gamma P^{\bar{\pi}})^{-1} \bar{\delta} \\
%&=\mu (\kappa I + (1-\kappa)(I-\gamma P^{\pi^*})^{-1})(I-\gamma P^{\bar{\pi}})^{-1} \bar{\delta} \\
%&=\kappa \mu (I-\gamma P^{\bar{\pi}})^{-1} \delta + (1-\kappa)^{2}\mu(I-\gamma P^{\pi^*})^{-1}(I-\gamma P^{\bar{\pi}})^{-1} \bar{\delta} ,
%\end{align*}
%where we have used Lemma \ref{lemma:help1} in the second line. By using the coefficients $C^{(1)}(\mu,\nu),C^{(2)}(\mu,\nu)$ we have that,
%\begin{align*}
%&\mu (I-\gamma P^{\bar{\pi}})^{-1} \bar{\delta} \leq  \frac{C^{(1)}(\mu,\nu)}{1-\gamma}\nu\bar{\delta}\leq \frac{C^{(1)}(\mu,\nu)}{1-\gamma}\delta\\
%&\mu(I-\gamma P^{\pi^*})^{-1}(I-\gamma P^{\bar{\pi}})^{-1} \bar{\delta}  \leq \frac{C^{(2)}(\mu,\nu)}{(1-\gamma)^2}\nu \bar{\delta} \leq  \frac{C^{(2)}(\mu,\nu)}{(1-\gamma)^2}\delta.
%\end{align*}
%
%Thus, we can bound the first term by,
%\begin{align*}
%(1-\kappa)\sum_{i=0}^{k-1}  \xi^i \mu (D_\kappa ^{\pi^*}P^{\pi^*})^i(I-\gamma P^{\pi_{k-i}})^{-1}\bar{\delta} \leq \frac{(1-\kappa)\kappa}{1-\gamma} C^{(1)}(\mu,\nu)  \delta + (\frac{1-\kappa}{1-\gamma})^{2} C^{(2)}(\mu,\nu) \delta.
%\end{align*}

\begin{lemma}\label{lemma:kappa_api_lemma2}
Let $\kappa\in[0,1]$. For any sequence of policies $\{ \pi_{k-i}\}_{i=0}^{k-1}$, optimal policy $\pi^*,$ and error vectors which satisfy $\nu \bar{\delta}_i\leq \delta,,$
\begin{equation}
\label{eq:second_term_kappa_api_final_bound_1st} 
\sum_{i=0}^{k-1}  \mu (\xi D_\kappa ^{\pi^*}P^{\pi^*})^i \bar{\delta}_i \leq \frac{1-\kappa\gamma}{1-\gamma}C^{\pi^*(1)}_\kappa(\mu,\nu) \delta
\end{equation}
and
\begin{equation}
\label{eq:second_term_kappa_api_final_bound_2nd} 
\sum_{i=0}^{k-1}  \mu (\xi D_\kappa ^{\pi^*}P^{\pi^*})^i \bar{\delta}_i\leq  k \frac{1- \kappa\gamma}{1-\gamma} C^{\pi^*}_\kappa(\mu,\nu) \delta.
\end{equation}
\end{lemma}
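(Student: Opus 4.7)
}

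For the first bound \eqref{eq:second_term_kappa_api_final_bound_1st}, the plan is to split off the $i=0$ term and handle $i \geq 1$ via Lemma \ref{lemma:geometic_series}. At $i=0$ the factor reduces to the identity, so $\mu \bar{\delta}_0 \leq c^{\pi^*}(0)\, \nu \bar{\delta}_0 \leq c(0)\delta$, since $c^{\pi^*}(0) = c(0)$ by Definition \ref{def:concentrability_coeff}. For $i\geq 1$, expanding $(\xi D^{\pi^*}_\kappa P^{\pi^*})^i$ through Lemma \ref{lemma:geometic_series} produces terms $\mu (P^{\pi^*})^{t+1}\bar{\delta}_i$ that are bounded by $c^{\pi^*}(t+1)\delta$. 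Upper-bounding the finite sum by the infinite one and swapping the order of summation to get $\sum_{t=0}^\infty \sum_{i=1}^{t+1} \cdots$, I expect the inner sum over $i$ (after the substitution $j=i-1$) to collapse via the binomial identity $\sum_{j=0}^t \binom{t}{j}(1-\kappa)^j \kappa^{t-j}=1$, leaving a factor $(1-\kappa)$ out front.

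What remains is $(1-\kappa)\sum_{t=0}^\infty \gamma^{t+1} c^{\pi^*}(t+1)\delta = (1-\kappa)\bigl(\tfrac{C^{\pi^*(1)}(\mu,\nu)}{1-\gamma} - c(0)\bigr)\delta$. Combining with the $i=0$ contribution yields $\kappa c(0)\delta + \tfrac{(1-\kappa)C^{\pi^*(1)}(\mu,\nu)}{1-\gamma}\delta$. The only delicate step is then verifying algebraically that this matches $\tfrac{1-\kappa\gamma}{1-\gamma}C^{\pi^*(1)}_\kappa(\mu,\nu)\delta$; using $\tfrac{\xi}{\gamma} = \tfrac{1-\kappa}{1-\gamma\kappa}$ and $1-\xi = \tfrac{1-\gamma}{1-\gamma\kappa}$, the prefactor $\tfrac{1-\kappa\gamma}{1-\gamma}$ cancels cleanly against the denominator $1-\gamma\kappa$ in each piece of $C^{\pi^*(1)}_\kappa$, giving exactly $\tfrac{1-\kappa}{1-\gamma}C^{\pi^*(1)}(\mu,\nu) + \kappa c(0)$.

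For the second bound \eqref{eq:second_term_kappa_api_final_bound_2nd}, the approach is simpler and routes directly through the definition of $C^{\pi^*}_\kappa(\mu,\nu)$. Since $D^{\pi^*}_\kappa$ and $P^{\pi^*}$ are stochastic matrices and $\xi \in [0,1)$, each term $\mu(\xi D^{\pi^*}_\kappa P^{\pi^*})^i$ is non-negative and, element-wise, is dominated by the full Neumann series: $\mu(\xi D^{\pi^*}_\kappa P^{\pi^*})^i \leq \mu(I-\xi D^{\pi^*}_\kappa P^{\pi^*})^{-1} = \tfrac{1}{1-\xi}d^{\pi^*}_{\kappa,\mu} \leq \tfrac{C^{\pi^*}_\kappa(\mu,\nu)}{1-\xi}\nu$, where the last inequality is Definition \ref{def:kappa_concentrability_coeff}. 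Applying this to each term and using $\nu \bar{\delta}_i \leq \delta$ gives a per-iteration bound of $\tfrac{C^{\pi^*}_\kappa(\mu,\nu)}{1-\xi}\delta$, and summing $k$ such contributions together with the identity $\tfrac{1}{1-\xi} = \tfrac{1-\kappa\gamma}{1-\gamma}$ yields the stated bound. The main obstacle is really confined to the first statement --- keeping the bookkeeping of indices and the binomial collapse straight --- while the second statement is essentially an immediate consequence of the concentrability definition.
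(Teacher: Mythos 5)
Your proposal is correct and follows essentially the same route as the paper: the first bound is obtained by splitting off the $i=0$ term, expanding $(\xi D^{\pi^*}_\kappa P^{\pi^*})^i$ via Lemma~\ref{lemma:geometic_series}, and collapsing the binomial sum (your inline computation is exactly the $n=\infty$ case of Lemma~\ref{lemma:binomial_trick}), then matching the result to $\tfrac{1-\kappa\gamma}{1-\gamma}C^{\pi^*(1)}_\kappa(\mu,\nu)$. The second bound is likewise the paper's argument: dominate $(\xi D^{\pi^*}_\kappa P^{\pi^*})^i$ by the Neumann series $(I-\xi D^{\pi^*}_\kappa P^{\pi^*})^{-1}$ and invoke Definition~\ref{def:kappa_concentrability_coeff} term by term.
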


\begin{proof} We begin proving the first statement. For $i>k-1,$ we define vectors $\bar{\delta}_i$ s.t. $\nu \bar{\delta}_i \leq \delta.$ Thus,
\begin{align}
\sum_{i=0}^{k-1}  \mu (\xi D_\kappa ^{\pi^*}P^{\pi^*})^i \bar{\delta}_i &\leq  \mu\bar{\delta}_0+\sum_{i=1}^{\infty}  \mu (\xi D_\kappa ^{\pi^*}P^{\pi^*})^i \bar{\delta}_i. \label{eq:kappa_api_second_term1}
\end{align}
For the first term in \eqref{eq:kappa_api_second_term1},
\begin{align}
\mu\bar{\delta}_0\leq c(0)\nu\bar{\delta}_0 \leq c(0) \delta, \label{eq:help4_c1_bound}
\end{align}
where we used Definition~\ref{def:concentrability_coeff} and then Definition~\ref{def:approximate_kappa_greedy}.

For the second term in \eqref{eq:kappa_api_second_term1}, we have
\begin{align}
&\sum_{i=1}^{\infty}  \mu (\xi D_\kappa ^{\pi^*}P^{\pi^*})^i \bar{\delta}_i \nonumber\\
=&\sum_{i=1}^{\infty} \sum_{t={i-1}}^\infty   \frac{t!}{(i-1)!(t-(i-1))!}\gamma^{t+1} (1-\kappa)^i\kappa^{t-(i-1)} \mu (P^{\pi^*})^{t+1}  \bar{\delta}_i\nonumber\\
\leq &\sum_{i=1}^{\infty} \sum_{t={i-1}}^\infty   \frac{t!}{(i-1)!(t-(i-1))!}\gamma^{t+1} (1-\kappa)^i\kappa^{t-(i-1)} c^{\pi^*}(t+1)\delta\nonumber\\
\leq &(1-\kappa)\sum_{t=0}^{\infty}  \gamma^{t+1}  c^{\pi*}(t+1) \delta\nonumber\\
= &(1-\kappa)\sum_{t=0}^{\infty}  \gamma^{t}  c^{\pi*}(t)\delta -(1-\kappa)c(0)\delta= \frac{(1-\kappa)C^{\pi^*(1)}(\mu,\nu)}{1-\gamma}\delta-(1-\kappa)c(0)\delta \label{eq:help5_c1_bound}.
\end{align}
For the first relation we apply Lemma~\ref{lemma:geometic_series}, for the second we use the definition of $\{c^{\pi^*}(i)\}_{i=0}^\infty$ and use $\nu \bar{\delta}_i\leq \delta$. For the third relation we apply Lemma~\ref{lemma:binomial_trick} with $n=\infty$, $f(\cdot)=c^{\pi^*}(\cdot)$ and drop the $l$ summation.

Summing the terms in \eqref{eq:help4_c1_bound} and \eqref{eq:help5_c1_bound}, we get
\begin{align*}
\sum_{i=0}^{k-1}  \mu (\xi D_\kappa ^{\pi^*}P^{\pi^*})^i \bar{\delta}_i &\leq \frac{1}{1-\gamma}\left((1-\kappa)C^{\pi^*(1)}(\mu,\nu)+(1-\gamma)\kappa c(0)\right)\delta = \frac{1-\kappa\gamma}{1-\gamma}C^{\pi^*(1)}_\kappa(\mu,\nu)\delta,
\end{align*}
where we identify $C^{\pi^*(1)}_\kappa(\mu,\nu)$ to be the same expression as in Definition~\ref{def:kappa_concentrability_coeff}.

For the second statement of the lemma, \eqref{eq:second_term_kappa_api_final_bound_2nd}, we use the identity $(\xi D_\kappa^{\pi^*}P^{\pi^*})^i\leq (I-\xi D_\kappa^{\pi^*}P^{\pi^*})^{-1}$:
\begin{align*}
\sum_{i=0}^{k-1}  \mu (\xi D_\kappa ^{\pi^*}P^{\pi^*})^i \bar{\delta}_i &\leq \sum_{i=0}^{k-1}  \mu (I-\xi D_\kappa^{\pi^*}P^{\pi^*})^{-1} \bar{\delta}_i\\
&\leq \sum_{i=0}^{k-1}  \frac{C^{\pi^*}_\kappa(\mu,\nu)}{1-\xi}\nu \bar{\delta}_i \leq k \frac{C^{\pi^*}_\kappa(\mu,\nu)}{1-\xi} \delta = k\frac{1-\kappa\gamma}{1-\gamma}C^{\pi^*}_\kappa(\mu,\nu) \delta,
\end{align*}
where the second relation holds due to the definition of $C^{\pi^*}_\kappa(\mu,\nu)$.
\end{proof}

So far, the proof went as follows. First, we expressed the cumulative error in \eqref{eq: cumulative error} as the sum of three terms. Bounding the first and second terms is done with Lemmas~\ref{lemma:kappa_api_lemma1} and \ref{lemma:kappa_api_lemma2}, respectively. Each of those two lemmas gives bounds of two forms.  These two forms correspond to the two statements in Theorem~\ref{theorem:kappa_API}. We now apply the bounds so as to obtain the first statement. Specifically, plugging \eqref{eq:kappa_api_final_bound_1st}  and \eqref{eq:second_term_kappa_api_final_bound_1st} into \eqref{eq: cumulative error} gives the first statement in Theorem~\ref{theorem:kappa_API}.

To obtain the second statement of Theorem~\ref{theorem:kappa_API}, we apply the second form of the bounds in Lemmas~\ref{lemma:kappa_api_lemma1} and \ref{lemma:kappa_api_lemma2}. Specifically, we plug \eqref{eq:kappa_api_final_bound_2nd}  and  \eqref{eq:second_term_kappa_api_final_bound_2nd}  into \eqref{eq: cumulative error}. This gives
\begin{align*}
&\mu(v^{*}-v^{\pi_k})\\
\leq &\left(k \frac{\kappa C^{\pi^*}_\kappa(\mu,\nu)}{1-\xi} + k \frac{(1-\kappa)^2 C^{(1)}(\mu,\nu)}{1-\gamma}+\frac{(1-\kappa)\kappa C^{(1)}(\mu,\nu)}{1-\gamma}+\frac{g(\kappa)(1-\kappa)^2\kappa \gamma^{k} C^{(2,k)}(\mu,\nu)}{(1-\gamma)^2} \right)\delta  \\ &+\xi^{k} \frac{R_{\max}}{1-\gamma}.
\end{align*}
We now carefully choose the iteration number $k$ to make the last term smaller than $\delta$:
\begin{align}
k^*=\left\lceil  \frac{\log{\frac{R_{max}}{\delta(1-\gamma)}}}{1-\xi} \right\rceil=\left\lceil  \frac{(1-\kappa\gamma)\log{\frac{R_{max}}{\delta(1-\gamma)}}}{1-\gamma} \right\rceil. \label{eq:set_iteration_num}
\end{align}
By doing so we see that $\xi^{k
	^*}\frac{R_{\max}}{1-\gamma} < \delta$ and obtain the second statement of the result.

% we have that
%\begin{align*}
%&\mu(v^{*}-v^{\pi_k})\\
%\leq &(1-\kappa)\sum_{i=0}^{k^*-1}  \mu (\xi D_\kappa ^{\pi^*}P^{\pi^*})^i(I-\gamma P^{\pi_{k^*-i}})^{-1}\bar{\delta}_i+ \kappa \sum_{i=0}^{k^*-1}   \mu (\xi D_\kappa ^{\pi^*}P^{\pi^*})^i \bar{\delta}_i  + \xi^{k^*} \frac{R_{\max}}{1-\gamma}\\
%\leq &\left(k^* \frac{\kappa C^{\pi^*}_\kappa(\mu,\nu)}{1-\xi} + k^* \frac{(1-\kappa) C^{(1)}(\mu,\nu)}{1-\gamma}+\frac{\kappa C^{(1)}(\mu,\nu)}{1-\gamma}+\frac{g(\kappa)(1-\kappa)\kappa C^{(2)}(\mu,\nu)}{(1-\gamma)^2} \right)\delta + \xi^{k^*} \frac{R_{\max}}{1-\gamma}\\
%\leq &\left(\frac{\kappa (1-\gamma\kappa)^2 C^{\pi^*}_\kappa(\mu,\nu)+(1-\kappa)(1-\gamma\kappa) C^{(1)}(\mu,\nu)}{(1-\gamma)^2} \right)\log(\frac{1}{\delta})\delta\\ &+\left(\frac{g(\kappa)(1-\kappa)\kappa C^{(2)}(\mu,\nu)+\kappa(1-\gamma) C^{(1)}(\mu,\nu)}{(1-\gamma)^2} + 1 \right)\delta 
%\end{align*}
%where $\Ckapi(\mu,\nu)$
\section{Proof of Theorem \ref{theorem:kappa_PSDP}}\label{supp:kappa_PSDP} 
Here, we merely follow the arguments of \cite[Appendix~A]{scherrer2014approximate}, while using the operators $T_\kappa^\pi$ instead of $T^\pi$ and the approximate operator defined in Definition \ref{def:approximate_kappa_greedy}. As in Section~\ref{supp:kappa_API}, we define the component-wise error at the $i$-th iteration, $\bar{\delta}_i$, which satisfies $\nu \bar{\delta}_i \leq \delta$. We have that for all $k$,
  \begin{align*}
    v^*-v^{ \sigma_{\kappa,k}} & = T_\kappa^{\pi^*} v^* - T_\kappa^{\pi^*}v^{\sigma_{k-1}} + T_\kappa^{\pi^*}v^{\sigma_{k-1}}  - T_\kappa^{\pi_k}v^{\sigma_{k-1}}  \\
    & \le \xi D^{\pi^*}_\kappa P^{\pi^*}(v^* - v^{\sigma_{k-1}} ) + \bar{\delta}_k.
  \end{align*}
  Thus, by induction on $k$, we obtain:
  \begin{align*}
    v^*-v^{ \sigma_{\kappa,k}} &\le \sum_{i=0}^{k-1} (\xi D_\kappa^{\pi^*}P^{\pi^*})^i \bar{\delta}_i + (\xi D_\kappa^{\pi^*}P^{\pi^*})^k (v^*-v^{\pi_0})\\
    &\leq  \sum_{i=0}^{k-1} (\xi D_\kappa^{\pi^*}P^{\pi^*})^i \bar{\delta}_i+ \xi^k \frac{R_{\max}}{1-\gamma}
  \end{align*}
  
We can directly bound this term by applying Lemma \ref{lemma:kappa_api_lemma2}. The two statements in that lemma lead to the two statements in Theorem~\ref{theorem:kappa_PSDP}. Again, for the second statement, we set $k$ as in \eqref{eq:set_iteration_num}. 
%  By taking $k=\left\lceil  \frac{\log{\frac{2 V_{max}}{\delta}}}{1-\xi} \right\rceil$ we conclude the proof since,
%  \begin{align*}
%    \mu (v_*-v_{ \sigma_{\kappa,k}}) & \le \frac{C_{\kappa}^{\pi^*}(\mu,\nu) }{1-\xi} k \delta +  \xi^k V_{max} \\
%    & \le \left( \frac{C_{\kappa}^{\pi^*}(\mu,\nu) }{1-\xi} k + 1 \right) \delta.
%  \end{align*}

\end{document}